\theoremstyle{definition}
\newtheorem{theorem}{Theorem}[section]
\newtheorem{corollary}[theorem]{Corollary}
\newtheorem{lemma}[theorem]{Lemma}
\newtheorem{proposition}[theorem]{Proposition}
\newtheorem{claim}[theorem]{Claim}
\newtheorem{assumption}[theorem]{Assumption}
\theoremstyle{definition}
\newcommand{\N}{\mathbb{N}}
\newcommand{\R}{\mathbb{R}}
\newcommand{\E}{\mathbb{E}}
\let\P\BP
\let\hat\widehat
\newcommand{\f}{\frac}
\newcommand{\eps}{\varepsilon}
\renewcommand{\r}{\right}
\renewcommand{\l}{\left}
\newcommand{\pnorm}[2]{\left\|#1\right\|_{#2}}
\newcommand{\ip}[2]{\left\langle #1, #2 \right\rangle}
\newcommand{\ind}{{\mathbbm{1}}}
\newcommand{\summ}[2]{\sum_{#1 = 1}^{#2}}
\newcommand{\summm}[3]{\sum_{#1 = #2}^{#3}}
\newcommand{\proddd}[3]{\prod_{#1 = #2}^{#3}}
\newcommand{\supp}{\operatorname{supp}}
\newcommand{\bigOmega}[1]{\Omega\l(#1\r)}
\newcommand{\calD}{\mathcal{D}}
\newcommand{\calE}{\mathcal{E}}
\newcommand{\calF}{\mathcal{F}}
\newcommand{\calM}{\mathcal{M}}
\newcommand{\calN}{\mathcal{N}}
\newcommand{\calS}{\mathcal{S}}
\newcommand{\calV}{\mathcal{V}}
\newcommand{\calW}{\mathcal{W}}
\DeclareMathOperator{\tr}{tr}
\DeclareMathOperator{\Unif}{Unif}
\DeclareMathOperator{\poly}{poly}
\renewcommand*{\eqref}[1]{%
  \hyperref[{#1}]{\textup{\tagform@{\ref*{#1}}}}%
}
\title{Algorithm-Dependent Generalization Bounds for Overparameterized Deep Residual Networks}
\author
{
    Spencer Frei\thanks{Department of Statistics, University of California, Los Angeles, CA 90095, USA; e-mail: {\tt spencerfrei@ucla.edu}}
    ~~~and~~~
	Yuan Cao\thanks{Department of Computer Science, University of California, Los Angeles, CA 90095, USA; e-mail: {\tt yuancao@cs.ucla.edu}} 
	~~~and~~~
	Quanquan Gu\thanks{Department of Computer Science, University of California, Los Angeles, CA 90095, USA; e-mail: {\tt qgu@cs.ucla.edu}}
}
\date{}
\begin{document}
\maketitle
\begin{abstract}

The skip-connections used in residual networks have become a standard architecture choice in deep learning due to the increased training stability and generalization performance with this architecture, although there has been limited theoretical understanding for this improvement. In this work, we analyze overparameterized deep residual networks trained by gradient descent following random initialization, and demonstrate that (i) the class of networks learned by gradient descent constitutes a small subset of the entire neural network function class, and (ii) this subclass of networks is sufficiently large to guarantee small training error.  By showing (i) we are able to demonstrate that deep residual networks trained with gradient descent have a small generalization gap between training and test error, and together with (ii) this guarantees that the test error will be small. Our optimization and generalization guarantees require overparameterization that is only logarithmic in the depth of the network, while all known generalization bounds for deep non-residual networks have overparameterization requirements that are at least polynomial in the depth.  This provides an explanation for why residual networks are preferable to non-residual ones.
\end{abstract}
\section{Introduction}
Deep learning has seen an incredible amount of success in a variety of settings over the past eight years, from image recognition~\citep{krizhevsky2012} to audio recognition~\citep{sainath2015} and more.  Compared with its rapid and widespread adoption, the theoretical understanding of why deep learning works so well has lagged significantly.  This is particularly the case in the common setup of an overparameterized network, where the number of parameters in the network greatly exceeds the number of training examples and input dimension.  In this setting, networks have the capacity to perfectly fit training data, regardless of if it is labeled with real labels or random ones~\citep{zhang2017}.  However, when trained on real data, these networks also have the capacity to truly learn patterns in the data, as evidenced by the impressive performance of overparameterized networks on a variety of benchmark datasets. This suggests the presence of certain mechanisms underlying the data, neural network architectures, and training algorithms which enable the generalization performance of neural networks.  
A theoretical analysis that seeks to explain why neural networks work so well would therefore benefit from careful attention to the specific properties that neural networks have when trained under common optimization techniques.

Many recent attempts at uncovering the generalization ability of deep learning focused on general properties of neural network function classes with fixed weights and training losses. For instance, \citet{bartlett2017} proved spectrally normalized margin bound for deep fully connected networks in terms of the spectral norms of the weights at each layer. \citet{neyshabur2018-snmb} proved a similar bound using PAC-Bayesian approach.  \citet{arora2018-compression} developed a compression-based framework for generalization of deep fully connected and convolutional networks, and also provided an explicit comparison of recent generalization bounds in the literature.  
All these studies involved algorithm-independent analyses of the neural network generalization, with resultant generalization bounds that involve quantities that make the bound looser with increased overparameterization.   

An important recent development in the practical deployment of neural networks has been the introduction of skip connections between layers, leading to a class of architectures known as residual networks.  Residual networks were first introduced by~\citet{he2015-resnet} to much fanfare, quickly becoming a standard architecture choice for state-of-the-art neural network classifiers.  The motivation for residual networks came from the poor behavior of very deep traditional fully connected networks: although deeper fully connected networks can clearly express any function that a shallower one can, in practice (i.e. using gradient descent) it can be difficult to choose hyperparameters that result in small training error.   Deep residual networks, on the other hand, are remarkably stable in practice, in the sense that they avoid getting stuck at initialization or having unpredictable oscillations in training and validation error, two common occurrences when training deep non-residual networks.  Moreover, deep residual networks have been shown to generalize with better performance and far fewer parameters than non-residual networks~\citep{tang2018,choi2019,squeezenet}.  We note that much of the recent neural network generalization literature has focused on non-residual architectures~\citep{bartlett2017, neyshabur2018-snmb, arora2018-compression, golowich2018, cao2019} with bounds for the generalization gap that grow exponentially as the depth of the network increases.  \citet{li2019} recently studied a class of residual networks and proved algorithm-independent bounds for the generalization gap that become larger as the depth of the network increases, with a dependence on the depth that is somewhere between sublinear and exponential (a precise characterization requires further assumptions and/or analysis).  We note that verifying the non-vacuousness of algorithm-independent generalization bounds relies on empirical arguments about what values the quantities that appear in the bounds generally take in practical networks (i.e. norms of weight matrices and interlayer activations), while algorithm-dependent generalization bounds such as the ones we provide in this paper can be understood without relying on experiments.

\subsection{Our Contributions}

In this work, we consider fully connected deep ReLU residual networks and study optimization and generalization properties of such networks that are trained with discrete time gradient descent following Gaussian initialization.  

 We consider binary classification under the cross-entropy loss and focus on data that come from distributions $\calD$ for which there exists a function $f$ for which $y\cdot f(x) \geq \gamma > 0$ for all $(x,y)\in \supp \calD$ from a large function class $\calF$ (see Assumption \ref{assumption:separability}).  By analyzing the trajectory of the parameters of the network during gradient descent, for any error threshold $\eps>0$, we are able to show:
\begin{enumerate}
    \item Under the cross-entropy loss, we can study an analogous surrogate error and bound the true classification error by the true surrogate error.  This method was introduced by~\citet{cao2019}.
    \item If $m^* = \tilde O(\poly(\gamma^{-1}))\cdot \max(d, \eps^{-2})$, then provided every layer of the network has at least $m\geq m^*$ units, gradient descent with small enough step size finds a point with empirical surrogate error at most $\eps$ in at most $\tilde O(\poly(\gamma^{-1})\cdot \eps^{-1})$ steps with high probability.  Here, $\tilde O(\cdot)$ hides logarithmic factors that may depend on the depth $L$ of the network, the margin $\gamma$, number of samples $n$, error threshold $\eps$, and probability level $\delta$.
    \item Provided $m^* = \tilde O(\poly(\gamma^{-1}, \eps^{-1}))$ and $n = \tilde O(\poly(\gamma^{-1}, \eps^{-1}))$, the difference between the empirical surrogate error and the true surrogate error is at most $\eps$ with high probability, and therefore the above provide a bound on the true classification error of the learned network.
\end{enumerate}
We emphasize that our guarantees above come with at most logarithmic dependence on the depth of the network.  Our methods are adapted from those used in the fully connected architecture by \citet{cao2019} to the residual network architecture.   The main proof idea is that overparameterization forces gradient descent-trained networks to stay in a small neighborhood of initialization where the learned networks (i) are guaranteed to find small surrogate training error, and (ii) come from a sufficiently small hypothesis class to guarantee a small generalization gap between the training and test errors.  By showing that these competing phenomena occur simultaneously, we are able to derive the test error guarantees of Corollary \ref{corollary:classification.error}.  The key insight of our analysis is that the Lipschitz constant of the network output for deep residual networks as well as the semismoothness property (Lemma \ref{lemma:semismoothness}) have at most logarithmic dependence on the depth, while the known analogues for non-residual architectures all have polynomial dependence on the depth.

\subsection{Additional Related Work}
In the last year there has been a variety of works developing algorithm-dependent guarantees for neural network optimization and generalization~\citep{liliang2018,allenzhu2018,zoucao2018,du2018-1layer,aroradu2019-finegrained,cao2019,zou2019improved,cao2019wide}. \citet{liliang2018} were among the first to theoretically analyze the properties of overparameterized fully connected neural networks trained with Gaussian random initialization, focusing on a two layer (one hidden layer) model under a data separability assumption.  Their work provided two significant insights into the training process of overparameterized ReLU neural networks: (1) the weights stay close to their initial values throughout the optimization trajectory, and (2) the ReLU activation patterns for a given example do not change much throughout the optimization trajectory.  These insights were the backbone of the authors' strong generalization result for stochastic gradient descent (SGD) in the two layer case.  The insights of~\citet{liliang2018} provided a basis to various subsequent studies.~\citet{du2018-1layer} analyzed a two layer model using a method based on the Gram matrix using inspiration from kernel methods, showing that gradient descent following Gaussian initialization finds zero training loss solutions at a linear rate.  \citet{zoucao2018} and \citet{allenzhu2018} extended the results of Li and Liang to the arbitrary $L$ hidden layer fully connected case, again considering (stochastic) gradient descent trained from random initialization.  Both authors showed that, provided the networks were sufficiently wide, arbitrarily deep networks would converge to a zero training loss solution at a linear rate, using an assumption about separability of the data. Recently, \citet{zou2019improved} provided an improved analysis of the global convergence of gradient descent and SGD for training deep neural networks, which enjoys a milder over-parameterization condition and better iteration complexity than previous work. Under the same data separability assumption,~\citet{zhang2019} showed that deep residual networks can achieve zero training loss for the squared loss at a linear rate with overparameterization essentially independent of the depth of the network. We note that~\citet{zhang2019} studied optimization for the regression problem rather than classification, and their results do not distinguish the case with random labels from that with true labels; hence, it is not immediately clear how to translate their analysis to a generalization bound for classification under the cross-entropy loss as we are able to do in this paper.

The above results provide a concrete answer to the question of why overparameterized deep neural networks can achieve zero training loss using gradient descent.  However, the theoretical tools of
\citet{du2018-1layer,allenzhu2018,zoucao2018,zou2019improved} apply to data with random labels as well as true labels, and thus do not explain the generalization to unseen data observed experimentally. ~\citet{dziugaiteroy2017} optimized PAC-Bayes bounds for the generalization error of a class of stochastic neural networks that are perturbations of standard neural networks trained by SGD.  
~\citet{cao2019} proved
a guarantee for arbitrarily small generalization error for classification in deep fully connected neural networks trained with gradient descent using random initialization.  The same authors recently provided an improved result for deep fully connected networks trained by stochastic gradient descent using a different approach that relied on the neural tangent kernel and online-to-batch conversion~\citep{cao2019wide}.  
\citet{weinan2019} recently developed algorithm-dependent generalization bounds for a special residual network architecture with many different kinds of skip connections by using kernel methods.  
  

\section{Network Architecture and Optimization Problem}
\label{sec:structure}
We begin with the notation of the paper.  We denote vectors by lowercase letters and matrices by uppercase letters, with the assumption that a vector $v$ is a column vector and its transpose $v^\top$ is a row vector.  We use the standard $O(\cdot), \Omega(\cdot), \Theta(\cdot)$ complexity notations to ignore universal constants, with $\tilde O(\cdot), \tilde \Omega(\cdot)$ additionally ignoring logarithmic factors.  For $n\in \N$, we write $[n]=\{1,2,\dots, n\}$. Denote the number of hidden units at layer $l$ as $m_l$, $l=1, \dots, L+1$.  Let the $l$-th layer weights be $W_l \in \R^{m_{l-1} \times m_{l}}$, and concatenate all of the layer weights into a vector $W = (W_1, \dots, W_{L+1})$.  Denote by $w_{l,j}$ the $j$-th column of $W_l$.  Let $\sigma(x) = \max(0,x)$ be the ReLU nonlinearity, and let $\theta$ be a constant scaling parameter. We consider a class of residual networks defined by the following architecture:
\begin{align*}
x_1 &= \sigma(W_1^\top x),\qquad x_l = x_{l-1} + \theta \sigma\left( W_l^\top x_{l-1}\right),\,\ l=2,\dots, L,\\
x_{L+1} &= \sigma(W_{L+1}^\top x_L).
\end{align*}
Above, we denote $x_l$ as the $l$-th hidden layer activations of input $x\in \R^d$, with $x_0 := x$.  In order for this network to be defined, it is necessary that $m_1=m_2=\dots=m_L$.  We are free to choose $m_{L+1}$, as long as $m_{L+1} = \Theta(m_1)$ (see Assumption \ref{assumption:width.same.order}).  We define a constant, non-trainable vector $v=(1, 1,\dots, 1, -1, -1, \dots, -1)^\top\in \R^{m_{L+1}}$ with equal parts $+1$ and $-1$'s that determines the network output,
\[ f_W(x) = v^\top x_{L+1}.\]
We note that our methods can be extended to the case of a trainable top layer weights $v$ by choosing the appropriate scale of initialization for $v$.  We choose to fix the top layer weights in this paper for simplicity of exposition.

We will find it useful to consider the matrix multiplication form of the ReLU activations, which we describe below.  Let $\ind(A)$ denote the indicator function of a set $A$, and define diagonal matrices $\Sigma_l(x) \in \R^{m_l \times m_l}$ by $[\Sigma_l(x)]_{j,j} = \ind( w_{l,j}^\top x_{l-1} > 0),\,\ l=1,\dots, L+1$. 
By convention we denote products of matrices $\proddd i a b M_i$ by $M_b \cdot M_{b-1} \cdot \ldots \cdot M_a$ when $a \leq b$, and by the identity matrix when $a>b$.  With this convention, we can introduce notation for the $l$-to-$l'$ interlayer activations $H_l^{l'}(x)$ of the network.  For $2 \leq l\leq l'\leq L$ and input $x\in \R^d$ we denote
\begin{align}
H_l^{l'}(x) &:= \proddd r l {l'} \l( I + \theta \Sigma_r(x) W_r^\top\r).&(2 \leq l\leq l' \leq L)
\label{eq:interlayer.activations.defn}
\end{align}
If $l =1 < l'$, we denote $H_1^{l'}(x) = H_2^{l'}(x) \Sigma_1(x) W_1^\top$, and if $l' = L+1 > l$, we denote $H_{l}^{L+1}(x) = \Sigma_{L+1}(x) W_{L+1}^\top H_l^L(x)$.
Using this notation, we can write the output of the neural network as $f_W(x) = v^\top H_{l+1}^{L+1}(x) x_l$ for any $l\in \{0\}\cup [L+1]$ and $x\in \R^d$.  For notational simplicity, we will denote $\Sigma_l(x)$ by $\Sigma_l$ and $H_l^{l'}(x)$ by $H_l^{l'}$ when the dependence on the input is clear.

We assume we have i.i.d. samples $(x_i, y_i)_{i=1}^n \sim \calD$ from a distribution $\calD$, where $x_i \in \R^d$ and $y_i \in \{\pm 1\}$.  We note the abuse of notation in the above, where $x_l \in \R^{m_l}$ refers to the $l$-th hidden layer activations of an arbitrary input $x\in \R^d$ while $x_i$ refers to the $i$-th sample $x_i\in \R^d$.  We shall use $x_{l,i}\in \R^{m_l}$ when referring to the $l$-th hidden layer activations of a sample $x_i\in \R^d$ (where $i\in [n]$ and $l\in [L+1]$), while $x_l\in \R^{m_l}$ shall refer to the $l$-th hidden layer activation of arbitrary input $x\in \R^d$. 

Let $\ell(x) = \log(1 + \exp(-x))$ be the cross-entropy loss.  We consider the empirical risk minimization problem optimized by constant step size gradient descent,
\begin{equation*}
    \min_{W} L_S(W):= \f 1 n \summ i n \ell(y_i \cdot f_W(x_i)),\qquad W_l^{(k+1)} = W_l^{(k)} - \eta \cdot \nabla_{W_l} L_S(W^{(k)})\quad (l\in [L+1]).
\end{equation*}
We shall see below that a key quantity for studying the trajectory of the weights in the above optimization regime is a surrogate loss defined by the derivative of the cross-entropy loss.  We denote the empirical and true surrogate loss by
\begin{equation*}
\calE_S(W) := - \f 1 n \summ i n \ell'(y_i \cdot f_W(x_i)),\quad \calE_D(W) := \E_{(x,y)\sim \calD} [-\ell'(y \cdot f_W(x))],
\end{equation*}
respectively.  The empirical surrogate loss was first introduced by~\citet{cao2019} for the study of deep non-residual networks.  
Finally, we note here a formula for the gradient of the output of the network with respect to different layer weights:
\begin{align}
\nabla_{W_l} f_W(x) &= \theta^{\ind(2 \leq l \leq L)} x_{l-1} v^\top H_{l+1}^{L+1} \Sigma_l(x), &(1 \leq l \leq L+1).
\label{eq:gradient.formulas}
\end{align}

\section{Main Theory}
\label{sec:main.theory}
We first go over the assumptions necessary for our proof and then shall discuss our main results.  Our assumptions align with those made by~\citet{cao2019} in the fully connected case.  The first main assumption is that the input data is normalized.
\begin{assumption}
Input data are normalized: $\supp(\calD_x) \subset S^{d-1} = \{x\in \R^d : \pnorm x2 =1\}$.
\label{assumption:normalized.input}
\end{assumption}
Data normalization is common in statistical learning theory literature, from linear models up to and including recent work in neural networks~\citep{liliang2018, zoucao2018, du2018-1layer, allenzhu2018, aroradu2019-finegrained, cao2019}, and can easily be satisfied for arbitrary training data by mapping samples $x\mapsto x/\pnorm{x}2$.

The next assumption is on the data generating distribution.  Because overparameterized networks can memorize data, any hope of demonstrating that neural networks have a small generalization gap must restrict the class of data distribution processes to one where some type of learning is possible. 
\begin{assumption}
Let $p(u)$ denote the density of a standard $d$-dimensional Gaussian vector.  Define
\[ \calF = \Bigg\{ \int_{\R^d} c(u) \sigma(u^\top x) p(u) \mathrm du :\ \pnorm{c(\cdot)}\infty \leq 1\Bigg\}.\]
Assume there exists $f(\cdot) \in \calF$ and constant $\gamma>0$ such that $y\cdot f(x) \geq \gamma$ for all $(x,y)\in \supp(\calD)$. 
\label{assumption:separability}
\end{assumption}
Assumption \ref{assumption:separability} was introduced by~\citet{cao2019} for the analysis of fully connected networks and is applicable for distributions where samples can be perfectly classified by the random kitchen sinks model of~\citet{rahimi2009}.  One can view a function from this class as the infinite width limit of a one-hidden-layer neural network with regularizer given by a function $c(\cdot)$ with bounded $\ell^\infty$-norm.  As pointed out by~\citet{cao2019}, this assumption includes the linearly separable case. 

Our next assumption concerns the scaling of the weights at initialization. 
\begin{assumption}[Gaussian initialization] We say that the weight matrices $W_l \in \R^{m_{l-1} \times m_{l}}$ are generated via Gaussian initialization if each of the entries of $W_l$ are generated independently from $N(0,2/m_l)$. 
\label{assumption:gaussian.init}
\end{assumption}
This assumption is common to much of the recent theoretical analyses of neural networks~\citep{liliang2018,zoucao2018,allenzhu2018,du2018-1layer,aroradu2019-finegrained,cao2019} and is known as the He initialization due to its usage in the first ResNet paper by \citet{he2015-resnet}.  This assumption guarantees that the spectral norms of the weights are controlled at initialization.

Our last assumption concerns the widths of the networks we consider and allows us to exclude pathological dependencies between the width and other parameters that define the architecture and optimization problem.
\begin{assumption}[Widths are of the same order]
We assume $m_{L+1} = \Theta(m_L)$.  We call $m = m_L\wedge m_{L+1}$ the width of the network.
\label{assumption:width.same.order}
\end{assumption}

Our first theorem shows that provided we have sufficient overparameterization and sufficiently small step size, the iterates $W^{(k)}$ of gradient descent stay within a small neighborhood of their initialization.  Additionally, the empirical surrogate error can be bounded by a term that decreases as we increase the width $m$ of the network.
\begin{theorem}
Suppose $W^{(0)}$ are generated via Gaussian initialization and that the residual scaling parameter satisfies $\theta = 1 / \Omega(L)$.  For $\tau>0$, denote a $\tau$-neighborhood of the weights $W^{(0)}=(W^{(0)}_1, \dots, W^{(0)}_{L+1})$ at initialization by
\[ \calW(W^{(0)}, \tau) := \Big \{ W = (W_1, \dots, W_{L+1}) : \pnorm{W_{l} - W^{(0)}_l}F \leq \tau \ \forall l\in [L+1]\Big \}.\]
There exist absolute constants $\nu,\nu',\nu'', C, C' >0$ such that for any $\delta >0$, provided $\tau \leq \nu \gamma^{12} \l(\log m \r)^{-\f 32}$, $\eta \leq \nu' (\tau m^{-\f 12} \wedge \gamma^4 m^{-1})$, and $K \eta \leq \nu'' \tau^2 \gamma^{4}  \l( \log (n/ \delta )\r)^{-\f 12}$, then if the width of the network is such that,
\begin{equation*}
m \geq C'\l( \tau^{-\f 4 3} d \log \f {m}{\tau \delta} \vee d \log \f{mL}{\delta} \vee \tau^{-\f 23} (\log m)^{-1} \log \f L \delta \vee \gamma^{-2} \l(d \log \f 1 \gamma \vee \log \f L \delta\r) \vee \log \f n \delta\r)
\end{equation*}
then with probability at least $1-\delta$, gradient descent starting at $W^{(0)}$ with step size $\eta$ generates $K$ iterates $W^{(1)}, \dots, W^{(K)}$ that satisfy:
\begin{enumerate}[(i)]
\item $W^{(k)} \in \calW(W^{(0)}, \tau)$ for all $k \in [K]$.
\item There exists $k\in \{0,\dots, K-1\}$ with $\calE_S(W^{(k)}) \leq C \cdot m^{-\f 12} \cdot \l( K \eta\r)^{-\f 12} \l( \log \f n \delta \r)^{\f 14} \cdot \gamma^{-2}$.
\end{enumerate}
\label{theorem:gen.wk.in.tau}
\end{theorem}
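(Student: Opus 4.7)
My plan is to adapt the proof framework of \citet{cao2019} for deep fully connected networks to the residual setting, exploiting the scaling $\theta = 1/\Omega(L)$ so that every depth-dependent factor is polylogarithmic in $L$ rather than polynomial. The argument proceeds by joint induction on $k$, simultaneously maintaining $W^{(k)} \in \calW(W^{(0)}, \tau)$ and tracking how $\pnorm{W^{(k)} - W^*}{F}^2$ evolves, where $W^*$ is a carefully chosen reference weight near initialization.

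First I would construct $W^* = W^{(0)} + U$ inside the $\tau$-neighborhood that approximately realizes the function $f \in \calF$ guaranteed by Assumption \ref{assumption:separability}. Writing $f(x) = \int c(u) \sigma(u^\top x) p(u) \mathrm{d}u$ with $\pnorm{c}{\infty} \leq 1$, a random-features argument based on the Gaussian initialization produces a perturbation $U$ (supported on a single well-chosen layer) such that the linearized network $f_{W^{(0)}}(x) + \ip{\nabla f_{W^{(0)}}(x)}{U}$ approximates $y\cdot f(x) \ge \gamma$ uniformly over the training set to within $o(\gamma)$, with $\pnorm{U}{F}$ of order $\gamma^{-1} m^{-1/2}$ up to polylog factors. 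Crucially, this uses $\pnorm{H_{l+1}^{L+1}}{2} = \tilde O(1)$ at initialization, which rests on $\theta = 1/\Omega(L)$ so that $\prod_r (I + \theta \Sigma_r W_r^\top)$ has operator norm bounded by $\exp(\theta L \cdot O(1)) = O(1)$ rather than exponential in $L$.

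Next I would combine the semismoothness estimate (Lemma \ref{lemma:semismoothness}), schematically
\begin{equation*}
L_S(W') \leq L_S(W) + \ip{\nabla L_S(W)}{W' - W} + O(m^{1/2})\sqrt{L_S(W)}\,\pnorm{W' - W}{F} + O(m)\pnorm{W' - W}{F}^2,
\end{equation*}
with an approximate convexity inequality that stems from near-linearity of $f_W$ on $\calW(W^{(0)}, \tau)$:
\begin{equation*}
\gamma \cdot \calE_S(W^{(k)}) \lesssim L_S(W^{(k)}) - L_S(W^*) + \ip{\nabla L_S(W^{(k)})}{W^{(k)} - W^*} + (\text{vanishing error in } m).
\end{equation*}
Expanding $\pnorm{W^{(k+1)} - W^*}{F}^2$ via the gradient update, substituting the convexity inequality into the cross term, and using semismoothness to bound $\pnorm{\nabla L_S(W^{(k)})}{F}^2$ in terms of the loss, I obtain a one-step recursion that telescopes. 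Summing from $k = 0$ to $K-1$ yields $\eta \gamma \sum_k \calE_S(W^{(k)}) \lesssim \pnorm{U}{F}^2 + K\eta \cdot (\text{error})$; dividing by $K\eta$ and taking the minimizing index produces the bound in (ii), while the same telescoping controls $\pnorm{W^{(k)} - W^{(0)}}{F}$ by $\tau$ under the assumed constraints on $\eta, K\eta, \tau$, closing the induction for (i).

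The hardest part will be verifying that all the implicit constants --- in the Jacobian norm bounds, the semismoothness, and the random-feature approximation --- depend only polylogarithmically on $L$, uniformly over $W \in \calW(W^{(0)}, \tau)$. This demands uniform covering arguments for $\pnorm{H_{l+1}^{L+1}(x)}{2}$ and for the stability of the activation patterns $\Sigma_l(x)$ under perturbations of size $\tau$; the associated log-covering numbers are precisely what contribute the various $d\log(m/\tau\delta)$, $\log(L/\delta)$, and $d\log(1/\gamma)$ terms in the width condition. Arranging every high-probability event --- initialization concentration, activation stability, random-feature approximation, and uniform control over iterates --- to hold simultaneously on a single event of probability at least $1-\delta$ will require careful bookkeeping.
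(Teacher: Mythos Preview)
Your route is genuinely different from the paper's. The paper never constructs a reference point $W^*$ and does not telescope $\pnorm{W^{(k)}-W^*}F^2$. Instead, it proves a Polyak--\L{}ojasiewicz--type gradient \emph{lower} bound at the last layer, $\pnorm{\nabla_{W_{L+1}} L_S(\tilde W)}F^2 \geq \underline{C}\,m\,\gamma^4\,\calE_S(\tilde W)^2$ for all $\tilde W\in\calW(W^{(0)},\tau)$ (Lemma~\ref{lemma:gradient.lower.bound}), and feeds this together with the gradient upper bound into the semismoothness of $L_S$ to obtain a direct one-step descent inequality $L_S(W^{(k+1)})-L_S(W^{(k)}) \le -C\eta m\gamma^4\,\calE_S(W^{(k)})^2$. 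Telescoping this over $k$ and invoking the elementary bound $L_S(W^{(0)})\le C\sqrt{\log(n/\delta)}$ at initialization immediately yields (ii); the same descent inequality plus Cauchy--Schwarz on $\sum_{k'}\calE_S(W^{(k')})$ gives (i). The separability assumption enters only through the gradient lower bound, via a separator $\alpha\in S^{m_L-1}$ at the penultimate layer (Proposition~\ref{proposition:gen.linsep}), not via any $W^*$.

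Your reference-point/online-style argument is a legitimate alternative used elsewhere in the NTK literature, but it produces a different scaling: telescoping $\pnorm{W^{(k)}-W^*}F^2$ gives $\min_k \calE_S(W^{(k)})$ of order $\pnorm{U}F^2/(K\eta\gamma)$ rather than $[L_S(W^{(0)})/(K\eta m\gamma^4)]^{1/2}$. In particular, the square-root dependence on $K\eta m$ and the factor $(\log(n/\delta))^{1/4}$ in part (ii) are signatures of the paper's descent-lemma route and would not naturally emerge from yours. Your ``approximate convexity'' display is also garbled: the inequality one actually gets from convexity of $\ell$ plus near-linearity of $f_W$ is $\ip{\nabla L_S(W^{(k)})}{W^{(k)}-W^*} \ge L_S(W^{(k)})-L_S(W^*)-\mathrm{err}$, which only helps if $L_S(W^*)$ is small; that in turn forces $y_i f_{W^*}(x_i)$ to be \emph{large} (not merely $\ge\gamma$), so $\pnorm{U}F$ must be scaled up accordingly and the resulting parameter dependence has to be re-derived. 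As written, the combination ``$\gamma\calE_S \lesssim [L_S(W^{(k)})-L_S(W^*)] + \ip{\nabla L_S(W^{(k)})}{W^{(k)}-W^*}$'' does not correspond to a standard inequality and would need to be replaced.
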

This theorem allows us to restrict our attention from the large class of all deep residual neural networks to the reduced complexity class of those with weights that satisfy $W\in \calW(W^{(0)}, \tau)$.  Our analysis provides a characterization of the radius of this reduced complexity class in terms of parameters that define the network architecture and optimization problem.   Additionally, this theorem allows us to translate the optimization problem over the empirical loss $L_S(W)$ into one for the empirical surrogate loss $\calE_S(W^{(k)})$, a quantity that is simply related to the classification error (its expectation is bounded by a constant multiple of the classification error under 0-1 loss; see Appendix \ref{appendix:theorem.gen.gap.proof}).

Our next theorem characterizes the Rademacher complexity of the class of residual networks with weights in a $\tau$-neighborhood of the initialization.  Additionally, it connects the test accuracy with the empirical surrogate loss and the Rademacher complexity. 
\begin{theorem}
Let $W^{(0)}$ denote the weights at Gaussian initialization and suppose the residual scaling parameter satisfies $\theta = 1 / \Omega(L)$.  Suppose $\tau \leq 1$.  Then there exist absolute constants $C_1, C_2, C_3 >0$ such that for any $\delta>0$, provided 
\[m\geq C_1 \l(\tau^{-\f 23} (\log m)^{-1} \log (L/\delta) \vee \tau^{-\f 4 3} d \log (m/(\tau \delta)) \vee d \log( mL/\delta)\r), \]
then with probability at least $1-\delta$, we have the following bound on the Rademacher complexity,
\[ \mathfrak{R}_n\l( \big\{ f_W : W\in \calW(W^{(0)}, \tau) \big \} \r) \leq C_2 \l( \tau^{\f 43} \sqrt{m \log m} + \f{\tau \sqrt{m}}{\sqrt n}\r),\]
so that for all $W\in \calW(W^{(0)}, \tau)$,
\begin{align}
\P_{(x,y)\sim \calD}\l( y \cdot f_W(x) < 0\r) \leq 2\calE_S(W) + C_2 \l( \tau^{\f 43} \sqrt{m \log m} + \f{\tau \sqrt{m}}{\sqrt n}\r)+ C_3 \sqrt{ \f{ \log (1/\delta)}n}.
\label{eq:thm:gen.gap.bound}
\end{align}
\label{theorem:gen.gap}
\end{theorem}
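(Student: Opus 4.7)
The plan has two pieces: first converting the population classification error into the empirical surrogate loss plus a Rademacher term, and second bounding that Rademacher complexity via linearization around initialization. For the first piece I would start from the pointwise inequality $\ind[z<0]\leq -2\ell'(z)$, which is valid since $-\ell'(z)=1/(1+e^{z})\geq \tfrac12$ whenever $z\leq 0$; this gives $\P_{(x,y)\sim\calD}(yf_W(x)<0)\leq 2\calE_D(W)$. Since $|-\ell'|\leq 1$, the standard bounded-loss uniform deviation inequality applied to the loss class $\calG=\{(x,y)\mapsto -\ell'(yf_W(x)):W\in\calW(W^{(0)},\tau)\}$ gives, with probability at least $1-\delta/2$, that $\calE_D(W)\leq \calE_S(W)+2\mathfrak R_n(\calG)+C\sqrt{\log(1/\delta)/n}$ uniformly in $W$. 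Ledoux--Talagrand contraction (using that $-\ell'$ is $\tfrac14$-Lipschitz and $y\in\{\pm1\}$, together with the symmetry of the Rademacher distribution) then bounds $\mathfrak R_n(\calG)$ by a constant multiple of $\mathfrak R_n(\{f_W:W\in\calW(W^{(0)},\tau)\})$, and the advertised test-error inequality \eqref{eq:thm:gen.gap.bound} drops out once the Rademacher bound on the network class itself is established.

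For that Rademacher bound I would linearize around initialization, writing
\[ f_W(x)=f_{W^{(0)}}(x)+\sum_{l=1}^{L+1}\ip{\nabla_{W_l}f_{W^{(0)}}(x)}{W_l-W_l^{(0)}}+r(W;x). \]
The singleton $\{f_{W^{(0)}}\}$ has zero Rademacher complexity, so only the linear term and the remainder need to be controlled. For the linear term I would use \eqref{eq:gradient.formulas} together with at-initialization norm estimates: standard Gaussian concentration for deep ReLU residual networks with $\theta=1/\Omega(L)$ shows, under the stated width hypothesis and with probability at least $1-\delta/4$, that $\|x^{(0)}_{l-1}\|_2=O(1)$, the interlayer products satisfy $\|H_{l+1}^{L,(0)}\|_2=O(1)$, and $\|v^\top H_{l+1}^{L+1,(0)}\Sigma_l^{(0)}\|_2=O(\sqrt m)$ uniformly over $l$ and over the sample. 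Plugging these into \eqref{eq:gradient.formulas} gives per-layer Frobenius norms $\|\nabla_{W_l}f_{W^{(0)}}(x_i)\|_F=O(\sqrt m/L)$ for $2\leq l\leq L$ and $O(\sqrt m)$ for $l\in\{1,L+1\}$. Applying the standard linear-class Rademacher bound $\tau\cdot\max_i\|\nabla_{W_l}f_{W^{(0)}}(x_i)\|_F/\sqrt n$ layerwise and summing over $l$, the $L$ middle layers contribute $L\cdot\tau(\sqrt m/L)/\sqrt n=\tau\sqrt m/\sqrt n$, and the two endpoint layers contribute the same order, producing a total of $O(\tau\sqrt m/\sqrt n)$ with no polynomial $L$ factor.

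For the remainder $r(W;x)$ I would invoke the semismoothness property (Lemma \ref{lemma:semismoothness}), which for residual networks states that on $\calW(W^{(0)},\tau)$ the linearization error is at most $O(\tau^{4/3}\sqrt{m\log m})$ uniformly in $x$ and $W$, again with only polylogarithmic $L$ dependence. Since the Rademacher complexity of any function class is bounded by its uniform sup-norm, this yields the $C_2\tau^{4/3}\sqrt{m\log m}$ contribution. A union bound over the two high-probability events (uniform-deviation and initialization) gives the overall $1-\delta$ probability, with the constants absorbed into the logarithmic factors already present in the width hypothesis.

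The main technical obstacle, and the whole reason the bound has only logarithmic rather than polynomial $L$ dependence, is establishing the at-initialization norm estimates for $H_l^{l'}$ and the semismoothness constant in the residual setting. The telescoping product $\prod_{r=l}^{l'}(I+\theta\Sigma_r W_r^\top)$ in \eqref{eq:interlayer.activations.defn} is well-conditioned precisely because each factor is $I+O(1/L)$: by $(1+a)^L\leq e^{aL}$ the operator norm stays $O(1)$ regardless of $L$ and its deviations from the expected behavior concentrate with only $O(\log L)$ dependence. The analogous product $\prod_r\Sigma_r W_r^\top$ in a non-residual network can blow up polynomially (or exponentially) in $L$, which is exactly why the corresponding non-residual bounds have polynomial depth dependence. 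Once these residual-specific geometry-at-initialization lemmas are in place, the linearization plus contraction assembly above is routine.
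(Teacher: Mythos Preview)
Your proposal is correct and follows essentially the same route as the paper: convert the $0$--$1$ loss to the surrogate via $\ind[z<0]\leq -2\ell'(z)$ and Markov, apply uniform convergence plus Lipschitz contraction to reduce to the Rademacher complexity of the network class, then split that complexity into a linear part controlled by the layerwise gradient norms \eqref{eq:gradient.formulas} (with the $\theta^{\ind(2\leq l\leq L)}$ scaling killing the $L$ factor in the sum) and a remainder handled by the semismoothness Lemma \ref{lemma:semismoothness}. The paper carries out exactly this decomposition (your $I_1$/$I_2$ split and gradient estimates match those in Appendix \ref{appendix:theorem.gen.gap.proof}), so there is nothing substantively different to compare.
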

We shall see in Section \ref{sec:proof.sketch} that we are able to derive the above bound on the Rademacher complexity by using a semi-smoothness property of the neural network output and an upper bound on the gradient of the network output.  Standard arguments from statistical learning theory provide the first and third terms in \eqref{eq:thm:gen.gap.bound}.  

The missing ingredients needed to realize the result of Theorem \ref{theorem:gen.gap} for networks trained by gradient descent are supplied by Theorem \ref{theorem:gen.wk.in.tau}, which gives (i) control of the growth of the empirical surrogate error $\calE_S$ along the gradient descent trajectory, and (ii) the distance $\tau$ from initialization before which we are guaranteed to find small empirical surrogate error.  Putting these together yields Corollary \ref{corollary:classification.error}.
\begin{corollary}\label{corollary:classification.error}
Suppose that the residual scaling parameter satisfies $\theta = 1 / \Omega(L)$.  Let $\eps, \delta >0$ be fixed. Suppose that $m^* = \tilde O(\poly(\gamma^{-1})) \cdot \max(d, \eps^{-14}) \cdot \log (1/ \delta)$ and $n = \tilde O(\poly(\gamma^{-1})) \cdot \eps^{-4}$.  Then for any $m\geq m^*$, with probability at least $1-\delta$ over the initialization and training sample, there is an iterate $k\in \{0,\dots, K-1\}$ with $K = \tilde O(\poly(\gamma^{-1}))\cdot \eps^{-2}$ such that gradient descent with Gaussian initialization and step size $\eta = O(\gamma^4 \cdot m^{-1})$ satisfies
\[ \P_{(x,y)\sim D}[y \cdot f_{W^{(k)}} (x) < 0] \leq \eps.\]
\end{corollary}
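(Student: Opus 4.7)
The plan is to combine Theorems \ref{theorem:gen.wk.in.tau} and \ref{theorem:gen.gap} by picking the neighborhood radius $\tau$, step size $\eta$, and iteration count $K$ so that (a) all hypotheses of both theorems hold, and (b) each of the three contributions in the bound \eqref{eq:thm:gen.gap.bound} is $O(\eps)$. Up to rescaling $\eps$ by an absolute constant at the end, it suffices to show each term is at most $\eps/4$.

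First I would set $\eta = c_1 \gamma^4 / m$ for a small constant $c_1$, so the step-size restriction of Theorem \ref{theorem:gen.wk.in.tau} is respected (this is the binding branch of $\eta \leq \nu'(\tau m^{-1/2} \wedge \gamma^4 m^{-1})$ for our final $\tau$). Next I would choose the key parameter $\tau = c_2 m^{-1/2} \eps^{-1} \gamma^{-4}$. This choice is tailored so that, taking $K = c_3 \gamma^{-8} \eps^{-2}$, the product $K\eta = \Theta(\gamma^{-4} \eps^{-2}/m)$ saturates (up to logs) the constraint $K\eta \leq \nu'' \tau^2 \gamma^4 (\log(n/\delta))^{-1/2}$ of Theorem \ref{theorem:gen.wk.in.tau}. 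Plugging these into the conclusion (ii) of that theorem gives
\[
\calE_S(W^{(k)}) \;\lesssim\; m^{-1/2}\,(K\eta)^{-1/2}\,(\log(n/\delta))^{1/4}\,\gamma^{-2} \;=\; \tilde O(\eps),
\]
so the surrogate-loss contribution $2\calE_S(W^{(k)})$ in \eqref{eq:thm:gen.gap.bound} is $O(\eps)$.

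Second, I would verify the Rademacher contribution. With our $\tau$,
\[
\tau^{4/3}\sqrt{m\log m} \;=\; \tilde O\bigl(m^{-1/6}\eps^{-4/3}\gamma^{-16/3}\bigr),
\qquad
\frac{\tau\sqrt m}{\sqrt n} \;=\; \frac{\eps^{-1}\gamma^{-4}}{\sqrt n}.
\]
Requiring both terms to be at most $\eps$ gives $m \gtrsim \eps^{-14}\gamma^{-32}$ (from the first) and $n \gtrsim \eps^{-4}\gamma^{-8}$ (from the second); these are precisely the $\eps$-dependence claimed. The leftover standard term $C_3\sqrt{\log(1/\delta)/n}$ is handled by the weaker condition $n \gtrsim \eps^{-2}\log(1/\delta)$. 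Noting that $\tau$ decreases in $m$, both Rademacher terms only improve if $m > m^*$, so the same $n$ suffices for all admissible $m$.

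Third, I would check the remaining hypotheses of Theorem \ref{theorem:gen.wk.in.tau}. At $m = m^*$ we have $\tau = \Theta(\eps^6\gamma^{12})$, which satisfies the upper bound $\tau \leq \nu\gamma^{12}(\log m)^{-3/2}$ for small enough $\eps$, and for $m \geq m^*$ the value of $\tau$ shrinks further. The width lower bound in Theorem \ref{theorem:gen.wk.in.tau} has several branches; the $\gamma^{-2}(d\log(1/\gamma) \vee \log(L/\delta))$ term contributes the $\poly(\gamma^{-1})\cdot d$ piece of $m^*$, while the $\tau^{-4/3}d\log(m/(\tau\delta))$ and $\tau^{-2/3}(\log m)^{-1}\log(L/\delta)$ terms, together with the Rademacher requirement, give the $\eps^{-14}\poly(\gamma^{-1})$ piece; hence the claimed $m^* = \tilde O(\poly(\gamma^{-1}))\cdot\max(d,\eps^{-14})\cdot\log(1/\delta)$. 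Finally, apply Theorem \ref{theorem:gen.wk.in.tau} with failure probability $\delta/2$ to produce an iterate $k \in \{0,\dots,K-1\}$ with $W^{(k)}\in \calW(W^{(0)},\tau)$ and $\calE_S(W^{(k)}) \leq \eps/4$, then apply Theorem \ref{theorem:gen.gap} at this $W^{(k)}$ with failure probability $\delta/2$ and take a union bound.

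The main subtlety is the choice of $\tau$: shrinking $\tau$ tightens the Rademacher complexity but forces $K\eta \leq \tau^2\gamma^4$ to shrink, which by the $m^{-1/2}(K\eta)^{-1/2}$ form of the surrogate-loss bound in turn demands larger $m$ to keep $\calE_S$ small. The $\eps^{-14}$ exponent is exactly what falls out of balancing $\tau^{4/3}\sqrt m \lesssim \eps$ against $m^{-1/2}(K\eta)^{-1/2} \lesssim \eps$ given $K\eta \lesssim \tau^2$; tracking this dependence, together with keeping $K$ itself independent of $m$ (so that $K = \tilde O(\poly(\gamma^{-1}))\eps^{-2}$) and ensuring $\tau \sqrt m/\sqrt n$ is $m$-independent, is the only delicate bookkeeping in the argument.
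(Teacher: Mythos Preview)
Your proposal is correct and follows essentially the same approach as the paper: choose $\tau \propto m^{-1/2}\eps^{-1}\gamma^{-4}$ so that $\tau\sqrt m$ is $m$-independent, saturate the $K\eta \lesssim \tau^2\gamma^4$ constraint, and then balance the surrogate-loss bound from Theorem~\ref{theorem:gen.wk.in.tau} against the two Rademacher terms in Theorem~\ref{theorem:gen.gap} to read off the $m \gtrsim \eps^{-14}$ and $n \gtrsim \eps^{-4}$ requirements. Your explicit remark that the Rademacher terms are monotone in $m$ (so the bound holds for all $m \ge m^*$) and the union bound over the two failure events are details the paper leaves implicit, but the argument is the same.
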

This corollary shows that for deep residual networks, provided we have sufficient overparameterization, gradient descent is guaranteed to find networks that have arbitrarily high classification accuracy.  In comparison with the results of~\citet{cao2019}, the width $m$, number of samples $n$, step size $\eta$, and number of iterates $K$ required for the guarantees for residual networks given in Theorem \ref{theorem:gen.wk.in.tau} and Corollary \ref{corollary:classification.error} all have (at most) logarithmic dependence on $L$ as opposed to the exponential dependence in the corresponding results for the non-residual architecture.
Additionally, we note that the step size and number of iterations required for our guarantees are independent of the depth, and this is due to the advantage of the residual architecture. Our analysis shows that the presence of skip connections in the network architecture removes the complications relating to the depth that traditionally arise in the analysis of non-residual architectures for a variety of reasons.  The first is a technical one from the proof, in which we show that the Lipschitz constant of the network output and the semismoothness of the network depend at most logarithmically on the depth, so that the network width does not blow up as the depth increases (see Lemmas \ref{lemma:hidden.and.interlayer.activations.bounded} and \ref{lemma:semismoothness} below).  Second, the presence of skip-connections allows for representations that are learned in the first layer to be directly passed to later layers without needing to use a wider network to relearn those representations.  This property was key to our proof of the gradient lower bound of Lemma \ref{lemma:gradient.lower.bound} and has been used in previous approximation results for deep residual networks, e.g.,~\citet{yarotsky2017}.

\section{Proof Sketch of the Main Theory}
\label{sec:proof.sketch}
In this section we will provide a proof sketch of Theorems \ref{theorem:gen.wk.in.tau} and \ref{theorem:gen.gap} and Corollary \ref{corollary:classification.error}, following the proof technique of~\citet{cao2019}.  We will first collect the key lemmas needed for their proofs, leaving the proofs of these lemmas for Appendix \ref{appendix:key.lemma.proofs}.  We shall assume throughout this section that the residual scaling parameter satisfies $\theta = 1 / \Omega(L)$, which we note is a common assumption in the literature of residual network analysis~\citep{du2018-deep,allenzhu2018,zhang2019}.

Our first key lemma shows that the interlayer activations defined in \eqref{eq:interlayer.activations.defn} are uniformly bounded in $x$ and $l$ provided the network is sufficiently wide. 

\begin{lemma}[Hidden layer and interlayer activations are bounded] 
\label{lemma:hidden.and.interlayer.activations.bounded}
Suppose that $W_1, \dots, W_{L+1}$ are generated via Gaussian initialization.  Then there exist absolute constants $C_0,C_1,C_2> 0$ such that if $m\geq C_0 d \log \l( mL/\delta\r)$, then with probability at least $1-\delta$, for any $l,l'=1,\dots, L+1$ with $l\leq l'$ and $x\in S^{d-1}$, we have $C_1 \leq \pnorm{x_l}2 \leq C_2$ and $\pnorm{H_l^{l'}}2 \leq C_2$.
\end{lemma}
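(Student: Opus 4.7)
My plan is to combine three ingredients: (i) standard Gaussian matrix concentration to control the operator norms $\|W_l\|_2$ uniformly in $l$, (ii) the residual-plus-ReLU structure, which forces every hidden layer activation $x_l$ with $l\le L$ to have entrywise nonnegative coordinates and thereby makes $\|x_l\|_2$ nondecreasing in $l$, and (iii) a covering argument over $S^{d-1}$ to achieve uniformity in $x$. The assumption $\theta = 1/\Omega(L)$ ensures that every geometric factor of the form $(1+\theta\|W_l\|_2)^L$ remains $O(1)$, which is the key structural feature of the residual architecture.

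First I would establish a good event $\calE_0$ on which $\|W_l\|_2 \le C$ for every $l\in[L+1]$, by applying the Davidson--Szarek tail bound to each $W_l$ (whose entries are i.i.d.\ $N(0,2/m_l)$) and taking a union bound over $l$; this holds with probability at least $1-\delta/3$ when $m\gtrsim \log(L/\delta)$. For the base layer, for fixed $x\in S^{d-1}$ the coordinates of $W_1^\top x$ are i.i.d.\ $N(0,2/m_1)$, so $\E[\|x_1\|_2^2]=1$ and a $\chi^2$-type tail bound yields $\|x_1\|_2\in[1/\sqrt{2},\sqrt{3/2}]$ with probability $1-e^{-cm}$. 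Uniformity in $x$ then follows from a constant-scale $\epsilon$-net of $S^{d-1}$ of cardinality $(3/\epsilon)^d$ combined with the $O(1)$-Lipschitzness of $x\mapsto\|x_1(x)\|_2$ on $\calE_0$; this is where the requirement $m\gtrsim d\log(mL/\delta)$ enters.

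Propagation through the residual layers is then essentially algebraic. For the upper bound, ReLU nonexpansivity gives $\|x_l\|_2 \le (1+\theta C)\|x_{l-1}\|_2$ on $\calE_0$, and iterating yields $\|x_L\|_2\le e^{\theta CL}\|x_1\|_2=O(1)$. For the lower bound, an induction shows that $x_l$ has entrywise nonnegative coordinates for $l=1,\dots,L$ (base case $x_1=\sigma(W_1^\top x)\ge 0$, inductive step since $\theta>0$ and $\sigma$ outputs nonnegative vectors), so $\langle x_{l-1},\sigma(W_l^\top x_{l-1})\rangle\ge 0$ and therefore
\[ \|x_l\|_2^2 \ge \|x_{l-1}\|_2^2 + \theta^2\|\sigma(W_l^\top x_{l-1})\|_2^2 \ge \|x_{l-1}\|_2^2 \ge \|x_1\|_2^2 \ge 1/2. \]
The top layer $x_{L+1}=\sigma(W_{L+1}^\top x_L)$ needs a separate argument: conditioning on $W_1,\dots,W_L$ so that $x_L$ is a fixed vector, and exploiting the independence of $W_{L+1}$, I would apply the same $\chi^2$-type concentration to obtain $\|x_{L+1}\|_2\in[\|x_L\|_2/\sqrt{2},\,\sqrt{3/2}\,\|x_L\|_2]$, then extend uniformly in $x$ via an $\epsilon$-net, using that $x\mapsto x_L(x)$ is $O(1)$-Lipschitz on $\calE_0$ by the same telescoping estimate.

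For the interlayer activations, on $\calE_0$ the bound $\|\Sigma_r\|_2\le 1$ yields $\|I+\theta\Sigma_r W_r^\top\|_2 \le 1+\theta C$, so for $2\le l\le l'\le L$,
\[ \|H_l^{l'}\|_2 \le \prod_{r=l}^{l'}(1+\theta\|W_r\|_2) \le e^{\theta CL} = O(1), \]
and the boundary cases $l=1$ or $l'=L+1$ each absorb one additional factor of $\|W_1\|_2$ or $\|W_{L+1}\|_2$, both $O(1)$ on $\calE_0$. The main technical obstacle is bookkeeping: arranging the $\epsilon$-net discretization error so that the constants $C_1,C_2$ come out as absolute constants independent of $L$ and $m$. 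This is made possible precisely by the fact that every Lipschitz constant and every product norm encountered in the argument grows only like $(1+\theta C)^L\le e^{\theta CL}=O(1)$, rather than exponentially in $L$ as would be the case without the skip connections.
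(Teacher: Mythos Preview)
Your proposal is correct and follows essentially the same route as the paper. The paper likewise controls $\|W_l\|_2$ by Gaussian matrix concentration, handles the first and last (non-residual) layers via $\chi^2$-type concentration on an $\epsilon$-net of $S^{d-1}$ together with the $O(1)$-Lipschitzness of $x\mapsto x_l$, and uses the product bound $\prod_r\|I+\theta\Sigma_r W_r^\top\|_2\le e^{C\theta L}=O(1)$ for both the upper bound on $\|x_l\|_2$ and the interlayer bound $\|H_l^{l'}\|_2$; for the lower bound on the residual layers the paper invokes the same deterministic monotonicity you identified, phrased coordinatewise as $x_{l,j}\ge x_{1,j}$ rather than via $\langle x_{l-1},\sigma(W_l^\top x_{l-1})\rangle\ge 0$, but the two formulations are equivalent.
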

Due to the scaling of $\theta$, we are able to get bounds on the interlayer and hidden layer activations that do not grow with $L$.  As we shall see, this will be key for the sublinear dependence on $L$ for the results of Theorems  \ref{theorem:gen.wk.in.tau} and \ref{theorem:gen.gap}.  The fully connected architecture studied by \citet{cao2019} had additional polynomial terms in $L$ for both upper bounds for $\pnorm{x_l}2$ and $\pnorm{H_l^{l'}}2$.

Our next lemma describes a semi-smoothness property of the neural network output $f_W$ and the empirical loss $L_S$.  
\begin{lemma}[Semismoothness of network output and objective loss]
\label{lemma:semismoothness}
Let $W_1,\dots, W_{L+1}$ be generated via Gaussian initialization, and let $\tau \leq 1$.  Define 
\[ h(\hat W, \tilde W) := \pnorm{\hat W_1 - \tilde W_1}2 + \theta \sum_{l=2}^L \pnorm{\hat W_l - \tilde W_l}2 + \pnorm{\hat W_{L+1} - \tilde W_{L+1}}2.\]
There exist absolute constants $C, \overline{C}>0$ such that if 
$$m \geq C\l(\tau^{-\f 2 3} (\log m)^{-1} \log (L/\delta) \vee \tau^{-\f 4 3} d \log (m /(\tau \delta)) \vee d \log (mL/\delta)\r),$$ 
then with probability at least $1-\delta$, we have for all $x\in S^{d-1}$ and $\hat W, \tilde W \in \calW(W, \tau)$,
\begin{align*}
f_{\hat W} (x) - f_{\tilde W}(x) &\leq\overline{C} \tau^{\f 1 3} \sqrt{m \log m} \cdot  h(\hat W, \tilde W) + \overline{C} \sqrt{m} \cdot h(\hat W, \tilde W)^2\\
&\qquad+ \summ l {L+1} \tr \l[ \l( \hat W_l - \tilde W_l \r)^\top \nabla_{W_l} f_{\tilde W}(x)\r].
\end{align*}
and
\begin{align*}
L_S(\hat W) - L_S(\tilde W) &\leq \overline{C} \tau^{\f 13} \sqrt{m \log m}\cdot h(\hat W, \tilde W) \cdot \calE_S(\tilde W) + \overline C m \cdot h(\hat W, \tilde W)^2 \\
&\qquad + \sum_{l=1}^{L+1} \tr\l[ \l( \hat W_l - \tilde W_l\r)^\top  \nabla_{W_l} L_S(\tilde W) \r]. 
\end{align*}
\end{lemma}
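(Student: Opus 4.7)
The strategy is to adapt the semi-smoothness argument of \citet{cao2019} for the fully connected architecture to the residual setting, leveraging Lemma~\ref{lemma:hidden.and.interlayer.activations.bounded} to keep every depth-dependent factor under control.

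First, for the network-output inequality I would decompose $f_{\hat W}(x) - f_{\tilde W}(x)$ into a telescoping sum across layers. For $l \in [L+1]$, let $W^{[l]}$ denote the hybrid configuration that uses $\hat W_{l'}$ for $l' < l$ and $\tilde W_{l'}$ for $l' \geq l$, so that $W^{[1]} = \tilde W$ and $W^{[L+2]} = \hat W$, and write
\[ f_{\hat W}(x) - f_{\tilde W}(x) = \sum_{l=1}^{L+1} \bigl(f_{W^{[l+1]}}(x) - f_{W^{[l]}}(x)\bigr). \]
Each summand differs only in the $l$-th weight matrix, and using the interlayer-activation representation \eqref{eq:interlayer.activations.defn} it can be expanded around the activation pattern $\Sigma_l$ of $\tilde W$ via
\[ \sigma(\hat W_l^\top z) - \sigma(\tilde W_l^\top z) = \Sigma_l (\hat W_l - \tilde W_l)^\top z + r_l(z), \]
where $r_l(z)$ captures coordinates whose ReLU pattern flips between the two configurations. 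By \eqref{eq:gradient.formulas}, summing the $\Sigma_l$ part over $l$ produces exactly $\sum_l \tr[(\hat W_l - \tilde W_l)^\top \nabla_{W_l} f_{\tilde W}(x)]$; it therefore remains to bound the accumulated contribution of the $r_l$ error terms.

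Second, I would estimate these error contributions. Standard anti-concentration for Gaussian-initialized preactivations, combined with $\hat W, \tilde W \in \calW(W, \tau)$, gives that at each layer only an $O(\tau^{2/3})$ fraction of ReLU neurons flip their sign pattern. Multiplying the number of flips by $\|\hat W_l - \tilde W_l\|_2 \leq 2\tau$, by the $O(1)$ bounds on $\|x_l\|_2$ and $\|H_l^{l'}\|_2$ from Lemma~\ref{lemma:hidden.and.interlayer.activations.bounded}, and by the appropriate $\theta$-weight carried over from the interlayer operators, one obtains a first-order contribution of order $\tau^{1/3}\sqrt{m\log m}\cdot h(\hat W, \tilde W)$. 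The quadratic $\sqrt m \cdot h(\hat W, \tilde W)^2$ piece arises from second-order cross-layer interactions in which activation flips at one layer combine with weight perturbations at a later layer. Uniformity in $x \in S^{d-1}$ is obtained through a covering argument on the input sphere together with union bounds, and the width lower bound in the hypothesis is precisely what is needed for these bounds to hold with probability $1 - \delta$.

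Third, for the empirical-loss inequality I would use the fact that $\ell(u) = \log(1 + e^{-u})$ is convex with $|\ell''| \leq 1/4$, so that $\ell(a) - \ell(b) \leq \ell'(b)(a - b) + \tfrac{1}{8}(a - b)^2$. Applying this with $a = y_i f_{\hat W}(x_i)$, $b = y_i f_{\tilde W}(x_i)$ and averaging over $i$, I would plug the already-proved output inequality into the linear term $(a-b)$ and use $|y_i| = 1$ together with $-\tfrac 1 n \sum_i \ell'(y_i f_{\tilde W}(x_i)) = \calE_S(\tilde W)$. The gradient piece produces $\sum_l \tr[(\hat W_l - \tilde W_l)^\top \nabla_{W_l} L_S(\tilde W)]$, the $\tau^{1/3}\sqrt{m\log m}\cdot h$ error piece picks up the $\calE_S(\tilde W)$ weight, and the remaining contributions (the $\sqrt m h^2$ from the output inequality and the $\tfrac 18(a-b)^2$ term bounded by $O(m h^2)$ via a direct norm estimate on $f_{\hat W} - f_{\tilde W}$) combine into the stated $\overline C m \cdot h^2$ term.

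The main technical obstacle is the flipped-neuron count. Activation patterns at layer $l$ depend on all earlier layer weights, and the bound must hold simultaneously for every $x \in S^{d-1}$ and every pair $(\hat W, \tilde W) \in \calW(W,\tau)^2$. This requires a covering argument on both the input sphere and the weight ball, combined with matrix-concentration for the Gaussian initialization, to upgrade the pointwise $O(\tau^{2/3})$ fraction of flips into a uniform statement. The residual architecture pays off exactly here: the $\theta = 1/\Omega(L)$ scaling ensures that a perturbation at one layer enters every subsequent hidden representation with a small multiplicative factor, so the cumulative effect on hidden activations, on the interlayer operators, and on the flipped-neuron count all remain bounded by a constant in $L$, rather than compounding to a polynomial in $L$ as they do in the fully connected analysis.
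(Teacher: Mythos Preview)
Your proposal has the right overall shape but contains a genuine gap in the step that produces the $\tau^{1/3}\sqrt{m\log m}$ factor.

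\textbf{Where the argument breaks.} You write that the first-order error is obtained by ``multiplying the number of flips by $\|\hat W_l - \tilde W_l\|_2$, by the $O(1)$ bounds on $\|x_l\|_2$ and $\|H_l^{l'}\|_2$.'' But this arithmetic does not yield $\tau^{1/3}\sqrt{m\log m}$. If you bound the flipped-neuron contribution $v^\top \tilde H_{l+1}^{L+1}\bigl(\hat\Sigma_l - \tilde\Sigma_l\bigr)\hat W_l^\top \hat x_{l-1}$ via Cauchy--Schwarz using only $\|v\|_2 = \sqrt m$ and $\|\tilde H_{l+1}^{L+1}\|_2 = O(1)$ from Lemma~\ref{lemma:hidden.and.interlayer.activations.bounded}, you get a term of order $\sqrt m \cdot h(\hat W,\tilde W)$, not $\tau^{1/3}\sqrt{m\log m}\cdot h$. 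That looser bound is fatal downstream: in the proof of Theorem~\ref{theorem:gen.wk.in.tau} the first-order error must be beaten by the gradient lower bound $\eta m\gamma^4\calE_S^2$, and a $\sqrt m\cdot h$ term would produce $\eta m\calE_S^2$ with a constant that cannot be made small by shrinking $\tau$.

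\textbf{What is actually needed.} The paper's proof relies on a separate concentration lemma (Lemma~\ref{lemma:tau.nhood.sparse.rhs}): for any unit vector $a$ with $\|a\|_0\leq s$, one has $|v^\top \tilde H_{l+1}^{L+1}a| \leq C(\tau\sqrt m + \sqrt{s\log m})$ with high probability, uniformly in $x$ and $l$. This is proved by a Gaussian-tail plus union-bound argument over all $s$-sparse coordinate subspaces, which is where the $\sqrt{\log m}$ enters. Applying it with $s = O(m\tau^{2/3})$ from the flipped-neuron count gives exactly $\tau^{1/3}\sqrt{m\log m}$. Your covering argument on inputs and weight balls does not substitute for this; the sparsity of $\hat\Sigma_l - \tilde\Sigma_l$ must be exploited in the inner product with $v^\top \tilde H$, not merely in counting flips.

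\textbf{A secondary issue.} Your hybrid telescoping $W^{[l]}$ does not directly isolate $\nabla_{W_l}f_{\tilde W}$: the linear piece at layer $l$ carries $\hat x_{l-1}$ (not $\tilde x_{l-1}$) and the activation pattern of a mixed configuration, so you still have to swap these back to $\tilde W$ and absorb the correction into the error terms. The paper avoids this by telescoping $\hat H_l^{L+1} - \tilde H_l^{L+1}$ from the top down, which naturally leaves $\tilde H_{l+1}^{L+1}\tilde\Sigma_l(\hat W_l-\tilde W_l)^\top\tilde x_{l-1}$ as the exact gradient term. Your decomposition can be made to work, but it is not as clean as you suggest.
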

The semismoothness of the neural network output function $f_W$ will be used in the analysis of generalization by Rademacher complexity arguments.  For the objective loss $L_S$, we apply this lemma for weights along the trajectory of gradient descent.  Since the difference in the weights of two consecutive steps of gradient descent satisfy $W_l^{(k+1)}-W_l^{(k)} = - \eta \nabla_{W_l} L_S(W^{(k)})$, the last term in the bound for the objective loss $L_S$ will take the form $-\eta  \summ l {L+1} \pnorm{\nabla_{W_l} L_S(W^{(k)})}F^2$.  Thus by simultaneously demonstrating (i) a lower bound for the gradient for at least one of the layers and (ii) an upper bound for the gradient at all layers (and hence an upper bound for $h(W^{(k+1)}, W^{(k)})$), we can connect the empirical surrogate loss $\calE_S(W^{(k)})$ at iteration $k$ with that of the objective loss $L_S(W^{(k)})$ that will lead us to Theorem \ref{theorem:gen.wk.in.tau}.    Compared with the fully connected architecture of \citet{cao2019}, our bounds do not have any polynomial terms in $L$.  

Thus the only remaining key items needed for our proof are upper bounds and lower bounds for the gradient of the objective loss, described in the following two lemmas.  

\begin{lemma}
\label{lemma:gradient.lower.bound}
Let $W = (W_1, \dots, W_{L+1})$ be weights at Gaussian initialization.   There exist absolute constants $C, \underline{C}, \nu$ such that for any $\delta >0$, provided $\tau \leq \nu \gamma^3$ and $m\geq C\gamma^{-2} \l( d \log \gamma^{-1} + \log (L /\delta)\r) \vee C \log (n/\delta)$, then with probability at least $1-\delta$, for all $\tilde W \in \calW(W, \tau)$, we have
\[ \pnorm{\nabla_{W_{L+1}} L_S(\tilde W)}F^2 \geq \underline{C} \cdot m_{L+1} \cdot \gamma^4 \cdot \calE_S(\tilde W)^2.\]
\end{lemma}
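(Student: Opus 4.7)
The plan is to apply Cauchy--Schwarz against a test direction $U \in \R^{m_L \times m_{L+1}}$. Using the gradient formula~\eqref{eq:gradient.formulas} with $l=L+1$ (noting $H_{L+2}^{L+1}=I$), one has
\[
\langle U, \nabla_{W_{L+1}} L_S(\tilde W) \rangle \;=\; -\f 1 n \summ i n \l[-\ell'(y_i f_{\tilde W}(x_i))\r] \cdot y_i \, g_{\tilde W}(x_i),
\]
where $g_{\tilde W}(x) := v^\top \tilde \Sigma_{L+1}(x) U^\top \tilde x_L$ is a ``pseudo-network output.'' If I can construct $U$ with $\pnorm{U}{F}^2 = O(1/m_{L+1})$ such that $y\, g_{\tilde W}(x) \geq \gamma'$ uniformly over $(x,y) \in \supp(\calD)$ and $\tilde W \in \calW(W, \tau)$, then the inner product is at most $-\gamma' \calE_S(\tilde W)$ in magnitude, and Cauchy--Schwarz delivers $\pnorm{\nabla_{W_{L+1}} L_S(\tilde W)}{F}^2 \geq (\gamma')^2 \calE_S(\tilde W)^2 / \pnorm{U}{F}^2$.

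The construction of $U$ uses the random-features interpretation of Assumption~\ref{assumption:separability}. Write the separator as $f(x) = \int c(u)\sigma(u^\top x) p(u)\,\mathrm du$ with $yf(x) \geq \gamma$; since $u_j := \sqrt{m_1/2}\,w_{1,j}^{(0)} \iid N(0, I_d)$, the vector $a \in \R^{m_1}$ with $a_j := c(u_j)/\sqrt{2 m_1}$ satisfies $a^\top \sigma(W_1^{(0)\top}x) \approx f(x)$ uniformly on $S^{d-1}$ up to error $\gamma/4$, by a Hoeffding bound at each $x$ combined with an $\eps$-net argument (this is the source of both the $d\log\gamma^{-1}$ and $\gamma^{-2}$ factors in the width requirement). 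Setting $U_{j,k} := (2 v_k / m_{L+1})\, a_j$ then gives $\pnorm{U}{F}^2 = O(1/m_{L+1})$ since $\pnorm{a}{2} = O(1)$, together with the factorization
\[
g_{\tilde W}(x) \;=\; \f{2}{m_{L+1}} \l(\summ k {m_{L+1}} \sigma'(\tilde w_{L+1,k}^\top \tilde x_L)\r) \cdot (a^\top \tilde x_L)
\]
into an ``active-last-layer-neuron fraction'' times a pseudo-prediction $a^\top \tilde x_L$.

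To lower bound $y\, g_{\tilde W}(x)$ by a positive multiple of $\gamma^2$, I would show (i) the scalar prefactor lies in $[1/2,3/2]$, and (ii) $a^\top \tilde x_L$ is within $o(\gamma)$ of $f(x)$. For (i), the $\sigma'(w_{L+1,k}^{(0)\top} x_L^{(0)})$ are i.i.d.\ $\Ber(1/2)$ by symmetry of $w_{L+1,k}^{(0)}$, so Hoeffding plus a union bound over the $n$ training points yields concentration (accounting for the $\log(n/\delta)$ requirement), and perturbing $W^{(0)}$ to $\tilde W$ flips only an $O(\tau)$ fraction of these indicators by an argument analogous to Lemma~5.5 of \citet{cao2019}. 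For (ii), decompose
\[
a^\top \tilde x_L - f(x) \;=\; \l[a^\top x_1^{(0)} - f(x)\r] \;+\; a^\top\!\l(x_L^{(0)} - x_1^{(0)}\r) \;+\; a^\top\!\l(\tilde x_L - x_L^{(0)}\r).
\]
The first summand is controlled by the random-features approximation. The second, equal to $\theta\sum_{l=2}^L a^\top \sigma(W_l^{(0)\top} x_{l-1}^{(0)})$, is $o(\gamma)$ by Bernstein-type concentration (using independence of $a$ from $W_l^{(0)}$ for $l\geq 2$ and $\theta = 1/\Omega(L)$). The third is bounded by $\pnorm{a}{2}\cdot \pnorm{\tilde x_L - x_L^{(0)}}{2} = O(\tau)$ via forward-propagating the weight perturbation through the residual blocks using the bounds on $H_l^{l'}$ from Lemma~\ref{lemma:hidden.and.interlayer.activations.bounded}; the hypothesis $\tau \leq \nu \gamma^3$ is precisely what makes this $o(\gamma)$. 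Combining (i) and (ii) gives $y\, g_{\tilde W}(x) = \Omega(\gamma^2)$ once all constants are tracked, and Cauchy--Schwarz delivers the claimed $\gamma^4 m_{L+1}$ lower bound.

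The main obstacle is executing step (ii) uniformly over $(x,y) \in \supp(\calD)$ \emph{and} over $\tilde W$ in a $\tau$-neighborhood: the random-features term needs an $\eps$-net on $S^{d-1}$ (hence $d\log \gamma^{-1}$), the residual-initialization term needs Bernstein concentration along the residual path at initialization without picking up depth factors, and the weight-perturbation term needs a propagation bound for $\tilde x_L - x_L^{(0)}$ that uses the depth-independent spectral bounds of Lemma~\ref{lemma:hidden.and.interlayer.activations.bounded}. The tightness of $\tau \leq \nu\gamma^3$ and the $\gamma^{-2}$ scaling in the width requirement are exactly what ensure that each of these error sources is dominated by a fraction of $\gamma$.
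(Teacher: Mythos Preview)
Your approach is correct and takes a genuinely different (and cleaner) route than the paper. Both arguments rest on the same core fact---Proposition~\ref{proposition:gen.linsep} in the paper, which is exactly your random-features separator $a$ satisfying $y\,a^\top x_L \geq \gamma/4$---but they deploy it differently. The paper works column-by-column: it first proves Lemma~\ref{lemma:gen.lastlayer.lowerbound} (Jensen followed by Cauchy--Schwarz against $\alpha_L$) to lower-bound $\sum_j \pnorm{g_j}{2}^2$ at the \emph{initialization} activations, and then uses a counting argument over index sets $A$ and $A'$ to transfer this to the perturbed weights $\tilde W$; the transfer loses an extra factor of $\gamma^2$ because $|A| = \Omega(m_{L+1}\gamma^2)$. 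Your single Cauchy--Schwarz against the rank-one test direction $U = a\,(2v^\top/m_{L+1})$ works directly at $\tilde W$, so the perturbation is absorbed into the $O(\tau)$ and $O(\tau^{2/3})$ error terms in (i)--(ii) rather than handled by a separate counting step.

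Two minor corrections. First, the fraction of last-layer activation patterns that flip under perturbation is $O(\tau^{2/3})$ by Lemma~\ref{lemma:tau.nhood.lipschitz.weights}, not $O(\tau)$; this is still $o(1)$ and harmless for (i). Second, combining (i) prefactor $\in [1/2,3/2]$ with (ii) $y\,a^\top \tilde x_L \geq \gamma/4$ actually gives $y\,g_{\tilde W}(x) = \Omega(\gamma)$, not $\Omega(\gamma^2)$; so Cauchy--Schwarz delivers $\pnorm{\nabla_{W_{L+1}} L_S(\tilde W)}{F}^2 = \Omega(m_{L+1}\gamma^2)\,\calE_S(\tilde W)^2$, a \emph{stronger} bound than the stated $\gamma^4$ and hence certainly sufficient. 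The paper's extra $\gamma^2$ loss is an artifact of its two-stage counting argument, which your direct approach sidesteps.
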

\begin{lemma}
\label{lemma:gradient.upper.bound}
Let $W = (W_1, \dots, W_{L+1})$ be weights at Gaussian initialization.  There exists an absolute constant $C>0$ such that for any $\delta>0$, provided $m\geq C \l( d \vee \log (L/\delta)\r)$ and $\tau \leq 1$, we have for all $\tilde W \in \calW(W,\tau)$ and all $l$,
\[ \pnorm{\nabla_{W_l} L_S(\tilde W)}F \leq \theta^{\ind(2 \leq l \leq L)}\cdot  C\sqrt{m} \cdot \calE_S(\tilde W).\]
\end{lemma}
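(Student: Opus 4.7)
The plan is to reduce the loss gradient to an expression involving the network output gradient $\nabla_{W_l} f_{\tilde W}(x)$ via the chain rule, derive a uniform-in-$x$ and uniform-in-$\tilde W \in \calW(W,\tau)$ bound on $\pnorm{\nabla_{W_l} f_{\tilde W}(x)}F$ of the claimed order $\theta^{\ind(2 \leq l \leq L)} \cdot O(\sqrt m)$, and then fold the remaining $|\ell'(\cdot)|$ factors into $\calE_S(\tilde W)$ using the fact that $\ell'$ is negative.

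\textbf{Step 1 (Chain rule and loss-derivative.)} Differentiating $L_S$ yields
\[
\nabla_{W_l} L_S(\tilde W) = \frac{1}{n}\sum_{i=1}^n \ell'(y_i f_{\tilde W}(x_i))\, y_i\, \nabla_{W_l} f_{\tilde W}(x_i),
\]
so by the triangle inequality and $|y_i|=1$,
\[
\pnorm{\nabla_{W_l} L_S(\tilde W)}F \leq \frac{1}{n}\sum_{i=1}^n |\ell'(y_i f_{\tilde W}(x_i))| \cdot \pnorm{\nabla_{W_l} f_{\tilde W}(x_i)}F.
\]
Since $\ell'(z)=-1/(1+e^z)<0$, we have $|\ell'(y_i f_{\tilde W}(x_i))| = -\ell'(y_i f_{\tilde W}(x_i))$, and averaging over $i$ gives exactly $\calE_S(\tilde W)$. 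It therefore suffices to produce a deterministic bound of order $\theta^{\ind(2 \leq l \leq L)} \sqrt m$ on $\pnorm{\nabla_{W_l} f_{\tilde W}(x)}F$ that holds uniformly over $x\in S^{d-1}$ and $\tilde W \in \calW(W,\tau)$.

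\textbf{Step 2 (Rank-one factorization of the network gradient.)} Using the explicit formula \eqref{eq:gradient.formulas}, $\nabla_{W_l} f_{\tilde W}(x) = \theta^{\ind(2 \leq l \leq L)}\, \tilde x_{l-1}\, v^\top \tilde H_{l+1}^{L+1} \tilde\Sigma_l(x)$ is a rank-one outer product (tilde indicating evaluation at $\tilde W$), so its Frobenius norm factorizes as
\[
\pnorm{\nabla_{W_l} f_{\tilde W}(x)}F = \theta^{\ind(2 \leq l \leq L)} \pnorm{\tilde x_{l-1}}2 \cdot \pnorm{v^\top \tilde H_{l+1}^{L+1} \tilde\Sigma_l(x)}2.
\]
Bounding the second factor by $\pnorm v 2 \cdot \pnorm{\tilde H_{l+1}^{L+1}}2 \cdot \pnorm{\tilde\Sigma_l(x)}2$, we note $\pnorm{\tilde\Sigma_l(x)}2 \leq 1$ trivially and $\pnorm v 2 = \sqrt{m_{L+1}} = \Theta(\sqrt m)$ by Assumption~\ref{assumption:width.same.order}. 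What remains is to bound $\pnorm{\tilde x_{l-1}}2$ and $\pnorm{\tilde H_{l+1}^{L+1}}2$ by $O(1)$.

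\textbf{Step 3 (Perturbation of Lemma~\ref{lemma:hidden.and.interlayer.activations.bounded} to $\calW(W,\tau)$.)} This is the main technical obstacle. At initialization, Lemma~\ref{lemma:hidden.and.interlayer.activations.bounded} gives $\pnorm{x_{l-1}}2 \leq C_2$ and $\pnorm{H_{l+1}^{L+1}}2 \leq C_2$ with high probability on the event $m\geq C_0 d\log(mL/\delta)$, which matches the width hypothesis $m \geq C(d \vee \log(L/\delta))$ of the lemma up to log factors absorbed into the constant. To move from $W$ to $\tilde W \in \calW(W,\tau)$, I would show inductively in $l$ that since $\pnorm{\tilde W_l - W_l}F \leq \tau \leq 1$ and each residual block has spectral norm $1 + \theta\pnorm{\Sigma_l \tilde W_l^\top}2 = 1 + O(\theta \sqrt m \cdot \text{(polylog)})$—or more precisely, that the perturbed activations $\tilde x_l$ and perturbed interlayer maps $\tilde H_{l+1}^{L+1}$ stay within $O(\tau)$ of their initialization counterparts in the relevant norms. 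Because $\theta = 1/\Omega(L)$ and the bounds at initialization do not accumulate in $L$, the perturbed quantities remain bounded by absolute constants (with the same high-probability event).

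\textbf{Step 4 (Assemble.)} Combining Steps 2 and 3 yields $\pnorm{\nabla_{W_l} f_{\tilde W}(x)}F \leq \theta^{\ind(2 \leq l \leq L)} \cdot C\sqrt m$ uniformly on $S^{d-1}\times \calW(W,\tau)$, and plugging this into Step 1 gives the claim. The principal difficulty is the perturbation argument in Step 3: controlling the spectral norms of products of $L$ matrices of the form $I+\theta \tilde\Sigma_r \tilde W_r^\top$ in a neighborhood of initialization without accumulating factors that grow with $L$. The scaling $\theta = 1/\Omega(L)$ and the fact that the initialization-level bound is independent of $L$ (Lemma~\ref{lemma:hidden.and.interlayer.activations.bounded}) are precisely what make this work; the estimate should parallel the perturbation step already used to prove Lemma~\ref{lemma:hidden.and.interlayer.activations.bounded} in the $\tau$-neighborhood version needed for Lemma~\ref{lemma:semismoothness}, and I would reuse those activation- and interlayer-stability bounds directly.
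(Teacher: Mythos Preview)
Your proposal is correct and follows essentially the same route as the paper: write $\nabla_{W_l} L_S(\tilde W)$ via the chain rule, use the rank-one structure of $\nabla_{W_l} f_{\tilde W}(x)$ from \eqref{eq:gradient.formulas}, bound $\pnorm{v}2=\Theta(\sqrt m)$ and $\pnorm{\tilde\Sigma_l}2\leq 1$, and control $\pnorm{\tilde x_{l-1}}2$ and $\pnorm{\tilde H_{l+1}^{L+1}}2$ by $O(1)$ in the $\tau$-neighborhood before folding $|\ell'|$ into $\calE_S(\tilde W)$. One small slip: in Step~3 you write $\pnorm{\tilde\Sigma_l\tilde W_l^\top}2 = O(\sqrt m\cdot\text{polylog})$, which would be fatal; in fact $\pnorm{\tilde W_l}2\leq \pnorm{W_l}2+\tau=O(1)$ by Lemma~\ref{lemma:init.weight.norm}, and this is exactly what makes the product $\prod_{r}(I+\theta\tilde\Sigma_r\tilde W_r^\top)$ bounded independently of $L$---your ``or more precisely'' clause and your pointer to the perturbation lemmas used for Lemma~\ref{lemma:semismoothness} (namely Lemmas~\ref{lemma:init.intermediate} and~\ref{lemma:tau.nhood.lipschitz.weights}) recover the correct argument.
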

Note that we provide only a lower bound for the gradient at the last layer.  It may be possible to improve the degrees of the polynomial terms of the results in Theorems \ref{theorem:gen.wk.in.tau} and \ref{theorem:gen.gap} by deriving lower bounds for the other layers as well.

With all of the key lemmas in place, we can proceed with a proof sketch of Theorems \ref{theorem:gen.wk.in.tau} and \ref{theorem:gen.gap}.  The complete proofs can be found in Appendix \ref{appendix:theorem.proofs}.

\begin{proof}[Proof of Theorem \ref{theorem:gen.wk.in.tau}]
Consider $h_k = h(W^{(k+1)}, W^{(k)})$, a quantity that measures the distance of the weights between gradient descent iterations.  It takes the form
\[ h_k = \eta \l[ \pnorm{\nabla_{W_1} L_S(W^{(k)})}2 + \theta \summm l 2 L \pnorm{\nabla_{W_l} L_S(W^{(k)})}2 + \pnorm{\nabla_{W_{L+1}} L_S(W^{(k)})}2 \r].\]
By Lemma \ref{lemma:gradient.upper.bound} we can show that $h_k \leq C \eta \sqrt{m} \calE_S(W^{(k)})$.  The gradient lower bound in Lemma \ref{lemma:gradient.lower.bound} substituted into Lemma \ref{lemma:semismoothness} shows that the dominating term in the semismoothness comes from the gradient lower bound, so that we have for any $k$,
\[ L_S(W^{(k+1)}) - L_S(W^{(k)}) \leq -C \cdot \eta \cdot m_{L+1} \cdot \gamma^4 \cdot \calE_S(W^{(k)})^2.\]
We can telescope the above over $k$ to get a bound on the loss at iteration $k$ in terms of the bound on the r.h.s. and the loss at initialization.  A simple concentration argument shows that the loss at initialization is small with mild overparameterization.  By letting $k^* = \mathrm{argmin}_{[K-1]} \calE_S(W^{(k)})^2$, we can thus show
\begin{equation*}
\calE_S(W^{(k^*)}) \leq C_3 \l( K \eta \cdot m \r)^{-\f 12} \l( L_S(W^{(0)})\r)^{\f 12} \cdot \gamma^{-2} \leq C_3 \l( K \eta \cdot m \r)^{-\f 12} \l( \log \f n \delta\r)^{\f 14} \cdot \gamma^{-2}.
\end{equation*}
\end{proof}

We provide below a proof sketch of the bound for the Rademacher complexity given in Theorem \ref{theorem:gen.gap}, leaving the rest for Appendix \ref{appendix:theorem.gen.gap.proof}.
\begin{proof}[Proof of Theorem \ref{theorem:gen.gap}]
Let $\xi_i$ be independent Rademacher random variables.  We consider a first-order approximation to the network output at initialization,
\[ F_{W^{(0)}, W}(x) := f_{W^{(0)}}(x) + \summ l {L+1} \tr \l[ \l(W_l - W_l^{(0)} \r)^\top \nabla_{W_l} f_{W^{(0)}} (x)\r],\]
and bound the Rademacher complexity by two terms,
\begin{align*}
\hat {\mathfrak{R}}_S [ \calF(W^{(0)}, \tau) ] &\leq \E_\xi \l[ \sup_{W \in  \calW(W^{(0)}, \tau)} \f 1 n \summ i n \xi_i [f(x_i)-F_{W^{(0)}, W}(x_i)] \r] \\
&\qquad+ \E_\xi \l[\sup_{W \in  \calW(W^{(0)}, \tau)} \f 1 n \summ i n \xi_i \summ l {L+1} \tr \l[ \l(W_l - W_l^{(0)} \r)^\top \nabla_{W_l} f_{W^{(0)}} (x)\r] \r]
\end{align*}
For the first term, taking $\tilde W = W^{(0)}$ in Lemma \ref{lemma:semismoothness} results in $|f_W(x) - F_{W^{(0)}, W}(x)| \leq C_3 \tau^{\f 4 3} \sqrt{m\log m}$.  For the second term, since $\pnorm{AB}F \leq \pnorm{A}F\pnorm{B}2$, we reduce this term to a product of two terms.  The first involves the norm of the distance of the weights from initialization, which is $\tau$.  The second is the norm of the gradient at initialization, which can be taken care of by using Cauchy--Schwarz and the gradient formula \eqref{eq:gradient.formulas} to get $\pnorm{\nabla_{W_l} f_{W^{(0)}}}F \leq C_2\theta^{\ind(2\leq \ell \leq L)} \sqrt{m}$.  A standard application of Jensen inequality gives the $1/\sqrt{n}$ term.
\end{proof}
Finally, we can put together Theorems \ref{theorem:gen.wk.in.tau} and \ref{theorem:gen.gap} by appropriately choosing the scale of $\tau$, $\eta$, and $K$ to get Corollary \ref{corollary:classification.error}.  We leave the detailed algebraic calculations for Appendix \ref{appendix:corollary.proof}.
\begin{proof}[Proof of Corollary \ref{corollary:classification.error}]
We need only specify conditions on $\tau,\eta, K\eta$, and $m$ such that the results of Theorems \ref{theorem:gen.wk.in.tau} and \ref{theorem:gen.gap} will hold, and making sure that each of the four terms in \eqref{eq:thm:gen.gap.bound} are of the same scale.  This can be satisfied by imposing the condition $K\eta = \nu '' \gamma^4 \tau^2 \l( \log(n/\delta)\r)^{-\f 12}$ and
\begin{align*}
C_3 \l( K \eta m\r)^{-\f 12} \l( \log (n/\delta)\r)^{\f 14} \cdot \gamma^{-2} = C_2 \tau^{\f 43} \sqrt{m \log m} = C_2 \tau \sqrt{m/n} = C_3 \sqrt{\log(1/\delta)/n} &=\eps/4.
\end{align*}
\end{proof}

\section{Conclusions}
\label{sec:conclusion}
In this paper, we derived algorithm-dependent optimization and generalization results for overparameterized deep residual networks trained with random initialization using gradient descent.  We showed that this class of networks is both small enough to ensure a small generalization gap and also large enough to achieve a small training loss.  Important to our analysis is the insight that the introduction of skip connections allows for us to essentially ignore the depth as a complicating factor in the analysis, in contrast with the well-known difficulty of achieving nonvacuous generalization bounds for deep non-residual networks.  This provides a theoretical understanding for the increased stability and generalization of deep residual networks over non-residual ones observed in practice.   

\section*{Acknowledgement}
This research was sponsored in part by the National Science Foundation IIS-1903202 and IIS-1906169. QG is also partially supported by the Salesforce Deep Learning Research Grant. The views and conclusions contained in this paper are those of the authors and should not be interpreted as representing any funding agencies.

\appendix
\section{Proofs of Main Theorems and Corollaries}
\label{appendix:theorem.proofs}
\subsection{Proof of Theorem \ref{theorem:gen.wk.in.tau}}
\label{appendix:theorem.wk.in.tau.proof}
We first show that $W^{(k)} \in \calW(W^{(0)}, \tau/2)$ for all $k\leq K$ satisfying $K\eta \leq \nu''\tau^2 \gamma^4(\log(n/\delta))^{-1/2}$.  Suppose $W^{(k')}\in \calW(W^{(0)}, \tau/2)$ for all $k'=1,\dots, k-1$.  By Lemma \ref{lemma:gradient.upper.bound}, we have
\[ \pnorm{\nabla_{W_{l}} L_S(W^{(k')})}F \leq C_1 \theta^{\ind(2 \leq l \leq L)} \sqrt{m}\cdot \calE_S(W^{(k')}).\]
Since $\eta \sqrt{m} \leq \nu' \tau$ and $\calE_S(\cdot)\leq 1$, we can make $\nu'$ small enough so that we have by the triangle inequality
\begin{equation}
\pnorm{W_l^{(k)} - W_l^{(0)}}F \leq \eta \pnorm{\nabla_{W_l} L_S(W^{(k-1)})}F + \f \tau 2 \leq \tau.
\label{eq:wlk-wl0-tau}
\end{equation}
Therefore we are in the $\tau$-neighborhood that allows us to apply the bounds described in the main section.  Define
\[ h_k := \eta \l[ \pnorm{\nabla_{W_1} L_S(W^{(k)})}2 + \theta \summm l 2 L \pnorm{\nabla_{W_l} L_S(W^{(k)})}2 + \pnorm{\nabla_{W_{L+1}} L_S(W^{(k)})}2 \r].\]
Then using the upper bounds for the gradient given in Lemma \ref{lemma:gradient.upper.bound}, we have
\begin{equation}
h_k \leq \eta \l[ C \sqrt{m} \calE_S(W^{(k)}) + \theta \summm l 2 L\l( \theta \sqrt{m} \calE_S(W^{(k)}) \r) + C \sqrt{m} \calE_S(W^{(k)})\r]\leq C' \eta \sqrt{m} \calE_S(W^{(k)}).
\label{eq:hk.upper.bound}
\end{equation}
Notice that $h_k=h(W^{(k+1)}, W^{(k)})$ where $h$ is from Lemma \ref{lemma:semismoothness}. Hence, we have
\begin{align*}
&L_S(W^{(k+1)}) - L_S(W^{(k)}) \\
&\leq C \tau^{\f 13} \sqrt{m \log m} \cdot h_k \cdot \calE_S(W^{(k)}) + C m h_k^2 - \eta \summ l {L+1} \pnorm{\nabla_{W_l} L_S(W^{(k)}) }F^2\\
&\leq C \eta \tau^{\f 13} \sqrt{m \log m} \cdot \sqrt{m} \cdot \calE_S(W^{(k)})^2 + C m^2 \eta^2 \cdot \calE_S(W^{(k)})^2 - C\eta \cdot m_{L+1} \cdot \gamma^4 \cdot \calE_S(W^{(k)})^2\\
&\leq \calE_S(W^{(k)})^2 \cdot \l( C_1 \eta \tau^{\f 1 3} m \sqrt{\log m} +  C_2 m^2 \cdot \eta^2  - C_3 \eta \cdot m_{L+1} \cdot \gamma^4 \r)
\end{align*}
The first inequality follows by Lemma \ref{lemma:semismoothness} and since $\tr(A^\top A) = \pnorm{A}F^2$.  The second inequality uses the lower bound for the gradient given in Lemma \ref{lemma:gradient.lower.bound} and \eqref{eq:hk.upper.bound}.  Therefore, if we take $\tau^{\f 13} \sqrt{\log m} \leq \nu^{\f 13} \gamma^4$, i.e. $\tau \leq \nu \cdot \gamma^{12} \l(\log m\r)^{-\f 32}$ for some small enough constant $\nu$, and if we take $\eta \leq \nu' \cdot \gamma^4 m^{-1}$, then there is a constant $C>0$ such that
\begin{equation}
L_S(W^{(k+1)}) - L_S(W^{(k)}) \leq -C \cdot \eta \cdot m_{L+1} \cdot \gamma^4 \cdot \calE_S(W^{(k)})^2.
\label{eq:empirical.surrogate.loss.small}
\end{equation}

Re-writing this we have
\begin{equation}
\calE_S(W^{(k)})^2 \leq C \gamma^{-4} \l( \eta m_{L+1}\r)^{-1} \l( L_S(W^{(k)}) - L_S(W^{(k+1)})\r).
\label{eq:empirical.surrogate.bounded.by.loss}
\end{equation}
Before completing this part of the proof, we will need the following bound on the loss at initialization: 
\begin{equation}
L_S(W^{(0)}) \leq C \sqrt{\log \f  n \delta}.
\label{eq:loss.at.init.bounded}
\end{equation}
To see this, we notice that $f_W(x_i)$ is a sum of $m_{L+1}/2$ independent random variables (conditional on $x_{L,i}$), 
\[ f_W(x_i) = \summ j {m_{L+1}/2} \l[ \sigma(w_{L+1,j}^\top x_{L,i}) - \sigma(w_{L+1,j+m_{L+1}/2}^\top x_{L,i})\r]. \]
Applying the upper bound for $\pnorm{x_{L+1}}2$ given by Lemma \ref{lemma:hidden.and.interlayer.activations.bounded} and Hoeffding inequality gives a constant $C_1>0$ such that with probability at least $1-\delta$, $|f_{W^{(0)}}(x_i)| \leq C_1 \sqrt{\log (n/\delta)}$ for all $i\in [n]$.  
Since $\ell(z) = \log(1 + \exp(-z)) \leq |z|+1$ for all $z\in \R$, we have
\[ L_S(W^{(0)}) = \f 1 n \summ i n \ell(y_i \cdot f_{W^{(0)}} (x_i)) \leq 1 + C_1 \sqrt{\log \f n \delta} \leq C \sqrt{\log(n/\delta)}.\]
We can thus bound the distance from initialization by
\begin{align*}
\pnorm{W_l^{(k)} - W_l^{(0)}}F &\leq \eta \summm {k'}0{k-1} \pnorm{\nabla_{W_l} L_S(W^{(k')})}F\\
&\leq C\eta \sqrt m \summm {k'}0 {k-1} \calE_S(W^{(k')})\\
&\leq C \eta \sqrt{m} \sqrt{k} \sqrt{ \gamma^{-4} \l( \eta m_{L+1}\r)^{-1} \summm {k'}0{k-1}\l( L_S(W^{(k)}) - L_S(W^{(k+1)})\r)}\\
&\leq C\sqrt{k\eta} \cdot \gamma^{-2} \l( \log \f n \delta\r)^{\f 14}\\
&\leq \f \tau 2.
\end{align*}
The first line comes from the definition of gradient descent and the triangle inequality.  For the second line, \eqref{eq:wlk-wl0-tau} allows us to apply Lemma \ref{lemma:gradient.upper.bound}.  The third line follows by Cauchy--Schwarz and \eqref{eq:empirical.surrogate.bounded.by.loss}.  The next line follows by \eqref{eq:loss.at.init.bounded}, and the last since $k\eta \leq \nu'' \tau^2 \gamma^4 (\log (n/\delta))^{-\f 12}$.  This completes the induction and shows that $W^{(k)} \in \calW(W^{(0)}, \tau)$ for all $k\leq K$.

For the second part of the proof, we want to derive an upper bound on the lowest empirical surrogate error over the trajectory of gradient descent.  Since we have shown that $W^{(k)} \in \calW(W^{(0)}, \tau/2)$ for $k\leq K$, \eqref{eq:empirical.surrogate.loss.small} and \eqref{eq:loss.at.init.bounded} both hold.  Let $k^* = \mathrm{argmin}_{k\in \{0,\dots, K-1\}} \calE_S(W^{(k)})^2$.  Then telescoping \eqref{eq:empirical.surrogate.loss.small} over $k$ yields
\begin{align}
\nonumber
L_S(W^{(K)}) - L_S(W^{(0)}) &\leq -C \cdot \eta \cdot m_{L+1} \cdot \gamma^4 \cdot \summ k K\calE_S(W^{(k)})^2\\
&\leq -C \cdot K\eta \cdot m_{L+1} \cdot \gamma^4 \cdot \calE_S(W^{(k^*)})^2.
\nonumber
\end{align}
Rearranging the above gives
\begin{equation*}
\calE_S(W^{(k^*)}) \leq C_3 \l( K \eta \cdot m \r)^{-\f 12} \l( L_S(W^{(0)})\r)^{\f 12} \cdot \gamma^{-2} \leq C_3 \l( K \eta \cdot m \r)^{-\f 12} \l( \log \f n \delta\r)^{\f 14} \cdot \gamma^{-2},
\end{equation*}
where we have used that $L_S(\cdot)$ is always nonnegative in the first inequality and \eqref{eq:loss.at.init.bounded} in the second.

\subsection{Proof of Theorem \ref{theorem:gen.gap}}
\label{appendix:theorem.gen.gap.proof}
Denote $\calF(W^{(0)}, \tau) = \{ f_W(x) : W\in \calW(W^{(0)}, \tau)\}$, and recall the definition of the empirical Rademacher complexity,
\begin{equation}
\hat {\mathfrak{R}}_S [\calF(W^{(0)}, \tau)]= \E_\xi \l[ \sup_{f\in \calF(W^{(0)}, \tau)} \f 1 n \summ i n \xi_i f(x_i) \r] =  \E_\xi \l[ \sup_{W \in  \calW(W^{(0)}, \tau)} \f 1 n \summ i n \xi_i f(x_i) \r],
\label{eq:gen.empirical.rademacher.defn}
\end{equation}
where $\xi = (\xi_1, \dots, \xi_n)^\top$ is an $n$-dimensional vector of i.i.d. $\xi_i \sim \Unif(\{ \pm 1 \})$.  Since $y\in \{\pm 1\}$, $|\ell'(z)|\leq 1$ and $\ell'(\cdot)$ is $1$-Lipschitz, standard uniform convergence arguments (see, e.g., \citet{shalevschwartz}) yield that with probability at least $1-\delta$,
\begin{align*}
\sup_{W\in \calW(W^{(0)}, \tau)} \l| \calE_S(W) - \calE_\calD(W)\r| &\leq 2 \E_{S} \hat {\mathfrak{R}}_S\l[\calF(W^{(0)}, \tau)\r] + C_1 \sqrt{ \f{\log(1/\delta)}n}.
\end{align*}
Since $-\ell'(x) = (1+\exp(-x))^{-1}$ satisfies $-\ell'(x) < \f 1 2$ if and only if $x < 0$, Markov's inequality gives
\begin{align*}
\P_{(x,y)\sim D} \l( y\cdot f_W(x) < 0\r) &\leq 2 \E_{(x,y)\sim \calD} \l( - \ell'(y\cdot f_W(x)) \r) = 2 \calE_D(W),
\end{align*}
so that it suffices to get a bound for the empirical Rademacher complexity \eqref{eq:gen.empirical.rademacher.defn}.  If we define
\[ F_{W^{(0)}, W}(x) := f_{W^{(0)}}(x) + \summ l {L+1} \tr \l[ \l(W_l - W_l^{(0)} \r)^\top \nabla_{W_l} f_{W^{(0)}} (x)\r],\]
then since $\sup_{a+b\in A+B} (a+b) \leq \sup_{a\in A} a + \sup_{b\in B} b$, we have
\begin{align*}
\hat{ \mathfrak{R}}_S [ \calF(W^{(0)}, \tau) ] &\leq \underbrace{\E_\xi \l[ \sup_{W \in  \calW(W^{(0)}, \tau)} \f 1 n \summ i n \xi_i [f(x_i)-F_{W^{(0)}, W}(x_i)] \r]}_{I_1} \\
&\qquad+ \underbrace{ \E_\xi \l[\sup_{W \in  \calW(W^{(0)}, \tau)} \f 1 n \summ i n \xi_i \summ l {L+1} \tr \l[ \l(W_l - W_l^{(0)} \r)^\top \nabla_{W_l} f_{W^{(0)}} (x)\r] \r]}_{I_2}\\
\end{align*}
For the $I_1$ term, we take $\tilde W = W^{(0)}$ in Lemma \ref{lemma:semismoothness} to get
\begin{align*}
|f_W(x) - F_{W^{(0)}, W}(x)| &\leq C \l[ \tau^{\f 4 3} \sqrt{m \log m} (2 + L \theta) \r] + C \tau^2 \sqrt{m} \l( 2 + L \theta \r) \\
&\leq C \tau^{\f 4 3} \sqrt{m\log m}.
\end{align*}

For $I_2$, since $\pnorm{AB}F \leq \pnorm{A}F\pnorm{B}2$, Lemma \ref{lemma:hidden.and.interlayer.activations.bounded} yields for all $l$ and any matrix $\xi$, 
\begin{equation}
\pnorm{x_l v^\top \cdot \xi}F \leq \pnorm{x_l v^\top}F \pnorm{\xi}2 \leq C \sqrt{m}\pnorm{\xi}2.
\nonumber
\end{equation}
Applying this to the gradient of $f$ at initialization given by \eqref{eq:gradient.formulas} and using Lemma \ref{lemma:hidden.and.interlayer.activations.bounded}, there is a constant $C_2$ such that
\begin{equation}
 \pnorm{\nabla_{W_l} f_{W^{(0)}}}F \leq  C_2 \theta^{\ind(2 \leq l \leq L)} \sqrt{m}.
\label{eq:gradient.norm.bound}
\end{equation}
We can therefore bound $I_2$ as follows:
\begin{align*}
I_2 &\leq \f \tau n \summ l {L+1} \E_\xi \pnorm{\summ i n \xi_i \nabla_{W_l} f_{W^{(0)}}(x_i) }F \\
&\leq \f \tau n \summ l {L+1} \sqrt{\E\pnorm{\summ i n \xi_i \nabla_{W_l} f_{W^{(0)}} (x_i)}F^2}\\
&= \f \tau n \summ l {L+1} \sqrt{\summ i n \pnorm{\nabla_{W_l} f_{W^{(0)}}(x_i)}F^2}\\
&\leq C \f \tau n \l( \sqrt{n m} + \summm l 2 L \sqrt{n m \theta^2} + \sqrt{nm}\r)\\
&\leq C \sqrt{\f m n} \tau.
\end{align*}
The first line above follows since $\tr(A^\top B) \leq \pnorm{A}F \pnorm{B}F$ and $W\in \calW(W^{(0)}, \tau)$.  The second comes from Jensen inequality, with the third since $\xi_i^2 =1$.   The fourth line comes from \eqref{eq:gradient.norm.bound}, with the final inequality by the scale of $\theta$.   This completes the proof.

\subsection{Proof of Corollary \ref{corollary:classification.error}}
\label{appendix:corollary.proof}
We need only specify conditions on $\tau,\eta, K\eta$, and $m$ such that the results of Theorems \ref{theorem:gen.wk.in.tau} and \ref{theorem:gen.gap} will hold, and such that each of the four terms in \eqref{eq:thm:gen.gap.bound} are of the same scale $\eps$.  To get the two theorems to hold, we need $\tau \leq \nu \gamma^{12} \l( \log m\r)^{-\f 32}$, $\eta \leq \nu' (\gamma^4 m^{-1} \wedge \tau m^{-\f 12})$, $K\eta \leq \nu'' \tau^2 \gamma^4 \l( \log (n/\delta)\r)^{-\f 12}$, and
\[ m \geq C\l( \gamma^{-2} d \log \f 1 \gamma \vee \gamma^{-2} \log \f L \delta \vee d \log \f L \delta \vee \tau^{-\f 43} d \log \f{L}{\tau \delta} \vee \tau^{-\f 23} (\log m)^{-1} \log \f L \delta \vee \log \f n \delta\r).\]
We now find the appropriate scaling by first setting the upper bound for the surrogate loss given in Theorem \ref{theorem:gen.wk.in.tau} to $\eps$ and then ensuring $\tau$ is such that the inequality required for $K\eta$ is satisfied:
\begin{align*}
C_3 \l( K \eta m\r)^{-\f 12} \l( \log (n/\delta)\r)^{\f 14} \cdot \gamma^{-2} &= \eps,\qquad K\eta = \nu '' \gamma^4 \tau^2 \l( \log(n/\delta)\r)^{-\f 12}.
\end{align*}
Substituting the values for $K\eta$ above, we get $C_4 m^{-\f 12} \gamma^{-2} \tau^{-1} \sqrt{\log(n/\delta)} = \eps$, so that 
\begin{equation}
\tau = C_6 \gamma^{-4} \eps^{-1} m^{-\f 12}\sqrt{\log (n/\delta)} .
\label{eq:gen.tau.scale}
\end{equation}
Let $\hat m$ be such that $\nu \gamma^{12} \l( \log m \r)^{-\f 32} = \tau$, so that $m (\log m)^{-3} = C\nu^{-2} \gamma^{-32}  \l( \log (n/\delta) \r) \eps^{-2}$.  It is clear that such a $\hat m$ can be written $\hat m = \tilde \Omega(\poly(\gamma^{-1}))\cdot \eps^{-2}$.  Finally we set
\[ m^* = \max\l( \hat m, d \log \f{mL}\delta, \tau^{-\f 43} \log \f{m}{\tau \delta}\r).\]
By \eqref{eq:gen.tau.scale} we can write $\tau^{-\f 43} \log(m/(\tau \delta)) = \gamma^{\f {16}3} \l( \log(n/\delta)\r)^{-\f 23} \eps^{\f 43} m^{\f 23} \log \l(  m^{3/2}\gamma^4 \eps (\log (n/\delta))^{-\f 12}/\delta\r)$.  Thus we can take
\[ m^* = \tilde \Omega(\poly(\gamma^{-1}) ) \cdot \max(d, \eps^{-2}) \cdot \log \f 1 \delta.\]
Using \eqref{eq:gen.tau.scale} we see that $K = C\gamma^{-4} \l( \log(n/\delta)\r)^{\f 12} \eps^{-2}$ and $\eta\leq \nu' \gamma^4 m^{-1}$ gives the desired forms of $K$ and $\eta$ as well as the first term of \eqref{eq:thm:gen.gap.bound}.  For the second term of \eqref{eq:thm:gen.gap.bound}, we again use \eqref{eq:gen.tau.scale} to get $\tau^{\f 43} \sqrt{m \log m} \leq C \gamma^{-\f{16}{3}} \l( \log (n/\delta)\r)^{\f 23} \eps^{-\f 43} m^{-\f 16} = R \eps^{-\f 43} m^{-\f 1 6}$ where $R = \tilde O(\poly(\gamma^{-1}))$.  Since $\eps^{-\f 43} m^{-\f 16} \leq \eps$ iff $m \geq \eps^{-14}$, this takes care of the second term in \eqref{eq:thm:gen.gap.bound}.  For the third term, we again use \eqref{eq:gen.tau.scale} to write $\tau \sqrt{m/n} = C \gamma^{-4} \sqrt{\log(n/\delta)}n^{-\f 12} \eps^{-1} \leq \eps$, which happens if $\sqrt{n/\log(n/\delta)}\geq C \eps^{-2} \gamma^{-4}$, i.e., $n = \tilde O(\mathrm{poly}(\gamma^{-1})) \eps^{-4}$.  For the final term of \eqref{eq:thm:gen.gap.bound}, it's clear that $\sqrt{\log(1/\delta) / n} \leq \eps$ is satisfied when $n \geq C \eps^{-2} \log(1/\delta)$, which is less stringent than the $\tilde O(\poly(\gamma^{-1}))\eps^{-4}$ requirement.

\section{Proofs of Key Lemmas}
\label{appendix:key.lemma.proofs}
In this section we provide proofs to the key lemmas discussed in Section \ref{sec:proof.sketch}.  We shall first provide the technical lemmas needed for their proof, and leave the proofs of the technical lemmas for Appendix \ref{appendix:technical.lemma.proofs}.  Throughout this section, we assume that $\theta = 1 / \Omega(L)$.

\subsection{Proof of Lemma \ref{lemma:hidden.and.interlayer.activations.bounded}: hidden and interlayer activations are bounded}
We first recall a standard result from random matrix theory; see, e.g. \citet{vershynin}, Corollary 5.35.
\begin{lemma}
Suppose $W_1,\dots, W_{L+1}$ are generated by Gaussian initialization.  Then there exist constants $C, C'>0$ such that for any $\delta > 0$, if $m \geq d\vee C \log (L/\delta)$, then with probability at least $1-\delta$, $\pnorm {W_l}2 \leq C'$ for all $l\in [L+1]$.  
\label{lemma:init.weight.norm}
\end{lemma}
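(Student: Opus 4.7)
The plan is to reduce the claim to a standard non-asymptotic deviation bound for Gaussian random matrices. Under Assumption \ref{assumption:gaussian.init}, the rescaled matrix $\tilde W_l := \sqrt{m_l/2}\, W_l \in \R^{m_{l-1} \times m_l}$ has i.i.d.\ standard Gaussian entries. I would invoke the bound cited from \citet{vershynin} (Corollary 5.35): for each $l$ and any $t \geq 0$,
\[
\P\bigl( \|\tilde W_l\|_2 > \sqrt{m_{l-1}} + \sqrt{m_l} + t \bigr) \leq 2\exp(-t^2/2).
\]
Undoing the normalization yields $\|W_l\|_2 \leq \sqrt{2/m_l}\bigl(\sqrt{m_{l-1}} + \sqrt{m_l} + t\bigr)$ on the same event.

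Next, I would set $t = \sqrt{2 \log(2(L+1)/\delta)}$ and take a union bound over $l \in [L+1]$, which incurs total failure probability at most $\delta$. On the resulting event, I would bound each layer's spectral norm by a universal constant, handling the three cases separately. For $l = 1$, since $m_0 = d$, the bound becomes $\sqrt{2d/m_1} + \sqrt{2} + t\sqrt{2/m_1}$, which is $O(1)$ as long as $m_1 \geq d$ and $m_1 \gtrsim \log(L/\delta)$. For $l = 2, \dots, L$, all dimensions equal $m$, so the bound simplifies to $2\sqrt{2} + t\sqrt{2/m}$, again $O(1)$ under the same width condition. For $l = L+1$, Assumption \ref{assumption:width.same.order} gives $m_{L+1} = \Theta(m_L)$, so $\sqrt{2/m_{L+1}}\bigl(\sqrt{m_L} + \sqrt{m_{L+1}}\bigr) = O(1)$ and the remaining tail term $t\sqrt{2/m_{L+1}}$ is absorbed in the same way.

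Combining the three cases, the hypothesis $m \geq d \vee C \log(L/\delta)$ for a sufficiently large absolute constant $C$ ensures that every $\|W_l\|_2$ is simultaneously bounded by some absolute constant $C'$. There is no substantive obstacle: the argument is a textbook computation once the normalization is sorted out, and the only thing to track carefully is that the prefactor $\sqrt{2/m_l}$ exactly cancels the $\sqrt{m_{l-1}} + \sqrt{m_l}$ factor in the regime $m_l \asymp m_{l-1}$, leaving only the subgaussian tail term $t\sqrt{2/m}$ to be absorbed by the width lower bound.
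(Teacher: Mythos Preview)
Your proposal is correct and follows exactly the approach the paper indicates: the paper does not spell out a proof but simply cites Corollary 5.35 of \citet{vershynin} as a standard random matrix result, and your argument is precisely the unpacking of that corollary with the appropriate rescaling, union bound over the $L+1$ layers, and layerwise case analysis. There is nothing to add.
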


The next lemma bounds the spectral norm of the maps that the residual layers define.  This is a key result that allows for the simplification of many of the arguments that are needed in non-residual architectures.  Its proof is in Appendix \ref{appendix:lemma.init.intermediate.proof}.

\begin{lemma}
\label{lemma:init.intermediate}
Suppose $W_1,\dots, W_L$ are generated by Gaussian initialization.  Then for any $\delta > 0$, there exist constants $C_0, C_0', C$ such that if $m \geq C_0 \log \l( L/\delta\r)$, then with probability at least $1-\delta$, for any $L\geq b \geq a \geq 2$, and for any tuple of diagonal matrices $\tilde \Sigma_a, \dots, \tilde \Sigma_b$ satisfying $\pnorm{\tilde \Sigma_i}2\leq 1$ for each $i=a,\dots, b$, we have
\begin{equation}
\pnorm{(I+\theta \tilde \Sigma_{b} W_b^\top) (I + \theta \tilde \Sigma_{b-1} W_{b-1}^\top)\cdot \ldots \cdot  (I + \theta \tilde \Sigma_a W_a^\top)}2 \leq \exp\l( C_0' \theta L \r) \leq 1.01.
\end{equation}
In particular, if we consider $\tilde \Sigma_i = \Sigma_i(x)$ for any $x\in S^{d-1}$, we have with probability at least $1-\delta$, for all $2\leq a \leq b \leq L$ and for all $x\in S^{d-1}$,
\[\pnorm{(I+\theta \Sigma_{b}(x) W_b^\top) (I + \theta  \Sigma_{b-1}(x) W_{b-1}^\top)\cdot \ldots \cdot  (I + \theta \Sigma_a(x) W_a^\top)}2 \leq \exp\l( C_0' \theta L \r)\leq 1.01.\]
\end{lemma}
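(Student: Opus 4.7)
The plan is to reduce the spectral norm of the full product of residual factors $(I + \theta \tilde\Sigma_i W_i^\top)$ to a pointwise bound on each factor and then exponentiate. Because each factor differs from the identity by the small term $\theta \tilde\Sigma_i W_i^\top$ and $\theta = 1/\Omega(L)$, multiplying at most $L$ such factors should accumulate only an $O(\theta L) = O(1)$ contribution in the exponent, which can be driven as small as desired by tuning the hidden constant in $1/\Omega(L)$.

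First I would invoke Lemma \ref{lemma:init.weight.norm} to control $\pnorm{W_i}{2}$ at initialization. Each matrix $W_i$ for $i \in \{2,\dots,L\}$ is square ($m \times m$) with i.i.d.\ $N(0, 2/m)$ entries, so the hypothesis $m \geq d$ of that lemma is automatic (both dimensions equal $m$), and only the $m \geq C \log(L/\delta')$ condition remains. Applying the lemma to a single fixed layer and union bounding over $i = 2, \dots, L$ with $\delta' = \delta/L$, I obtain an event of probability at least $1 - \delta$ on which $\pnorm{W_i}{2} \leq C'$ for all $i = 2, \dots, L$, provided $m \geq C_0 \log(L/\delta)$ for a suitably enlarged constant $C_0$.

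On this good event, fix any $2 \leq a \leq b \leq L$ and any tuple of diagonal matrices $\tilde\Sigma_a, \dots, \tilde\Sigma_b$ with $\pnorm{\tilde\Sigma_i}{2} \leq 1$. Submultiplicativity of the spectral norm, together with $\pnorm{I + M}{2} \leq 1 + \pnorm{M}{2} \leq \exp(\pnorm{M}{2})$, gives
\[
\pnorm{\proddd i a b (I + \theta \tilde\Sigma_i W_i^\top)}{2}
\leq \proddd i a b \Bigl(1 + \theta \pnorm{\tilde\Sigma_i}{2} \pnorm{W_i}{2}\Bigr)
\leq \exp\bigl(\theta C' (b - a + 1)\bigr)
\leq \exp(C_0' \theta L),
\]
where $C_0' := C'$. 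Since $\theta = 1/\Omega(L)$, the quantity $C_0' \theta L$ is a constant that can be made as small as we wish by shrinking the hidden constant in $\Omega(L)$; in particular we arrange $C_0' \theta L \leq \log(1.01)$ to conclude $\exp(C_0' \theta L) \leq 1.01$.

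The ``in particular'' assertion is then immediate: the diagonal matrices $\Sigma_i(x)$ have $0/1$ diagonal entries and hence satisfy $\pnorm{\Sigma_i(x)}{2} \leq 1$, so plugging $\tilde\Sigma_i = \Sigma_i(x)$ into the bound just derived yields the statement uniformly over $x \in S^{d-1}$. No $\epsilon$-net or covering of the sphere is needed, since the upper bound on each factor depends on $\tilde\Sigma_i$ only through the norm constraint $\pnorm{\tilde\Sigma_i}{2} \leq 1$. I do not foresee a serious obstacle here; the only step requiring mild care is the bookkeeping in the union bound over layers (so that the probability parameter flips from $\delta/L$ per layer to the global $\delta$) and the observation --- which is precisely the source of the lemma's power --- that the derived bound is insensitive to the specific values of $\tilde\Sigma_i$, so the data-dependent matrices $\Sigma_i(x)$ are handled uniformly in $x$ with no additional probabilistic expense.
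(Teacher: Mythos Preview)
Your proposal is correct and follows essentially the same approach as the paper: invoke Lemma~\ref{lemma:init.weight.norm} to bound $\pnorm{W_i}{2}$ uniformly over layers, then use submultiplicativity and $1 + x \leq e^x$ to get $\exp(C_0'\theta L)$, which is at most $1.01$ by the scaling $\theta = 1/\Omega(L)$. Your treatment is in fact slightly more careful than the paper's --- you explicitly note that the bound depends on the $\tilde\Sigma_i$ only through $\pnorm{\tilde\Sigma_i}{2}\leq 1$, so the uniform-in-$x$ statement requires no covering argument --- but the substance is identical.
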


The next lemma we show concerns a Lipschitz property of the map $x\mapsto x_l$. 
Compared with the fully connected case, our Lipschitz constant does not involve any terms growing with $L$, which allows for the width dependence of our result to be only logarithmic in $L$.  Its proof is in Appendix \ref{appendix:lemma:init.lipschitz.lth.layer}.
\begin{lemma}
Suppose $W_1,\dots, W_L$ are generated by Gaussian initialization.  There are constants $C,C'>0$ such that for any $\delta >0$, if $m\geq C d \log (mL/\delta)$, then with probability at least $1-\delta$, $\pnorm{x_l - x_l'}2\leq C' \pnorm{x-x'}2$ for all $x,x'\in S^{d-1}$ and $l\in [L+1]$.
\label{lemma:init.lipschitz.lth.layer}
\end{lemma}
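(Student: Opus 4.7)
The plan is a layer-by-layer propagation argument using the 1-Lipschitz property of the ReLU together with the spectral-norm bounds supplied by Lemmas \ref{lemma:init.weight.norm} and \ref{lemma:init.intermediate}. The crucial observation is that for any two real scalars $a,b$ one has $\sigma(a)-\sigma(b)=d\cdot(a-b)$ for some $d=d(a,b)\in[0,1]$, so applied componentwise this yields, for every pair of vectors $u,v$ and every weight matrix $W$, a diagonal matrix $D=D(u,v,W)$ with $\|D\|_2\le 1$ satisfying $\sigma(W^\top u)-\sigma(W^\top v)=DW^\top(u-v)$. Everywhere below, whenever I write $D_r$ I mean such a data- and weight-dependent diagonal matrix with entries in $[0,1]$.

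For $l=1$, the identity $x_1-x_1'=D_1 W_1^\top(x-x')$ gives $\|x_1-x_1'\|_2\le\|W_1\|_2\|x-x'\|_2$, and Lemma \ref{lemma:init.weight.norm} bounds $\|W_1\|_2$ by an absolute constant on a high-probability event. For $2\le l\le L$, applying the same identity inside each residual block and unrolling gives
\[
x_l-x_l' \;=\; \Bigl[\prod_{r=2}^{l}\bigl(I+\theta D_r W_r^\top\bigr)\Bigr](x_1-x_1').
\]
Since $\|D_r\|_2\le 1$ for every $r$, Lemma \ref{lemma:init.intermediate}, which is already stated as a uniform bound over all tuples of diagonal matrices of spectral norm at most one, bounds the product's spectral norm by $1.01$, and combining with the previous step yields $\|x_l-x_l'\|_2\le C'\|x-x'\|_2$ with an absolute constant $C'$ independent of $L$. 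For $l=L+1$, I would then apply the ReLU contraction one more time to obtain $\|x_{L+1}-x_{L+1}'\|_2\le\|W_{L+1}\|_2\|x_L-x_L'\|_2$ and bound $\|W_{L+1}\|_2$ by Lemma \ref{lemma:init.weight.norm}.

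The main conceptual point, and the only subtle one, is that the matrices $D_r$ depend not only on the pair $(x,x')$ but also on the earlier weights $W_1,\dots,W_{r-1}$, so a priori one might worry about correlations or about needing an $\varepsilon$-net on $S^{d-1}\times S^{d-1}$. This is precisely where the uniformity-over-diagonal-matrices formulation of Lemma \ref{lemma:init.intermediate} pays off: on its high-probability event the product is controlled simultaneously for every admissible choice of $D_r$, so a single realization of the weights suffices and no discretization is required. The stated width requirement $m\ge Cd\log(mL/\delta)$ is simply the common upper bound that simultaneously satisfies the hypotheses of Lemmas \ref{lemma:init.weight.norm} and \ref{lemma:init.intermediate} and allows a union bound; the $L$-independence of the Lipschitz constant is inherited directly from the $L$-independence of the product bound in Lemma \ref{lemma:init.intermediate}, which is the structural advantage of the residual architecture over the fully connected one.
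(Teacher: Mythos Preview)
Your proposal is correct and follows essentially the same approach as the paper: the paper also uses the componentwise identity $\sigma(u)-\sigma(v)=D(u-v)$ with a diagonal $D$ of entries in $[0,1]$ (stated there as Claim~\ref{claim:nhood.diagonal.defn}) to write $x_l-x_l'=(I+\theta D_lW_l^\top)(x_{l-1}-x_{l-1}')$, and then bounds the telescoped product by $(1+C_0\theta)^L\le C_1$ via the spectral bound on $W_l$, which is exactly the content of Lemma~\ref{lemma:init.intermediate}. The only stylistic difference is that you invoke Lemma~\ref{lemma:init.intermediate} directly on the unrolled product (and nicely emphasize that its uniformity over diagonal matrices obviates any $\varepsilon$-net), whereas the paper re-derives the per-layer factor and multiplies; these are the same argument.
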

With the above technical lemmas in place, we can proceed with the proof of Lemma \ref{lemma:hidden.and.interlayer.activations.bounded}.
\begin{proof}[Proof of Lemma \ref{lemma:hidden.and.interlayer.activations.bounded}]
We first show that a bound of the form $\underline{C} \leq \pnorm{\hat x_l}2 \leq \overline{C}$ holds for all $\hat x$ in an $\eps$-net of $S^{d-1}$ and then use the Lipschitz property from Lemma \ref{lemma:init.lipschitz.lth.layer} to lift this result to all of $S^{d-1}$.  

Let $\calN^*$ be a $\tau_0$-net of $S^{d-1}$.  By applying Lemma A.6 of \citet{cao2019} to the first layer of our network, there exists a constant $C_1$ such that with probability at least $1-\delta/3$, we can take $m = \bigOmega{d \log\l(m/(\tau_0 \delta)\r)}$ large enough so that
\begin{equation}
\nonumber
\pnorm{\hat x_1}2 \leq 1 + C_1 \sqrt{ \f{ d \log \l( m /(\tau_0 \delta) \r)}m} \leq 1.004.
\end{equation}

If $2\leq l \leq L$, by an application of Lemma \ref{lemma:init.intermediate}, by taking $m$ larger we have with probability at least $1-\delta/3$, for all $2 \leq l\leq L, \hat x\in \calN^*$,
\begin{align}
\nonumber
\pnorm {\hat x_l}2 &= \pnorm{(I + \theta \Sigma_l(\hat x) W_l^\top) \cdots (I + \theta \Sigma_2(\hat x) W_2^\top) \Sigma_1(\hat x) W_1^\top \hat x}2\\
\nonumber
&\leq \pnorm{(I + \theta \Sigma_l(\hat x) W_l^\top) \cdots (I + \theta \Sigma_2(\hat x) W_2^\top)}2\pnorm{\hat x_1}2\\
&\leq  1.01 \cdot \l( 1 + C_1 \sqrt{ \f{ d \log \l( m /(\tau_0\delta)\r)}m}\r)\leq  1.015.
\nonumber
\end{align}
For the last fully connected layer, we can use a proof similar to that of Lemma A.6 in \citet{cao2019} using the above upper bound on $\pnorm{\hat x_L}2$ to get that with probability at least $1-\delta$, for any $l \in [L+1]$ and $\hat x \in \calN^*$, 
\begin{equation}
\pnorm{\hat x_l}2 \leq 1.02.
\label{eq:lth.layer.ub.eps.net}
\end{equation}
For any $x\in S^{d-1}$, there exists $\hat x \in \calN^*$ such that $\pnorm{x - \hat x}2 \leq \tau_0$.  By Lemma \ref{lemma:init.lipschitz.lth.layer}, this means that with probability at least $1-\delta/2$, $\pnorm{x_l - \hat x_l}2\leq C_1 \tau_0$ for some $C_1>0$, and this holds over all $\hat x \in \calN^*$.  Let $\tau_0 = 1/m$, so that $d \log\l( mL/(\tau_0 \delta)\r) \leq 2d \log(mL/\delta)$.   Then \eqref{eq:lth.layer.ub.eps.net} yields that with probability at least $1-\delta$, for all $x\in S^{d-1}$ and all $l \in [L+1],$
\begin{align*}
\pnorm{x_l}2 &\leq \pnorm{\hat x_l}2+ \pnorm{x_l - \hat x_l}2 \leq 1.02+ C_1/m \leq 1.024.
\end{align*}
As for the lower bound, we again let $\calN^*$ be an arbitrary $\tau_0$-net of $S^{d-1}$.  For $l=1$, we use Lemma A.6 in \citet{cao2019} to get constants $C,C'$ such that provided $m \geq C d \log\l(m/(\tau_0\delta)\r)$, then we have with probability at least $1-\delta/3$, for all $\hat x\in \calN^*$,
\begin{equation}
\pnorm{\hat x_l}2 \geq 1 - C' \sqrt{ dm^{-1} \log\l(3m/(\tau_0\delta)\r)} \qquad (l=1, 2, \dots, L).
\label{eq:init.firstfc.activ.lb}
\end{equation}
To see that the above holds for layers $2 \leq l \leq L$, we note that it deterministically holds that $\hat x_{l,j} \geq \hat x_{1,j}$ for such $l$ and all $j$.  For the final layer, we follow a proof similar to Lemma A.6 of \citet{cao2019} with an application of \eqref{eq:lth.layer.ub.eps.net} to get that with probability at least $1-\delta/3$,
\begin{align*}
    \pnorm{\hat x_{L+1}}2^2 &\geq \pnorm{\hat x_L}2^2 - C_3 \sqrt{dm^{-1} \log\l(3 / (\tau_0 \delta)\r)}.
\end{align*}
Thus $m = \Omega(d \log(m/(\tau_0 \delta))$ implies there is a constant $C_4$ such that with probability at least $1-\delta$, for all $l\in [L+1]$ and $\hat x\in \calN^*$,
\begin{equation}
\pnorm{\hat x_l}2 \geq C_4 > 0.
\label{eq:lth.layer.lb.eps.net}
\end{equation}
By Lemma \ref{lemma:init.lipschitz.lth.layer}, we have with probability at least $1-\delta$, for all $x\in S^{d-1}$,
\begin{align*}
\pnorm{x_l}2 \geq \pnorm{\hat x_l}2 - \pnorm{x_l - \hat x_l}2 \geq C_4 - C_1 \tau_0.
\end{align*}
Thus by taking $\tau_0$ to be a sufficiently small universal constant, we get the desired lower bound.

We now demonstrate the upper bound for $\pnorm{H_l^{l'}}2$.  Since $H_{l}^{l'} = x_{l'}$ when $l =1$, we need only consider the case $l > 1$.  If $l' \leq L$, then $H_l^{l'}$ appears in the bound for Lemma \ref{lemma:init.intermediate} and so we are done.  For $l' = L+1$, by Lemmas \ref{lemma:init.weight.norm} and \ref{lemma:init.intermediate} we have
\begin{align*}
\pnorm{H_l^{L+1}}2 &= \pnorm{\Sigma_{L+1}(x)W_{L+1}^\top \proddd r {l}L \l( I + \theta \Sigma_r(x) W_r^\top\r)}2 \\
&\leq \pnorm{\Sigma_{L+1}(x)}2 \pnorm{W_{L+1}}2 \pnorm{ \proddd r {l}L \l( I + \theta \Sigma_r(x) W_r^\top\r)}2 \leq C.
\end{align*}
\end{proof}

\subsection{Proof of Lemma \ref{lemma:semismoothness}: semismoothness}
To prove the semismoothness result, we need two technical lemmas.  The first lemma concerns a Lipschitz-type property with respect to the weights, along with a characterization of the changing sparsity patterns of the rectifier activations at each layer.  The second lemma characterizes how the neural network output behaves if we know that one of the initial layers has a given sparsity pattern.  This allows us to develop the desired semi-smoothness even though ReLU is non-differentiable.  The proof for Lemmas \ref{lemma:tau.nhood.lipschitz.weights} and \ref{lemma:tau.nhood.sparse.rhs} can be found in Appendix \ref{appendix:lemma.tau.nhood.lipschitz.weights} and \ref{appendix:lemma.tau.nhood.sparse.rhs}, respectively. 

\begin{lemma}
Let $W = (W_1,\dots, W_{L+1})$ be generated by Gaussian initialization, and let $\hat W = (\hat W_1, \dots, \hat W_{L+1}), \tilde W = (\tilde W_1, \dots, \tilde W_{L+1})$ be weight matrices such that $\hat W, \tilde W \in \calW(W,\tau)$.  For $x\in S^{d-1}$, let $\Sigma_l(x), \hat \Sigma_l(x), \tilde \Sigma_l(x)$ and $x_l, \hat x_l, \tilde x_l$ be the binary matrices and hidden layer outputs of the $l$-th layers with parameters $W, \hat W, \tilde W$ respectively.   There exist absolute constants $C_1, C_2, C_3$ such that for any $\delta>0$, if $m\geq C_1  \tau^{-\f 43}\cdot d\log (m/(\tau \delta))\vee C_1 d \log (mL/\delta)$, then with probability at least $1-\delta$, for any $x\in S^{d-1}$ and any $l\in [L+1]$, we have
\[ \pnorm{\hat x_l - \tilde x_l}2 \leq \begin{cases}
C_2 \pnorm{\hat W_1 - \tilde W_1}2, & l=1,\\
C_2 \pnorm{\hat W_1 - \tilde W_1}2 + \theta C_2 \summm r 2 l \pnorm{\hat W_{r} - \tilde W_r}2, & 2 \leq l \leq L,\\
C_2 \pnorm{\hat W_1 - \tilde W_1}2 + \theta C_2 \summm r 2 L \pnorm{\hat W_{r} - \tilde W_r}2 + C_2 \pnorm{\hat W_{L+1} - \tilde W_{L+1}}2, & l = L+1.\end{cases}
 \]
and
\[ \pnorm{\hat \Sigma_l(x) - \tilde \Sigma_l(x)}0\leq C_3 m \tau^{\f 23}.\]
\label{lemma:tau.nhood.lipschitz.weights}
\end{lemma}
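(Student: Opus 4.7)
The plan is to establish the activation-norm Lipschitz bound first, and then use it to control the sparsity of the activation-pattern difference. For the first claim, I would argue layer-by-layer by induction. The base case $l=1$ is immediate from $1$-Lipschitzness of $\sigma$ applied with the same input $x$: $\|\hat x_1 - \tilde x_1\|_2 \leq \|(\hat W_1 - \tilde W_1)^\top x\|_2 \leq \|\hat W_1 - \tilde W_1\|_2$. For $2 \leq l \leq L$, I use the residual identity $\hat x_l - \tilde x_l = (\hat x_{l-1} - \tilde x_{l-1}) + \theta[\sigma(\hat W_l^\top \hat x_{l-1}) - \sigma(\tilde W_l^\top \tilde x_{l-1})]$, split the second bracket as $\hat W_l^\top(\hat x_{l-1} - \tilde x_{l-1}) + (\hat W_l - \tilde W_l)^\top \tilde x_{l-1}$, and invoke (i) the spectral norm bound $\|\hat W_l\|_2 \leq C$ from Lemma \ref{lemma:init.weight.norm} plus a $\tau$-perturbation, and (ii) the activation-norm bound $\|\tilde x_{l-1}\|_2 \leq C$ from Lemma \ref{lemma:hidden.and.interlayer.activations.bounded} (first proved at $W$, then transferred to $\tilde W \in \calW(W,\tau)$ by the same Lipschitz argument). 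This yields a recursion of the form $u_l \leq (1+C\theta) u_{l-1} + C\theta \|\hat W_l - \tilde W_l\|_2$ for $u_l := \|\hat x_l - \tilde x_l\|_2$, which iterates to the claimed sum because $\theta = 1/\Omega(L)$ keeps $(1+C\theta)^L$ bounded. The last layer is handled as in the $l=1$ case, substituting the bound for $\|\hat x_L - \tilde x_L\|_2$.

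For the sparsity bound, I would reduce via the triangle inequality $\|\hat \Sigma_l - \tilde \Sigma_l\|_0 \leq \|\hat \Sigma_l - \Sigma_l\|_0 + \|\tilde \Sigma_l - \Sigma_l\|_0$ to bounding $\|\hat \Sigma_l(x) - \Sigma_l(x)\|_0$ for arbitrary $\hat W \in \calW(W, \tau)$. Writing $a_j := w_{l,j}^\top x_{l-1}$ and $b_j := \hat w_{l,j}^\top \hat x_{l-1}$, a flip at $j$ requires $|a_j| \leq |b_j - a_j|$, and I decompose
\[ b_j - a_j = (\hat w_{l,j} - w_{l,j})^\top x_{l-1} + \hat w_{l,j}^\top (\hat x_{l-1} - x_{l-1}). \]
Split the columns by size of perturbation: $S := \{j : \|\hat w_{l,j} - w_{l,j}\|_2 > s\}$, with $|S| \leq \tau^2/s^2$ by the Frobenius bound on $\hat W_l - W_l$. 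For $j \notin S$, the first term is at most $Cs$. The second term is where I have to be careful: the trivial Cauchy--Schwarz bound $\|\hat w_{l,j}\|_2 \cdot \|\hat x_{l-1} - x_{l-1}\|_2 = O(\tau)$ is too loose. Since $w_{l,j}$ is independent of everything in layers $1,\dots,l-1$ at initialization, I would bound $|w_{l,j}^\top (\hat x_{l-1} - x_{l-1})|$ conditionally as a sub-Gaussian of parameter $O(\tau/\sqrt m)$ using Step 1's bound $\|\hat x_{l-1} - x_{l-1}\|_2 \leq C\tau$, and union-bound over $j$ to get $O(\tau\sqrt{\log(m/\delta)/m})$ uniformly. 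For $j \notin S$, a flip therefore requires $|a_j| \leq C(s + \tau\sqrt{\log m/m})$.

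Gaussian anti-concentration then bounds the number of flips outside $S$: since $a_j$ is approximately $N(0, \Theta(1/m))$, the per-index small-ball probability is $O((s + \tau\sqrt{\log m/m})\sqrt m)$, and a Bernstein-type concentration in $j$ gives an $O(m^{3/2} s + m\tau\sqrt{\log m})$ bound with high probability. Adding $|S|$, the total flip count is $O(\tau^2/s^2 + m^{3/2} s + m\tau\sqrt{\log m})$; optimizing with $s = \tau^{2/3}/\sqrt m$ balances the first two terms at $O(m \tau^{2/3})$, while the third is dominated under the width condition. Finally I would extend the bound uniformly over $x \in S^{d-1}$ by discretizing $S^{d-1}$ with a $\tau^{2/3}$-net and using the Lipschitz continuity of $x \mapsto x_{l-1}$ from Lemma \ref{lemma:init.lipschitz.lth.layer}; this covering cost is what forces the $\tau^{-4/3} d \log(m/(\tau\delta))$ term in the width requirement.

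The main obstacle is the sparsity bound, specifically extracting the correct $\tau^{2/3}$ exponent. A naive Cauchy--Schwarz on $\hat w_{l,j}^\top(\hat x_{l-1} - x_{l-1})$ produces an $O(\tau)$ term that would inflate the bound to $m^{3/2}\tau$, which is vacuous in the overparameterized regime. Beating this requires exploiting the independence of the initialization $w_{l,j}$ from $\hat x_{l-1}$---an independence that fails for $\hat w_{l,j}$---which is precisely why the split by perturbation size $s$ is needed, followed by the balancing $s = \tau^{2/3}/\sqrt m$. The uniformity in $x$ via the net argument is the secondary technical hurdle, with the net resolution dictated by the perturbation threshold $s$.
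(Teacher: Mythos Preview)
Your proof of the Lipschitz bound is essentially the paper's: induction on $l$ with the recursion $\|\hat x_l - \tilde x_l\|_2 \leq (1+C\theta)\|\hat x_{l-1} - \tilde x_{l-1}\|_2 + C\theta\|\hat W_l - \tilde W_l\|_2$, using $\theta = 1/\Omega(L)$ to keep the product factor bounded. No issues there.

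The sparsity bound has a genuine gap. Your key step claims that $|w_{l,j}^\top(\hat x_{l-1} - x_{l-1})| = O(\tau\sqrt{\log(m)/m})$ by sub-Gaussianity of $w_{l,j}$, union-bounded over $j$. This is correct for a \emph{fixed} $\hat W$, but the lemma must hold simultaneously for all $\hat W \in \calW(W,\tau)$: in applications (e.g.\ the semismoothness lemma) $\hat W$ is a gradient-descent iterate that depends on $W$. Once you require uniformity, the direction $v := \hat x_{l-1} - x_{l-1}$ ranges over an essentially $m$-dimensional set of vectors with $\|v\|_2 \leq C\tau$, and the covering argument needed to push the sub-Gaussian bound through costs a factor $\sqrt{m}$ in the exponent, collapsing your bound back to $\sup_{\|v\|_2\leq C\tau}|w_{l,j}^\top v| = C\tau\|w_{l,j}\|_2 = O(\tau)$---precisely the ``too loose'' estimate you were trying to avoid. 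Your column-perturbation split $S$ does not help with this, because the problematic quantity involves $\hat x_{l-1}$, not $\hat w_{l,j}$.

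The paper's remedy is to avoid pointwise control of $|b_j - a_j|$ altogether and instead use the \emph{aggregate} bound
\[
\sum_{j} |b_j - a_j|^2 \;=\; \|\hat W_l^\top \hat x_{l-1} - W_l^\top x_{l-1}\|_2^2 \;\leq\; 2\|\hat W_l - W_l\|_2^2\|\hat x_{l-1}\|_2^2 + 2\|W_l\|_2^2\|\hat x_{l-1} - x_{l-1}\|_2^2 \;\leq\; C\tau^2,
\]
which is deterministic once the spectral-norm bounds on $W_l$ (Lemma~\ref{lemma:init.weight.norm}) and the hidden-layer bounds (Lemma~\ref{lemma:hidden.and.interlayer.activations.bounded}) are in force, and therefore holds for all $\hat W\in\calW(W,\tau)$ at once. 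One then splits by a threshold on $|a_j| = |w_{l,j}^\top x_{l-1}|$, which depends only on $W$ and $x$: indices with $|a_j|\leq\beta$ are at most $Cm^{3/2}\beta$ by Lemma~\ref{lemma:init.count.indices.sd-1} (this is where the net over $S^{d-1}$ and the $\tau^{-4/3}d\log(m/(\tau\delta))$ width enter), while a flip at an index with $|a_j|>\beta$ forces $|b_j - a_j|>\beta$, so there are at most $C\tau^2/\beta^2$ of those. Balancing at $\beta = m^{-1/2}\tau^{2/3}$ yields $O(m\tau^{2/3})$. Your column-split and sub-Gaussian step become unnecessary once you see the $\ell_2$ aggregate bound.
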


\begin{lemma}
\label{lemma:tau.nhood.sparse.rhs}
Let $W_1, \dots, W_{L+1}$ be generated by Gaussian initialization.  Let $\tilde W_l$ be such that $\pnorm{W_l - \tilde W_l}2\leq \tau$ for all $l$, and let $\tilde \Sigma_l(x)$ be the diagonal activation matrices corresponding to $\tilde W_l$, and $\tilde H_l^{l'}(x)$ the corresponding interlayer activations defined in \eqref{eq:interlayer.activations.defn}.  Suppose that $\pnorm{\tilde \Sigma_l(x) - \Sigma_l(x)}0 \leq s$ for all $x\in S^{d-1}$ and all $l$.  Define, for $l \geq 2$ and $a\in \R^{m_{l-1}}$,
\[ g_l(a,x) := v^\top \tilde H_l^{L+1}(x) a.\]
Then there exists a constant $C>0$ such that for any $\delta >0$, provided $m \geq C \tau^{-\f 23} (\log m)^{-1} \log (L /\delta)$, 
we have with probability at least $1-\delta$ and all $2 \leq l \leq {L+1}$, 
\[ \sup_{\pnorm{x}2=\pnorm{a}2=1,\ \pnorm{a}0\leq s} |g_l(a,x)| \leq C_1 \l[ \tau \sqrt{m} + \sqrt{s \log m}\r].\]
\end{lemma}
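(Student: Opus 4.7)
The plan is to decompose $g_l(a, x) = v^\top \tilde H_l^{L+1}(x) a$ into an ``initialization'' part and a perturbation, and bound each separately. Write $\tilde H_l^{L+1}(x) a = \tilde \Sigma_{L+1}\tilde W_{L+1}^\top \tilde z_L$ with
\[ \tilde z_L := \prod_{r=l}^L (I + \theta \tilde \Sigma_r \tilde W_r^\top)\, a,\]
and let $z_L := H_l^L(x) a$ be the corresponding object at initialization. The two main reductions are: first, control $\pnorm{\tilde z_L - z_L}2$ by telescoping the product factor by factor; second, analyze $v^\top \tilde \Sigma_{L+1}\tilde W_{L+1}^\top z_L$ by an analogous one-layer splitting at the top.

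For the telescoping step I would write each factor difference as
\[ \tilde A_r - A_r = \theta\, \tilde \Sigma_r(\tilde W_r - W_r)^\top + \theta\,(\tilde \Sigma_r - \Sigma_r) W_r^\top,\]
where $\tilde A_r := I + \theta \tilde \Sigma_r \tilde W_r^\top$ and $A_r := I + \theta \Sigma_r W_r^\top$. The first summand has spectral norm at most $\theta\tau$, and Lemma \ref{lemma:init.intermediate}, applied to both the original and the perturbed bracketing products, lets its contribution to $\tilde z_L - z_L$ be bounded by $O(\theta\tau)$ per layer. The second summand is supported on the size-$\leq s$ set $I_r := \{j : [\tilde \Sigma_r]_{jj} \neq [\Sigma_r]_{jj}\}$; since the rows of $W_r$ are i.i.d.\ Gaussians, for any fixed vector $v'$ the standard Gaussian-maximum tail bound gives $\max_j |w_{r,j}^\top v'| \leq C\pnorm{v'}2 \sqrt{\log m/m_r}$, whence $\pnorm{(\tilde \Sigma_r - \Sigma_r)W_r^\top v'}2 \leq C\pnorm{v'}2 \sqrt{s \log m/m}$, contributing $O(\theta\sqrt{s \log m/m})$ per layer. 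Summing the two contributions over $r=l,\dots,L$ and using $L\theta = O(1)$ yields $\pnorm{\tilde z_L - z_L}2 \leq C(\tau + \sqrt{s \log m/m})$; multiplying by $\pnorm{v^\top \tilde \Sigma_{L+1}\tilde W_{L+1}^\top}2 = O(\sqrt m)$ produces an $O(\tau\sqrt m + \sqrt{s \log m})$ contribution.

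For the initialization piece I would split $\tilde \Sigma_{L+1}\tilde W_{L+1}^\top = \Sigma_{L+1} W_{L+1}^\top + \tilde \Sigma_{L+1}(\tilde W_{L+1}-W_{L+1})^\top + (\tilde \Sigma_{L+1}-\Sigma_{L+1})W_{L+1}^\top$. The middle term contributes $O(\tau\sqrt m)$ via the spectral bound on $\tilde W_{L+1}-W_{L+1}$, and the sparse term $O(\sqrt{s \log m})$ via the same Gaussian-maximum argument (now with $v$ in the role of the outside vector, using $|v_j|=1$). The remaining clean term $v^\top \Sigma_{L+1} W_{L+1}^\top z_L = \sum_j v_j \ind(w_{L+1,j}^\top x_L > 0)\, w_{L+1,j}^\top z_L$, conditional on $W_1,\dots, W_L$ so $z_L, x_L$ are fixed, is a sum of bounded mean-zero sub-Gaussians (mean zero by the $\pm 1$ balance of $v$, total variance $O(\pnorm{z_L}2^2) = O(1)$), which Hoeffding bounds by $O(\sqrt{\log(1/\delta)})$ at a single $(a,x)$. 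To lift the pointwise estimate to the supremum I would use $\epsilon$-nets on the $s$-sparse unit ball and on $S^{d-1}$, a Lipschitz-in-$(a, x)$ estimate analogous to Lemma \ref{lemma:init.lipschitz.lth.layer} for off-net control, and a union bound over sparsity patterns and over $l \in \{2,\dots,L+1\}$. The overparameterization hypothesis $m \geq C\tau^{-2/3}(\log m)^{-1}\log(L/\delta)$ emerges from balancing the $\sqrt{\log m}$ Gaussian-maximum factor against the net discretization and the $L/\delta$ union-bound factor.

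The main obstacle is the entanglement between the sparse index set $I_r$ and the Gaussian matrix $W_r$: both depend on $W_r$, so we cannot fix $I_r$ and then draw $W_r$ freshly. The workaround is the crude but uniform bound $\pnorm{(\tilde \Sigma_r - \Sigma_r)W_r^\top v'}2^2 \leq |I_r| \cdot \max_{j \in [m_r]} (w_{r,j}^\top v')^2$, where the maximum is taken over \emph{all} $j \in [m_r]$, so that the high-probability estimate on the right-hand side holds regardless of which coordinates end up in $I_r$. The same device handles the analogous coupling at the top layer in the sparse piece of $v^\top \tilde \Sigma_{L+1}\tilde W_{L+1}^\top z_L$.
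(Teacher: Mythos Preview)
Your top-layer split and the Hoeffding/net treatment of the ``clean'' term $v^\top\Sigma_{L+1}W_{L+1}^\top z_L$ are close in spirit to the paper's argument. The gap is in the telescope through layers $l,\dots,L$.

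The estimate $\|(\tilde\Sigma_r-\Sigma_r)W_r^\top v'\|_2\le C\|v'\|_2\sqrt{s\log m/m}$ via the Gaussian maximum is valid only for a \emph{single fixed} $v'$ independent of $W_r$. In your telescope $v'=v'_r=H_l^{r-1}(x)a$ ranges over all $s$-sparse unit $a$, so you must union-bound over an $\epsilon$-net of the $s$-sparse ball, whose cardinality is of order $m^s$. That pushes the Gaussian-max threshold from $\sqrt{\log m/m}$ up to $\sqrt{(s\log m)/m}$; multiplying by the $\sqrt s$ from $|I_r|\le s$ gives $s\sqrt{\log m/m}$ per layer, hence $\|\tilde z_L-z_L\|_2=O(\tau+s\sqrt{\log m/m})$ after summing, and $O(\tau\sqrt m+s\sqrt{\log m})$ after the final $\sqrt m$ factor---off by $\sqrt s$ from the target $\sqrt{s\log m}$. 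Lipschitz off-net control does not rescue this: the linear map $a\mapsto\tilde z_L-z_L$ has operator norm only $O(1)$ (the $s$-row submatrices of $W_r$ have spectral norm $\Theta(1)$, not $o(1)$), so you cannot avoid the fine net on $a$.

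The paper sidesteps the telescope entirely. Since $\|\tilde H_l^L\|_2\le C$ holds uniformly in $x$ and for every $\tilde W$ in the $\tau$-ball (same computation as Lemma~\ref{lemma:init.intermediate} with $\|\tilde W_r\|_2\le C$), one has $\|\tilde H_l^L a\|_2\le C$ for \emph{all} unit $a$, with no net needed. The entire argument then lives at layer $L{+}1$: decompose $v^\top\tilde\Sigma_{L+1}\tilde W_{L+1}^\top(\tilde H_l^L a)$ into four pieces by peeling $\tilde\Sigma_{L+1}$ and $\tilde W_{L+1}$ toward $\Sigma_{L+1},W_{L+1}$. Two pieces are bounded crudely by $\tau\sqrt m$; the other two, $v^\top(\tilde\Sigma_{L+1}-\Sigma_{L+1})W_{L+1}^\top(\tilde H_l^L a)$ and $v^\top\Sigma_{L+1}W_{L+1}^\top(\tilde H_l^L a)$, are handled by two auxiliary lemmas that use the fresh Gaussianity of $W_{L+1}$ and net over $s$-sparse $a$ (and, for the first, over an $s$-sparse $b$ built from $v^\top(\tilde\Sigma_{L+1}-\Sigma_{L+1})$). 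Because the $m^s$ net cost enters only once, inside a single scalar sub-Gaussian concentration, it contributes exactly $\sqrt{s\log m}$ rather than $s\sqrt{\log m}$. The fix to your outline is therefore to drop the telescope and run your top-layer split directly on $\tilde z_L$ in place of $z_L$.
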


In comparison with the fully connected case of \citet{cao2019}, our bounds in Lemmas \ref{lemma:tau.nhood.lipschitz.weights} and \ref{lemma:tau.nhood.sparse.rhs} do not involve polynomial terms in $L$, and the residual scaling $\theta$ further scales the dependence of the hidden layer activations on the intermediate layers.  

With the above two technical lemmas, we can proceed with the proof of Lemma \ref{lemma:semismoothness}.
\begin{proof}[Proof of semismoothness, Lemma \ref{lemma:semismoothness}]
Recalling the notation of interlayer activations $H_l^{l'}$ from \eqref{eq:interlayer.activations.defn}, we have for any $l\in [L+1]$ $f_{\hat W}(x) = v^\top \hat H_{l+1}^{L+1} \hat x_l$, where we have denoted $H_l^{l'}(x) = H_l^{l'}$ for notational simplicity.  Similarly, in what follows we denote $\Sigma(x)$ by $\Sigma$ with the understanding that each diagonal matrix $\Sigma$ still depends on $x$.  We have the decomposition 
\[\hat H_2^{L+1} \hat \Sigma_1 \hat W_1 x = \l( \hat H_2^{L+1} - \tilde H_2^{L+1} \r) \hat \Sigma_1 \hat W_1^\top x + \tilde H_2^{L+1} \hat \Sigma_1 \hat W_1^\top x,\]
and for $2 \leq l \leq L$,
\[ \hat H_l^{L+1} - \tilde H_l^{L+1}= \l( \hat H_{l+1}^{L+1} - \tilde H_{l+1}^{L+1}\r) \l( I + \theta \hat \Sigma_l \hat W_l^\top\r) + \theta \tilde H_{l+1}^{L+1} \l( \hat \Sigma_l \hat W_l^\top - \tilde \Sigma_l \tilde W_l^\top \r).\]
Thus we can write
\begin{align*}
\hat H_1^{L+1}(x) - \tilde H_1^{L+1}(x) &= \l( \hat H_2^{L+1} - \tilde H_2^{L+1}\r) \hat \Sigma_1\hat W_1^\top x + \tilde H_2^{L+1} \l( \hat \Sigma_1 \hat W_1^\top - \tilde \Sigma_1 \tilde W_1^\top \r) x\\
&= \l( \hat \Sigma_{L+1} \hat W_{L+1}^\top - \tilde \Sigma_{L+1} \tilde W_{L+1}^\top \r) \hat x_L \\
&\qquad+ \theta \summm l 2 L \tilde H_{l+1}^{L+1} \l( \hat \Sigma_l \hat W_l^\top - \tilde \Sigma_l \tilde W_l^\top \r)\hat x_{l-1} +\tilde H_2^{L+1} \l( \hat \Sigma_1 \hat W_1 - \tilde \Sigma_1 \tilde W_1 \r) x.
\end{align*}
We thus want to bound the quantity
\begin{align}
\nonumber
f_{\hat W}(x) - f_{\tilde W}(x) &= v^\top \l( \hat \Sigma_{L+1} \hat W_{L+1}^\top - \tilde \Sigma_{L+1} \tilde W_{L+1}^\top \r) \hat x_L &(T_1)\\
\nonumber
&\qquad+ \theta v^\top \l[\summm l 2 L \tilde H_{l+1}^{L+1} \l( \hat \Sigma_l \hat W_l^\top - \tilde \Sigma_l \tilde W_l^\top \r)\hat x_{l-1}\r] &(T_2)\\
\label{eq:semismooth.T1T2T3}
&\qquad+v^\top \l[ \tilde H_{2}^{L+1} \l( \hat \Sigma_1 \hat W_1 - \tilde \Sigma_1 \tilde W_1 \r) x\r]. &(T_3)
\end{align}
We deal with the three terms separately.  The idea in each is the same. 

\underline{\textbf{First term, $T_1$.}} 
We write this as the sum of three terms $v^\top(I_1+I_2+I_3)$, where
\begin{align}
\nonumber
&\l( \hat \Sigma_{L+1} \hat W_{L+1}^\top - \tilde \Sigma_{L+1} \tilde W_{L+1}^\top \r) \hat x_L \\
\label{eq:semismooth.I1I2I3}
&= \underbrace{\l( \hat \Sigma_{L+1} - \tilde \Sigma_{L+1} \r) \hat W_{L+1}^\top \hat x_{L}}_{I_1} + \underbrace{\tilde \Sigma_{L+1}\l( \hat W_{L+1}^\top - \tilde W_{L+1}^\top \r) \l( \hat x_{L} - \tilde x_{L} \r)}_{I_2} + \underbrace{\tilde \Sigma_{L+1} \l( \hat W_{L+1}^\top - \tilde W_{L+1}^\top\r)\tilde x_{L}}_{I_3}.
\end{align}
By directly checking the signs of the diagonal matrices, we can see that for any $l=1,\dots, L+1$,
\begin{align}
\pnorm{\l( \hat \Sigma_l - \tilde \Sigma_l\r) \hat W_l^\top \hat x_{l-1} }2 &\leq C_1 \pnorm{\hat W_l - \tilde W_l}2 + C_1 \pnorm{ \hat x_{l-1} - \tilde x_{l-1}}2.
\label{eq:hatsigma.minus.tildesigma.bound}
\end{align}
We will use Lemma \ref{lemma:tau.nhood.lipschitz.weights} to get specific bounds for each $l$. 
Denote $|\Sigma|$ as the entrywise absolute values of a diagonal matrix $\Sigma$, so that $|\Sigma|\Sigma  = \Sigma$ provided the diagonal entries are all in $\{0,\pm 1\}$.  Then we can write
\begin{align}
\nonumber
|v^\top I_1| &= \pnorm{v^\top \l| \hat \Sigma_{L+1} - \tilde \Sigma_{L+1} \r| \l( \hat \Sigma_{L+1} - \tilde \Sigma_{L+1} \r)\hat W_{L+1}^\top \hat x_L}2\\
\nonumber
&\leq C_3 \tau^{\f 1 3}\sqrt{m} \pnorm{ \l( \hat \Sigma_{L+1} - \tilde \Sigma_{L+1} \r)\hat W_{L+1}^\top \hat x_L}2 \\
&\leq C_3 \tau^{\f 13} \sqrt{m} \cdot  \l( C_1 \pnorm{\hat W_{L+1} - \tilde W_{L+1}}2 + C_1 \pnorm{\hat x_{L} - \tilde x_L}2\r)
\label{eq:t1.i1.term}
\end{align}
The first inequality follows by first noting that for any vector $a$ with $|a_i|\leq 1$ it holds that $\pnorm{v^\top a}2 \leq \pnorm{a}0^{\f 12}$, and then applying Lemma \ref{lemma:tau.nhood.lipschitz.weights} to get $\pnorm{\hat \Sigma_{L+1} - \tilde \Sigma_{L+1}}0\leq s=O\l(m\tau^{\f 23}\r)$.  The last line is by \eqref{eq:hatsigma.minus.tildesigma.bound}.

The $I_2$ term in \eqref{eq:semismooth.I1I2I3} follows from a simple application of Cauchy--Schwarz:
\begin{align}
|v^\top I_2| &\leq \sqrt{m} \cdot C  \cdot \pnorm{\hat W_{L+1}- \tilde W_{L+1}}2 \pnorm{\hat x_L - \tilde x_L}2.
\label{eq:t1.i2.term}
\end{align}
Putting together \eqref{eq:t1.i1.term} and \eqref{eq:t1.i2.term} shows that we can bound $T_1$ in \eqref{eq:semismooth.T1T2T3} by
\begin{align}
\nonumber
T_1 &\leq C_3 \tau^{\f 13} \sqrt{m} \cdot  \l( C_1 \pnorm{\hat W_{L+1} - \tilde W_{L+1}}2 + C_1 \pnorm{\hat x_{L} - \tilde x_L}2\r) + \sqrt{m} \cdot C  \cdot \pnorm{\hat W_{L+1}- \tilde W_{L+1}}2 \pnorm{\hat x_L - \tilde x_L}2\\
\nonumber
&\qquad+ v^\top \tilde \Sigma_{L+1}\l( \hat W_{L+1} - \tilde W_{L+1}\r)^\top \tilde x_L\\
\nonumber
&\leq C_3 \tau^{\f 13} \sqrt m \l( C_1 \pnorm{\hat W_{L+1} - \tilde W_{L+1}}2 + C_1' \l[ \pnorm{\hat W_1 - \tilde W_1}2 + \theta \summm r 2 L \pnorm{\hat W_r - \tilde W_r}2 \r] \r)\\
\nonumber
&\qquad+ C \sqrt{m} \pnorm{\hat W_{L+1} - \tilde W_{L+1}}2 \l( \pnorm{\hat W_1 - \tilde W_1}2 + \theta \summm r 2 L \pnorm{\tilde W_r - \hat W_r}2 \r)\\
&\qquad+ v^\top \tilde \Sigma_{L+1}\l( \hat W_{L+1} - \tilde W_{L+1}\r)^\top \tilde x_L.
\label{eq:semismooth.T1.term}
\end{align}

\underline{\textbf{Second term, $T_2$.}}
We again use a decomposition like \eqref{eq:semismooth.I1I2I3}:
\begin{align}
\nonumber
&\tilde H_{l+1}^{L+1} \l( \hat \Sigma_l \hat W_l^\top - \tilde \Sigma_l \tilde W_l^\top \r) \hat x_{l-1}\\
\label{eq:semismooth.T2.I1I2I3}
&= \underbrace{\tilde H_{l+1}^{L+1} \l( \hat \Sigma_{l} - \tilde \Sigma_{l} \r) \hat W_{l}^\top \hat x_{l-1}}_{I_1} + \underbrace{ \tilde H_{l+1}^{L+1} \tilde \Sigma_{l}\l( \hat W_{l}^\top - \tilde W_{l}^\top \r) \l( \hat x_{l-1} - \tilde x_{l-1} \r)}_{I_2} +  \underbrace{\tilde H_{l+1}^{L+1}\tilde \Sigma_{l} \l( \hat W_{l}^\top - \tilde W_{l}^\top\r)\tilde x_{l-1}}_{I_3}.
\end{align}
For $I_1$, we note that Lemma \ref{lemma:tau.nhood.lipschitz.weights} gives sparsity level $s = O(m\tau^{\f 23})$ for $\hat \Sigma_l - \tilde \Sigma_l$.  We thus proceed similarly as for the term $T_1$ to get 
\begin{align}
\nonumber
|v^\top I_1| &\leq \pnorm{v^\top \tilde \Sigma_{L+1} \tilde W_{L+1}^\top \tilde H_{l+1}^{L} \l| \hat \Sigma_{l} - \tilde \Sigma_{l} \r| \l( \hat \Sigma_{l} - \tilde \Sigma_{l} \r) \hat W_{l}^\top \hat x_{l-1}}2\\
&\leq C \tau^{\f 1 3} \sqrt{m\log m}\cdot \l( C_ 1 \pnorm{\hat W_l - \tilde W_l}2 + C_2 \pnorm{\hat x_{l-1} - \tilde x_{l-1}}2\r).
\nonumber
\end{align}
The above follows since $s \log m \geq C \log ( L/ \delta)$ holds for $s = m\tau^{\f 23}$, and we can hence apply Lemma \ref{lemma:tau.nhood.sparse.rhs} and \eqref{eq:hatsigma.minus.tildesigma.bound}.  The bound for the $I_2$ term again follows by Cauchy--Schwarz,
\begin{align}
\nonumber
|v^\top I_2| &\leq \sqrt{m} \cdot C \cdot \pnorm{\hat W_l - \tilde W_l}2 \pnorm{\hat x_{l-1} - \tilde x_{l-1}}2.
\end{align}
Thus, for the term $T_2$ in \eqref{eq:semismooth.T1T2T3} we have
\begin{align}
\nonumber
T_2 &\leq \theta \summm l 2 L \l( C_6 \tau^{\f 1 3} \sqrt{m \log m} \pnorm{\hat W_l - \tilde W_l}2 + C \tau^{\f 13} \sqrt{m\log m} \pnorm{\hat W_1 - \tilde W_1}2 \r)\\
&\qquad +\theta^2 \summm l 2 L \l( \tau^{\f 13} \sqrt{m\log m} \summm r 2 l \pnorm{\tilde W_r - \hat W_r}2 \r)\nonumber\\
\nonumber
&\qquad+ \theta \summm r 2 L  \sqrt{m} C \pnorm{\hat W_l - \tilde W_l}2 \l( \pnorm{\hat W_1 - \tilde W_1}2 + \theta \summm r l 2 \pnorm{\hat W_r - \tilde W_r}2 \r)\\
&\qquad+ \theta \summm l 2 L v^\top \tilde H_{l+1}^{L+1}\tilde \Sigma_{l} \l( \hat W_{l}^\top - \tilde W_{l}^\top\r)\tilde x_{l-1}.
\label{eq:semismooth.T2.term}
\end{align}
\underline{\textbf{Third term, $T_3$.}}
For $T_3$, we work on the quantity
\begin{align*}
\tilde H_2^{L+1} \l( \hat \Sigma_1 \hat W_1^\top - \tilde \Sigma_1 \tilde W_1^\top\r) x &= \tilde H_2^{L+1} \l( \hat \Sigma_1 - \tilde \Sigma_1 \r) \hat W_1^\top x + \tilde H_2^{L+1} \tilde \Sigma_1 \l( \hat W_1 - \tilde W_1 \r)x.
\end{align*}
Thus, we again have by Lemma \ref{lemma:tau.nhood.sparse.rhs},
\begin{align}
\nonumber
T_3 &\leq \pnorm{v^\top \tilde H_2^{L+1} \l| \hat \Sigma_1 - \tilde \Sigma_1\r|}2\pnorm{\l( \hat \Sigma_1 - \tilde \Sigma_1\r) \hat W_1 x}2 + v^\top \tilde H_2^{L+1} \tilde \Sigma_1 \l( \hat W_1 - \tilde W_1 \r)x\\
&\leq \tau^{\f 13} \sqrt{m \log m} \pnorm{\hat W_1 - \tilde W_1}2 +  v^\top \tilde H_2^{L+1} \tilde \Sigma_1 \l( \hat W_1 - \tilde W_1 \r)x.
\label{eq:semismooth.T3.term}
\end{align}
Using the linearity of the trace operator and that $\tr(ABC) = \tr(CAB) = \tr(BCA)$ for any matrices $A,B,C$ for which those products are defined, we can use the gradient formula \eqref{eq:gradient.formulas} to calculate for any $l\in [L+1]$,
\begin{equation}
\theta^{\ind(2 \leq l \leq L)} v^\top \tilde H_l^{L+1} \tilde \Sigma_l \l( \hat W_l - \tilde W_l\r)^\top \tilde x_{l-1} = \tr\l[ \l( \hat W_l - \tilde W_l \r)^\top \nabla_{W_l} f_{\tilde W}(x) \r].
\label{eq:trace.term.of.semismoothness}
\end{equation}
Let now
\[ h(\hat W, \tilde W) := \pnorm{\hat W_1 - \tilde W_1}2 + \theta \summm l 2 L \pnorm{\hat W_l - \tilde W_l}2 + \pnorm{\hat W_{L+1} - \tilde W_{L+1}}2.\]
Substituting the bounds from \eqref{eq:semismooth.T1.term}, \eqref{eq:semismooth.T2.term}, \eqref{eq:semismooth.T3.term} and \eqref{eq:trace.term.of.semismoothness} thus yield for some constant $\overline{C}$, 
\begin{align}
\nonumber
&f_{\hat W}(x) - f_{\tilde W}(x) \leq C \tau^{\f 13} \sqrt{m \log m} \l[ \pnorm{\hat W_1 - \tilde W_1}2 + \theta C \summm l 2 L \pnorm{\hat W_l - \tilde W_l}2 + C\pnorm{\hat W_{L+1} - \tilde W_{L+1}}2 \r]\\
\nonumber
&\qquad+ C\tau^{\f 13} \sqrt{m \log m}\l[ \pnorm{\hat W_1 - \tilde W_1}2 + C\pnorm{\hat W_{1} - \tilde W_{1}}2 + \theta C \summm l 2 l \pnorm{\hat W_l - \tilde W_l}2 \r]\\
\nonumber
&\qquad+ C \sqrt{m} \Bigg [ \pnorm{\hat W_{L+1} - \tilde W_{L+1}}2 \cdot \pnorm{\hat W_1 - \tilde W_{L+1}}2 + \theta \pnorm{\hat W_{L+1} - \tilde W_{L+1}}2 \summm r 2 L \pnorm{\hat W_r - \tilde W_r}2 \\
\nonumber
&\qquad+ \theta \summm l 2 L \pnorm{\hat W_l - \tilde W_l} 2\pnorm{\hat W_1 - \tilde W_1}2 + \theta \summm l 2 L \pnorm{\hat W_l - \tilde W_l}2 \cdot \l( \theta \summm r 2 l \pnorm{\hat W_r - \tilde W_r}2 \r) \Bigg]\\
\nonumber
&\qquad +\summ l {L+1} \tr \l[ \l( \hat W_l - \tilde W_l \r) \nabla_{W_l} f_{\tilde W}(x)\r]\\
&\leq \overline{C} \tau^{\f 1 3} \sqrt{m \log m} \cdot h(\hat W, \tilde W) +  \overline{C} \sqrt{m}\cdot h(\hat W, \tilde W)^2 + \summ l {L+1} \tr \l[ \l( \hat W_l - \tilde W_l \r) \nabla_{W_l} f_{\tilde W}(x)\r] \label{eq:fw.semismoothness}
\end{align}

This completes the proof of semi-smoothness of $f_W$.  For $L_S$, denote $\hat y_i,\tilde y_i$ as the outputs of the network for input $x_i$ under weights $\hat W, \tilde W$ respectively.  Since $\ell''(z) \leq 0.5$ for all $z\in \R$, if we denote $\Delta_i = \hat y_i - \tilde y_i = f_{\hat W}(x_i) - f_{\tilde W}(x_i)$, we have
\begin{align*}
L_S(\hat W) - L_S(\tilde W) \leq \f 1 n \summ in \l[ \ell'(y_i \tilde y_i) \cdot y_i \cdot \Delta_i + \f 14 \Delta_i^2\r].
\end{align*}
Applying \eqref{eq:fw.semismoothness} and using that $-n^{-1} \summ i n \ell'(z_i) \leq 1$ for any $z_i\in \R$,
\begin{align*}
 \f 1 n \summ i n \ell'(y_i \tilde y_i) y_i \cdot \Delta_i &\leq C \tau^{\f 13} \sqrt{m \log m}\cdot h(\hat W, \tilde W) \cdot \calE_S(\tilde W) + C \sqrt{m} \cdot h(\hat W, \tilde W)^2 \cdot \calE_S(\tilde W)\\
& \qquad+ \summ l {L+1} \f 1 n\summ i n  \ell'(y_i \tilde y_i) \cdot y_i \cdot \tr\l[ \l( \hat W_l - \tilde W_l\r) \nabla_{W_l} f_{\tilde W}(x_i) \r].
\end{align*}
Linearity of the trace operator allows the last term in the above display to be written as 
$$\summ l {L+1} \tr\l[ \l( \hat W_l - \tilde W_l\r) \nabla_{W_l} L_S(\tilde W) \r].$$   
Moreover, using Lemma \ref{lemma:tau.nhood.lipschitz.weights},
\begin{align*}
\Delta_i^2 &= \l[ v^\top (\hat x_{L+1,i} - \tilde x_{L+1,i})\r]^2 \leq \pnorm{v}2^2 \pnorm{\hat x_{L+1,i}-\tilde x_{L+1,i}}2^2 \leq C_2\cdot m \cdot h(\hat W, \tilde W)^2.
\end{align*}
This term dominates the corresponding $h^2$ term coming from $\Delta_i$ and so completes the proof.

\end{proof}

\subsection{Proof of Lemma \ref{lemma:gradient.lower.bound}: gradient lower bound}
This is the part of the proof that makes use of the assumption on the data distribution given in Assumption \ref{assumption:separability}, and is key to the mild overparameterization required for our generalization result.  The key technical lemma needed for the proof of the gradient lower bound is given below.  The proof of Lemma \ref{lemma:gen.lastlayer.lowerbound} can be found in Appendix \ref{appendix:lemma.gen.lastlayer.lowerbound}.
\begin{lemma}
\label{lemma:gen.lastlayer.lowerbound}
Let $a(x,y):S^{d-1} \times \{\pm 1\} \to [0,1]$.  For any $\delta >0$, there is a constant $C>0$ such that if $m\geq C\gamma^{-2} \l( d \log ( 1 /\gamma) + \log (L /\delta)\r)$ and $m\geq C \log (n/\delta)$ then for any such function $a$, we have with probability at least $1-\delta$,
\begin{align*}
\summ j {m_{L+1}} \pnorm{\f 1 n  \summ i n\l[ a(x_i, y_i) \cdot y_i \cdot \sigma'\l( w_{L+1,j}^\top x_{L,i}\r) \cdot x_{L,i} \r]}2^2 &\geq \f 1 {67} m_{L+1} \gamma^2 \l( \f 1 n \summ i n a(x_i,y_i)\r)^2.
\end{align*}
\end{lemma}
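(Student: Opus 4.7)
The plan is to deduce the sought-for Frobenius-norm lower bound from the separability assumption via a pseudo-gradient inner product. Let $f(x) = \int c(u)\,\sigma(u^\top x)\,p(u)\,\mathrm{d}u$ be the margin witness from Assumption \ref{assumption:separability}, with $\|c\|_\infty \leq 1$ and $y_i f(x_i) \geq \gamma$. Abbreviate $V_j := \f 1 n \summ i n a(x_i,y_i)\,y_i\,\sigma'(w_{L+1,j}^\top x_{L,i})\,x_{L,i}$ and $\bar a := \f 1 n \summ i n a(x_i,y_i)$, so the goal reads $\summ j {m_{L+1}} \pnorm{V_j}2^2 \geq \f{1}{67} m_{L+1}\gamma^2\bar a^2$. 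Cauchy--Schwarz yields
\[
\summ j {m_{L+1}} \pnorm{V_j}2^2 \;\geq\; \f{\l(\summ j {m_{L+1}} V_j^\top p_j\r)^2}{\summ j {m_{L+1}} \pnorm{p_j}2^2},
\]
so it suffices to construct vectors $p_j \in \R^{m_L}$ with total squared norm of order $m_{L+1}^{-1}$ and $\summ j {m_{L+1}} V_j^\top p_j \gtrsim \gamma\bar a$ with high probability over the Gaussian initialization.

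The construction of the $p_j$ mimics a random-features realization of $f$ at the last layer. For each $j$, introduce an auxiliary standard Gaussian $u_j \in \R^d$ independent of the activation indicators $\sigma'(w_{L+1,j}^\top x_{L,i})$; one may realize $u_j$ from a component of $w_{L+1,j}$ orthogonal to $\mathrm{span}\{x_{L,i}\}_{i=1}^n$, using that $m_L$ is much larger than $d$. Set $p_j := \f{2 c(u_j)}{m_{L+1}}\,q_j$, where $q_j \in \R^{m_L}$ has norm $O(1)$ and realizes $q_j^\top x_{L,i} \approx \sigma(u_j^\top x_i)$ for every $i \in [n]$; such $q_j$ exist because the first-layer activations embed the random-features dictionary $\{\sigma(u^\top x_i)\}_u$ into $\R^{m_1}$, and the residual scaling $\theta = 1/\Omega(L)$ together with Lemma \ref{lemma:hidden.and.interlayer.activations.bounded} guarantees that this representation is propagated to $x_{L,i}$ with bounded multiplicative distortion. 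Using $\E[\sigma'(w_{L+1,j}^\top x_{L,i})] = \tfrac12$ and the independence of $u_j$ from the activation indicator,
\begin{align*}
\summ j {m_{L+1}} V_j^\top p_j \;&=\; \f 1 n \summ i n a(x_i,y_i)\,y_i \cdot \f{2}{m_{L+1}}\summ j {m_{L+1}} c(u_j)\,\sigma'(w_{L+1,j}^\top x_{L,i})\,q_j^\top x_{L,i}\\
&\;\approx\; \f 1 n \summ i n a(x_i,y_i)\,y_i\,f(x_i) \;\geq\; \gamma\bar a
\end{align*}
by a law-of-large-numbers argument over $j$, while $\summ j {m_{L+1}} \pnorm{p_j}2^2 \leq \f{4}{m_{L+1}^2}\summ j {m_{L+1}}\pnorm{q_j}2^2 = O(m_{L+1}^{-1})$ follows directly from $|c|\leq 1$. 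Plugging back into the Cauchy--Schwarz identity produces the claimed $m_{L+1}\gamma^2\bar a^2$ scaling up to the absolute constant $1/67$.

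The main obstacle is making the random-features step rigorous and uniform. Two concentration arguments are needed: (i) the empirical average $\f 1{m_{L+1}}\summ j {m_{L+1}} c(u_j)\,\sigma'(w_{L+1,j}^\top x_{L,i})\,q_j^\top x_{L,i}$ must concentrate at $\tfrac12 f(x_i)$ uniformly in $i \in [n]$, which by Hoeffding on $m_{L+1}$ bounded independent summands produces the requirement $m \geq C\log(n/\delta)$; and (ii) the approximation $q_j^\top x_{L,i} \approx \sigma(u_j^\top x_i)$ must hold uniformly over $x \in S^{d-1}$, for which an $\eps$-net of size $(C/\gamma)^d$ on the sphere together with a union bound across the $L$ residual blocks gives the width requirement $m \geq C\gamma^{-2}\bigl(d\log(1/\gamma) + \log(L/\delta)\bigr)$; anti-concentration of the Gaussian projections $u_j^\top x$ handles the non-smoothness of ReLU on the net. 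A subtle but essential point is maintaining the decoupling between $w_{L+1,j}$ (governing the indicator) and the manufactured $u_j$ (governing $c(u_j)$ and $q_j$) so that the expectation cleanly factorizes; this is what forces the orthogonal-decomposition construction above and is what ultimately allows the residual structure, rather than the depth, to dictate the scaling of the final bound.
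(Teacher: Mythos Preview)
Your Cauchy--Schwarz framework is sound, and in fact choosing all $p_j$ equal to a single direction $\alpha$ would collapse it to the paper's argument. The gap is in your per-$j$ construction of the probes.

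First, the decoupling step is dimensionally muddled: $w_{L+1,j}\in\R^{m_L}$ has entries of variance $2/m_{L+1}$, not $1$, and extracting a vector $u_j\in\R^d$ from its component orthogonal to $\mathrm{span}\{x_{L,i}\}$ requires an isometry you never specify; even granting one, the result is not a standard Gaussian in $\R^d$ without further rescaling that you would have to track.

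Second, and more seriously, the existence of $q_j\in\R^{m_L}$ with $\|q_j\|_2=O(1)$ and $q_j^\top x_{L,i}\approx\sigma(u_j^\top x_i)$ is unjustified. Up to the $\theta$-small residual perturbation, $x_{L,i}\approx x_{1,i}=\big(\sigma(w_{1,1}^\top x_i),\dots,\sigma(w_{1,m_1}^\top x_i)\big)$, so linear functionals of $x_{L,i}$ give you access only to linear combinations of the \emph{specific} random features $\{\sigma(w_{1,k}^\top x_i)\}_{k\in[m_1]}$, not to $\sigma(u^\top x_i)$ for an arbitrary $u$ manufactured from $w_{L+1,j}$. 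A generic $u_j$ is far from every $w_{1,k}$, and the least-norm $q_j$ solving $q_j^\top x_{L,i}=\sigma(u_j^\top x_i)$ for $i\in[n]$ has $\|q_j\|_2$ of order $\sqrt n$; this inflates $\sum_j\|p_j\|_2^2$ to $O(n/m_{L+1})$ rather than $O(m_{L+1}^{-1})$ and loses a factor of $n$ in the final bound.

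The paper sidesteps both issues by building a \emph{single} unit vector $\alpha\in S^{m_L-1}$ from the first-layer weights themselves, $\alpha\propto\big(c(\sqrt{m_1/2}\,w_{1,1}),\dots,c(\sqrt{m_1/2}\,w_{1,m_1})\big)$, so that $\alpha^\top x_{1}$ is literally the empirical random-features approximation to $f(x)$ and $y\,\alpha^\top x_1\ge\gamma/2$ holds directly. The residual scaling $\theta=1/\Omega(L)$ then propagates this to $y\,\alpha^\top x_L\ge\gamma/4$; this is Proposition~\ref{proposition:gen.linsep}, and it is where the width requirement $m\ge C\gamma^{-2}\big(d\log(1/\gamma)+\log(L/\delta)\big)$ enters. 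With this one $\alpha$, Jensen plus Cauchy--Schwarz give $\sum_j\|V_j\|_2^2\ge m_{L+1}\big\langle\tfrac{1}{m_{L+1}}\sum_jV_j,\alpha\big\rangle^2$; the averaged indicators $\tfrac{1}{m_{L+1}}\sum_j\sigma'(w_{L+1,j}^\top x_{L,i})$ concentrate at $1/2$ by Hoeffding over $j$ (this is where $m\ge C\log(n/\delta)$ enters), and the remaining inner product is $\ge\tfrac{49}{100}\cdot\tfrac{\gamma}{4}\,\bar a$. No per-$j$ decoupling or auxiliary $q_j$ is needed.
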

\begin{proof}[Proof of Lemma \ref{lemma:gradient.lower.bound}]
Let $\tilde y_i := f_{\tilde W}(x_i)$, and define $g_j := \f 1 n \summ i n \l[ \ell'(y_i \tilde y_i) \cdot v_j \cdot y_i \cdot \sigma'(w_{L+1,j}^\top x_{L,i}) \cdot x_{L,i}\r]$ so that
\[ \summ j {m_{L+1}} \pnorm{g_j}2^2 = \summ j {m_{L+1}} \pnorm{\f 1 n \summ i n \l[  \ell'(y_i \tilde y_i) \cdot y_i \cdot \sigma'(w_{L+1,j}^\top x_{L,i}) \cdot x_{L,i}\r] }2^2.\]
Recall that $\calE_S(\tilde W) = -n^{-1}\summ i n \ell'(y_i\tilde y_i)$.  Applying Lemma \ref{lemma:gen.lastlayer.lowerbound} gives
\begin{equation}
\summ j {m_{L+1}} \pnorm{g_j}2^2 \geq \f 1 {67} m_{L+1} \gamma^2[\calE_S(\tilde W)]^2.
\label{eq:gen.sumgjnorm.lb}
\end{equation}
By Lemma \ref{lemma:hidden.and.interlayer.activations.bounded}, for any $j\in [m_{L+1}]$, we have
\begin{equation}
\pnorm{g_j}2 \leq \f 1 n \summ i n \pnorm{\ell'(y_i \tilde y_i) \cdot v_j \cdot y_i \cdot \sigma'(w_{L+1,j}^\top x_{L,i}) \cdot x_{L,i}}2 \leq  1.02 \calE_S(\tilde W).
\label{eq:gen.gj.lb.uniform}
\end{equation}
Define
\[ A := \Big\{j\in [m_{L+1}] : \pnorm{g_j}2^2 \geq \f 1 {2\cdot 67} \gamma^2 \l(\calE_S(\tilde W))\r)^2\Big\}.\]
We can get the following lower bound on $|A|$:
\begin{align*}
|A| \calE_S(\tilde W)^2 &\geq \f 1 {1.02^2} \sum_{j\in A} \pnorm{g_j}2^2\\
&\geq \f 1 {1.05} \l( \f 1 {67} m_{L+1} \gamma^2 [\calE_S(\tilde W)]^2- \f 1 {2\cdot 67} |A^c|\gamma^2[\calE_S(\tilde W)]^2\r) \\
&\geq \f 1 {1.05\cdot 2 \cdot 67} m_{L+1} \gamma^2[\calE_S(\tilde W)]^2.
\end{align*}
The first line follows by \eqref{eq:gen.gj.lb.uniform}, and the second by writing the sum over $[m_{L+1}]$ as a sum over $A$ and $A^c$ and then \eqref{eq:gen.sumgjnorm.lb} and the definition of $A$.  The last line holds since $|A^c|\leq m_{L+1}$, and all of the above allows for the bound
\begin{equation}
|A| \geq \f 1 {141} m_{L+1} \gamma^2.
\label{eq:gen.A.lb}
\end{equation}
Let now $A' = \{ j \in [m_{L+1}] : \sigma'(\tilde w_{L+1,j}^\top \tilde x_{L,i}) \neq \sigma'(w_{L+1,j}^\top x_{L,i}) \}$.  By Lemma \ref{lemma:tau.nhood.lipschitz.weights}, we have
\begin{equation}
|A'| = \pnorm{\tilde \Sigma_{L+1}(x) - \Sigma_{L+1}(x) }0 \leq C_1\tau^{\f 23}m_{L+1}.
\label{eq:gen.A'.ub}
\end{equation}
Since $\tau \leq \nu \gamma^3$, we can make $\nu$ small enough so that $C_1 \tau^{\f 23} < \gamma^2\cdot \l(1/141 - 1/150\r)$.  Thus \eqref{eq:gen.A.lb} and \eqref{eq:gen.A'.ub} imply
\begin{align}
|A\setminus A'| &\geq |A| - |A'| \geq \f 1 {141} m_{L+1}\gamma^2 - C_1 \tau^{\f 23} m_{L+1} \geq \f 1 {150} m_{L+1} \gamma^2.
\label{eq:a.minus.a'.lb}
\end{align}
By definition, $\nabla_{W_{L+1,j}} L_S (\tilde W) = \f 1 n \summ i n \ell'(y_i \tilde y_i)\cdot v_j \cdot y_i \cdot \sigma'(\tilde w_{L+1,j}^\top \tilde x_{L,i} ) \cdot \tilde x_{L,i}$.  
For indices $j\in A\setminus A'$, we can therefore write
\begin{align}
\nonumber
\pnorm{g_j}2 - \pnorm{\nabla_{W_{L+1,j}} L_S(\tilde W)}2 &\leq  \pnorm{ \f 1 n \summ i n \ell '(y_i \tilde y_i) \cdot v_j \cdot y_i \cdot \sigma'(w_{L+1,j}^\top x_{L,i}) \cdot (x_{L,i} - \tilde x_{L,i})} 2\\
\nonumber
&\leq \f 1 n \summ i n \pnorm{\ell '(y_i \tilde y_i) \cdot v_j \cdot y_i \cdot \sigma'(w_{L+1,j}^\top x_{L,i}) \cdot (x_{L,i} - \tilde x_{L,i})} 2\\
\label{eq:gj.lsgradient.ub}
&\leq C_3 \tau \calE_S(\tilde W).
\end{align}
The first inequality follows by the triangle inequality and since indices $j\in A\setminus A'$ satisfy $\sigma'(\tilde w_{L+1,j}^\top \tilde x_{L,i})=\sigma(w_{L+1,j}^\top x_{L,i})$.   The second inequality is an application of Jensen inequality.  The last inequality follows by Lemma \ref{lemma:tau.nhood.lipschitz.weights} and since $v_j, y_i \in \{ \pm 1 \}$.  
Now take $\nu$ small enough so that $C_3 \tau < \l( (2\cdot 67)^{-1/2} - 1/ {16}\r)$.  Then we can use \eqref{eq:gj.lsgradient.ub} together with the definition of $A$ to get for any index $j\in A\setminus A'$,
\begin{align}
\pnorm{\nabla_{W_{L+1,j}} L_S(\tilde W)}2 &\geq  \f 1 {\sqrt{2\cdot 67}}\gamma\calE_S(\tilde W) - C_3 \tau \calE_S(\tilde W) \geq \f 1 {16} \gamma \calE_S(\tilde W).
\label{eq:gradient.last.layer.jth.column.lower.bound}
\end{align}
Thus we can derive the lower bound for the gradient of the loss at the last layer:
\begin{align*}
\pnorm{\nabla_{W_{L+1}} L_S (\tilde W)}F^2 &= \summ j {m_{L+1}}\pnorm{\nabla_{W_{L+1,j}} L_S (\tilde W)}F^2 \\
&\geq  \sum_{j\in A\setminus A'} \pnorm{ \nabla_{W_{L+1,j}} L_S(\tilde W)}2^2\\
&\geq  \f 1 {16^2}|A\setminus A'|  \gamma^2 [\calE_S(\tilde W)]^2\\
&\geq \f 1 {150\cdot 16^2} \gamma^4 m_{L+1} [\calE_S(\tilde W)]^2.
\end{align*}
The first line is by definition, and the second is since the spectral norm is at most the Frobenius norm.  The third line uses \eqref{eq:gradient.last.layer.jth.column.lower.bound}, and the final inequality comes from \eqref{eq:a.minus.a'.lb}.
\end{proof}

\subsection{Proof of Lemma \ref{lemma:gradient.upper.bound}: gradient upper bound}
\begin{proof}
Using the gradient formula \eqref{eq:gradient.formulas} and the $H_l^{l'}$ notation from \eqref{eq:interlayer.activations.defn}, we can write
\begin{equation}
\nabla_{W_l} L_S(\tilde W) = \theta^{\ind(2 \leq l \leq L)} \f 1 n \summ i n \ell'(y_i \tilde y_i) \cdot y_i \cdot \tilde x_{l-1,i} v^\top \tilde H_{l+1}^{L+1} \tilde \Sigma_{l}(x_i), \quad (1 \leq l \leq L+1).
\label{eq:gradient.loss.at.tilde.w}
\end{equation}
Since $\tau \leq 1$, there is a constant $C$ such that w.h.p. $\pnorm{\tilde W_l}2 \leq C$ for all $l$.  Thus, it is easy to see that an analogous version of Lemma \ref{lemma:init.intermediate} can be applied with Lemma \ref{lemma:tau.nhood.lipschitz.weights} to get that with probability at least $1-\delta$, for all $i\in [n]$ and for all $l$, 
\begin{equation}
\pnorm{\tilde x_{l-1,i}}2 \leq C_1\qquad \text{and} \qquad \pnorm{\tilde H_{l+1}^{L+1}}2 \leq C_2.
\label{eq:tilde.interlayer.upper.bounds}
\end{equation}
Therefore, we can bound
\begin{align*}
\pnorm{\nabla_{W_l} L_S(\tilde W)}F &\leq \f 1 n\summ i n \pnorm{\ell'(y_i\tilde y_i) \cdot y_i \cdot \tilde x_{l-1,i} v^\top \tilde H_{l+1}^{L+1} \tilde \Sigma_{l+1} (x_i)}F\\
&= \f 1 n\summ i n \pnorm{\ell'(y_i\tilde y_i) \cdot y_i \cdot \tilde x_{l-1,i} }2\pnorm{v^\top \tilde H_{l+1}^{L+1} \tilde \Sigma_{l+1} (x_i)}2\\
&\leq C_3 \sqrt{m} \calE_S(\tilde W).
\end{align*}
The first line follows by the triangle inequality, and the second since for vectors $a,b$, we have $\pnorm{ab^\top}F = \pnorm{a}2 \pnorm b2$.  The last line is by Cauchy--Schwarz, \eqref{eq:tilde.interlayer.upper.bounds}, and the definition of $\calE_S$, finishing the case $l=1$.  By substituting the definition of the gradient of the loss using the formula \eqref{eq:gradient.loss.at.tilde.w} we may similarly demonstrate the corresponding bounds for $l \geq 2$ with an application of Cauchy--Schwartz.
\end{proof}

\section{Proofs of Technical Lemmas}
In this section we go over the proofs of the technical lemmas that were introduced in Appendix \ref{appendix:key.lemma.proofs}.  In the course of proving these technical lemmas, we will need to introduce a handful of auxiliary lemmas, whose proofs we leave for Appendix \ref{appendix:auxiliary.lemma.proofs}.  Throughout this section, we continue to assume that $\theta = 1 / \Omega(L)$.
\label{appendix:technical.lemma.proofs}
\subsection{Proof of Lemma \ref{lemma:init.intermediate}: intermediate layers are bounded}
\label{appendix:lemma.init.intermediate.proof}
By Lemma \ref{lemma:init.weight.norm}, there is a constant $C_1$ such that with probability at least $1-\delta$, $\pnorm{W_l}2 \leq C_1$ for all $l=a,\dots, b$.  Therefore for each $r\geq 2$, we have
\[ \pnorm{I + \theta \tilde \Sigma_r W_r}2 \leq \pnorm{I}2 + \theta \pnorm{\tilde \Sigma_r}2 \pnorm{W_r}2 \leq  1 + \theta C_1.\]
The submultiplicative property of the spectral norm gives
\begin{align*}
&\pnorm{(I+\theta \tilde \Sigma_{b} W_b^\top) (I + \theta \tilde \Sigma_{b-1} W_{b-1}^\top)\cdot \ldots \cdot  (I + \theta \tilde \Sigma_a W_a^\top)}2 \\
&\leq \proddd r a b \pnorm{I + \theta \tilde \Sigma_r W_r^\top}2 \\
&\leq \l( 1 + \theta C_1\r)^L\\
&\leq \exp\l(C_1 \theta L\r).
\end{align*}
The result follows by the choice of scale $\theta = 1/\Omega(L)$ and taking $\theta$ small.

\subsection{Proof of Lemma \ref{lemma:init.lipschitz.lth.layer}: Lipschitz property with respect to input space at each layer}
\label{appendix:lemma:init.lipschitz.lth.layer}
Before beginning with the proof, we introduce the following claim that will allow us to develop a Lipschitz property with respect to the weights.  This was used in~\citet{cao2019} and~\citet{allenzhu2018}.
\begin{claim}
For arbitrary $u,y\in \R^{m_l}$, let $D(u)$ be the diagonal matrix with diagonal entries $[D(u)]_{j,j} = \ind( u_j \geq 0)$.  Then there exists another diagonal matrix $\check D(u)$ such that $\pnorm{D(u) + \check D(u)}2 \vee \pnorm{\check D(u)}2\leq 1$ and $\sigma(u) - \sigma(y) = \big(D(u) + \check D(u)\big) (u-y)$.
\label{claim:nhood.diagonal.defn}
\end{claim}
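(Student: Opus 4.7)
The plan is to define the diagonal perturbation $\check D(u)$ entrywise by an explicit case analysis on the signs of the coordinates $u_j$ and $y_j$, and then verify the two spectral norm bounds pointwise. Since both $D(u) + \check D(u)$ and $\check D(u)$ are diagonal, their spectral norms coincide with the maximum absolute value of their diagonal entries, so once the pointwise bounds are established the matrix-level bounds are immediate. Likewise, the identity $\sigma(u)-\sigma(y) = (D(u) + \check D(u))(u-y)$ reduces to the scalar identity $\sigma(u_j) - \sigma(y_j) = ([D(u)]_{j,j} + [\check D(u)]_{j,j})(u_j - y_j)$ for each coordinate $j$.

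First, I would split into four cases based on the signs of $u_j$ and $y_j$. If $u_j \geq 0$ and $y_j \geq 0$, then $\sigma(u_j) - \sigma(y_j) = u_j - y_j$ and $[D(u)]_{j,j} = 1$, so the choice $[\check D(u)]_{j,j} = 0$ works. If $u_j < 0$ and $y_j < 0$, then $\sigma(u_j) - \sigma(y_j) = 0$ and $[D(u)]_{j,j} = 0$, so again $[\check D(u)]_{j,j} = 0$ suffices. The only nontrivial cases are the mixed-sign ones: if $u_j \geq 0$ and $y_j < 0$, I would set $[\check D(u)]_{j,j} = y_j/(u_j - y_j)$, which lies in $[-1,0]$ because $|y_j| \leq u_j - y_j$, and one checks that $(1 + y_j/(u_j - y_j))(u_j - y_j) = u_j = \sigma(u_j) - \sigma(y_j)$. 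Symmetrically, if $u_j < 0$ and $y_j \geq 0$, I would set $[\check D(u)]_{j,j} = -y_j/(u_j - y_j) = y_j/(y_j - u_j) \in [0,1]$, which gives $(0 + [\check D(u)]_{j,j})(u_j - y_j) = -y_j = \sigma(u_j) - \sigma(y_j)$.

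Finally, I would verify the two norm bounds. In each case above, $[\check D(u)]_{j,j}$ lies in $[-1,1]$, so $\pnorm{\check D(u)}2 \leq 1$. Moreover, one checks that $[D(u)]_{j,j} + [\check D(u)]_{j,j}$ equals $\sigma(u_j - y_j)/(u_j - y_j)$ (or $0$ if $u_j = y_j$), which always lies in $[0,1]$, yielding $\pnorm{D(u) + \check D(u)}2 \leq 1$. Since there is no obstacle beyond this routine case analysis, the claim follows. The only subtlety is handling the degenerate situation $u_j = y_j$, in which the coordinate contribution to $(u-y)$ is zero and any choice of $[\check D(u)]_{j,j} \in [-[D(u)]_{j,j}, 1 - [D(u)]_{j,j}]$ (for instance $0$) is admissible.
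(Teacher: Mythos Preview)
Your proof is correct and takes essentially the same approach as the paper: the paper simply defines $[\check D(u)]_{j,j} = [D(u)-D(y)]_{j,j}\cdot y_j/(u_j-y_j)$ when $u_j\neq y_j$ and $0$ otherwise, which is exactly your case-by-case construction written compactly, and leaves the verification implicit. One small slip: the quantity $[D(u)]_{j,j}+[\check D(u)]_{j,j}$ equals $(\sigma(u_j)-\sigma(y_j))/(u_j-y_j)$, not $\sigma(u_j-y_j)/(u_j-y_j)$ as you wrote (these differ, e.g., when $u_j,y_j\geq 0$ with $u_j<y_j$), but your case analysis already establishes directly that the sum lies in $[0,1]$, so the conclusion is unaffected.
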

\begin{proof}[Proof of Claim \ref{claim:nhood.diagonal.defn}]
Simply define
\[ [\check D(u)]_{j,j} =\begin{cases}
[D(u) - D(y)] \f{y_j}{u_j-y_j} & u_j \neq y_j,\\
0 & u_j = y_j.\end{cases}\]
\end{proof}
\begin{proof}[Proof of Lemma \ref{lemma:init.lipschitz.lth.layer}]
We note that for any $x,y$, the matrix $|\Sigma_l(x) - \Sigma_l(y)|$ is zero everywhere except possibly the diagonal where it is either zero or one.  Therefore its spectral norm is uniformly bounded by $1$ for all $x,y$.  Using this, Lemma \ref{lemma:init.weight.norm} gives with probability at least $1-\delta/3$, for all $x,x' \in S^{d-1}$,
\begin{align*}
\pnorm{x_1 -  x_1'}2 &= \pnorm{(\Sigma_1(x_1) - \Sigma_1(x_1'))  W_1^\top(x - x')}2\\
&\leq \pnorm{\Sigma_1(x_1)- \Sigma_1(x_1')}2\pnorm{W_1}2 \pnorm{x- x'}2\\
&\leq 1 \cdot C \cdot \pnorm{x-  x'}2.
\end{align*}

For the case $L\geq l\geq 2$, we have residual links to analyze.  Using Claim \ref{claim:nhood.diagonal.defn} we can write
\[ \sigma(W_l^\top x_{l-1}) - \sigma(W_l^\top \hat x_{l-1}) = (\Sigma_l(x) + \check \Sigma_l(x)) W_l^\top(x_{l-1} - \hat x_{l-1})\]
for diagonal matrix $\check \Sigma_l$ satisfying $\pnorm{\check \Sigma_l(x)}2\leq 1$ and $\pnorm{\Sigma_l(x) + \check \Sigma_l(x)}2 \leq 1$. 
By Lemma \ref{lemma:init.intermediate}, we have with probability at least $1-\delta/3$, for all $2\leq l\leq L$ and all $x,x'\in S^{d-1}$,
\begin{align*}
\pnorm{x_l - x_l'}2 &\leq \pnorm{ I + \theta (\Sigma_l(x) + \check \Sigma_l(x)) W_l^\top }2\pnorm{x_{l-1} - x_{l-1}'}2\\
&\leq (1 + \theta C_0) \pnorm{x_{l-1} -  x_{l-1}'}2\\
&\leq \l( 1 + \f{C_0 \theta L}{L}\r)^L \cdot \pnorm{x -  x'}2\\
&\leq C_1 \pnorm{x - x'}2,
\end{align*}
since $\theta L$ is uniformly bounded from above.

The case $l = L+1$ follows as in the case $l=1$ by an application of Lemma \ref{lemma:init.weight.norm}, so that with probability at least $1-\delta/3$, $\pnorm{x_{L+1}'-x_{L+1}}2\leq C_2 \pnorm{x-x'}2$.  Putting the above three claims together, we get a constant $C_3$ such that with probability at least $1-\delta$, $\pnorm{x_l - x_l'}2 \leq C_3 \pnorm{x-x'}2$ for all $x,x'\in \calS^{d-1}$ and for all $l \in [L+1]$.

\end{proof}

\subsection{Proof of Lemma \ref{lemma:tau.nhood.lipschitz.weights}: local Lipschitz property with respect to weights and sparsity bound}
\label{appendix:lemma.tau.nhood.lipschitz.weights}
For this lemma, we need to introduce an auxiliary lemma that allows us to get control over the sparsity levels of the ReLU activation patterns.  Its proof can be found in Appendix \ref{appendix:lemma:init.count.indices.sd-1}.
\begin{lemma}
\label{lemma:init.count.indices.sd-1}
There are absolute constants $C, C'$ such that for any $\delta >0$, if 
$$m \geq C\l( \beta^{-1} \sqrt{d \log \f{1}{\beta \delta}} \vee d \log \f{mL}{\delta}\r),$$
then with probability at least $1-\delta$, the sets
\[ \calS_l(x,\beta) = \{ j\in [m_l]: |w_{l,j}^\top x_{l-1}| \leq \beta \},\, x\in S^{d-1},\, l\in [L+1],\]
satisfy $|\calS_l(\beta)| \leq C' m_l^{\f 32}\beta$ for all $x\in S^{d-1}$ and $l\in [L+1]$.
\end{lemma}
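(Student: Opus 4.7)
The plan is to combine Gaussian anticoncentration of each inner product $w_{l,j}^\top x_{l-1}$ with Hoeffding's inequality over $j \in [m_l]$, union bound over an $\epsilon$-net of $S^{d-1}$ and over layers, and finally lift back to all of $S^{d-1}$ via Lipschitz continuity of $x \mapsto x_{l-1}$.

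For fixed $l$ and fixed $x \in S^{d-1}$, first condition on $W_1, \dots, W_{l-1}$, which determines $x_{l-1}$. On the high-probability event of Lemma \ref{lemma:hidden.and.interlayer.activations.bounded} that $C_1 \leq \pnorm{x_{l-1}}2 \leq C_2$, each $w_{l,j}^\top x_{l-1}$ is a mean-zero Gaussian with variance $2\pnorm{x_{l-1}}2^2/m_l = \Theta(1/m_l)$, so bounding the Gaussian density at zero yields
\[ \P\l(|w_{l,j}^\top x_{l-1}| \leq \beta \,\big|\, W_1,\dots,W_{l-1}\r) \leq C_3\, \beta\sqrt{m_l}. \]
The indicators $\ind(|w_{l,j}^\top x_{l-1}| \leq \beta)$ for $j \in [m_l]$ are conditionally independent and bounded in $[0,1]$, so Hoeffding's inequality applied to their sum gives
\[ \P\l(|\calS_l(x,\beta)| \geq 2 C_3\, m_l^{3/2}\beta \,\big|\, W_1,\dots,W_{l-1}\r) \leq \exp(-c\, m_l^2 \beta^2). \]
It is important here that one uses Hoeffding rather than multiplicative Chernoff: the sub-Gaussian tail $\exp(-c\,m^2\beta^2)$ is exactly what produces the stated $\beta^{-1}\sqrt{d\log(\cdot)}$ dependence, whereas a Chernoff bound of the form $\exp(-c\,m^{3/2}\beta)$ would only give the weaker $\beta^{-2/3}(d\log(\cdot))^{2/3}$.

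Next, I take an $\epsilon$-net $\calN$ of $S^{d-1}$ with $|\calN|\leq (3/\epsilon)^d$ and union bound over $\calN$ and $l \in [L+1]$. This demands $L(3/\epsilon)^d \exp(-c\, m^2 \beta^2) \leq \delta$, i.e., $m \gtrsim \beta^{-1}\sqrt{d\log(1/\epsilon) + \log(L/\delta)}$. To lift the bound from $\calN$ to $S^{d-1}$, for each $x \in S^{d-1}$ pick $\hat x \in \calN$ with $\pnorm{x-\hat x}2 \leq \epsilon$. Lemma \ref{lemma:init.lipschitz.lth.layer} yields $\pnorm{x_{l-1} - \hat x_{l-1}}2 \leq C'\epsilon$, and a uniform bound $\pnorm{w_{l,j}}2 \leq C_0$ for all $j, l$ (via standard chi-squared concentration with a union bound over $[m_l]\times[L+1]$, valid once $m \gtrsim d\log(mL/\delta)$) gives $|w_{l,j}^\top(x_{l-1} - \hat x_{l-1})| \leq C''\epsilon$. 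Hence $\calS_l(x,\beta) \subset \calS_l(\hat x, \beta + C''\epsilon)$, and choosing $\epsilon = \beta/C''$ propagates the net-level bound to $|\calS_l(x,\beta)| \leq 4C_3\, m_l^{3/2}\beta$ uniformly in $x,l$.

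With this choice $\epsilon = \beta/C''$, the net requirement becomes $m \gtrsim \beta^{-1}\sqrt{d\log(1/(\beta\delta)) + \log(L/\delta)}$, matching the first term of the stated condition (the $\log(L/\delta)$ being absorbed into the second term), while Lemmas \ref{lemma:hidden.and.interlayer.activations.bounded} and \ref{lemma:init.lipschitz.lth.layer} together with the chi-squared uniform bound supply the second term $d\log(mL/\delta)$. The main obstacle is calibrating $\epsilon$: choosing it too coarse (say constant) inflates the sparsity bound through $\beta + C''\epsilon$, whereas taking it much smaller than $\beta$ inflates $d\log(1/\epsilon)$ and hence the width requirement; the scaling $\epsilon \asymp \beta$ is exactly what yields the stated $\beta^{-1}\sqrt{d\log(1/(\beta\delta))}$ dependence.
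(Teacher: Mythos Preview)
Your proposal is correct and follows essentially the same approach as the paper: Gaussian anticoncentration plus Hoeffding over the $m_l$ indicators (which the paper packages into a claim citing Lemma~A.8 of \citet{cao2019}), followed by a union bound over a $\Theta(\beta)$-net of $S^{d-1}$ and a Lipschitz lifting via Lemma~\ref{lemma:init.lipschitz.lth.layer} to show $\calS_l(x,\beta)\subset \calS_l(\hat x, 2\beta)$. Your explicit observation that Hoeffding (rather than multiplicative Chernoff) is what produces the $\beta^{-1}\sqrt{d\log(1/(\beta\delta))}$ width requirement is a nice clarification of the mechanism behind the stated condition.
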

\begin{proof}[Proof of Lemma \ref{lemma:tau.nhood.lipschitz.weights}]
We begin with the Lipschitz property, and afterwards will show the sparsity bound.  Consider $l=1$.  Since $\hat x_1 = \sigma\l(\hat W_1^\top  x\r)$ and $\tilde x_1 =\sigma\l(\tilde W_1^\top x\r)$, by Claim \ref{claim:nhood.diagonal.defn}, for every $l$ there is a diagonal matrix $\check \Sigma_l(x)$ with $\pnorm{\check \Sigma_l(x)}2\leq 1$ and $\pnorm{\hat \Sigma_l(x) + \check \Sigma_l(x)}2 \leq 1$ such that 
\begin{align}
\nonumber
\pnorm{\hat x_1 - \tilde x_1}2 &= \pnorm{\l( \hat \Sigma_1(x) + \check \Sigma_1(x)\r) \l( \hat W_1^\top x - \tilde W_1^\top x\r)}2\\
\nonumber
&\leq \pnorm{\hat \Sigma_1(x) + \check \Sigma_1(x)}2 \pnorm{ \hat W_1 -  \tilde W_1}2 \pnorm{x}2\\
&\leq \pnorm{\hat W_1 - \tilde W_1}2.
\label{eq:hatx1-htildex1.bound}
\end{align}
For $l = 2, \dots, L$, we can write
\begin{align*}
\hat x_l - \tilde x_l &= \hat x_{l -1} + \theta \sigma\l ( \hat W_l^\top \hat x_{l-1} \r) - \tilde x_{l-1}  - \theta \sigma\l( \tilde W_l^\top \tilde x_{l-1}\r)\\
&= \l[ I + \theta \l(\hat \Sigma_l(x) + \check \Sigma_l(x) \r)\tilde W_l^\top \r]\l(\hat x_{l-1} - \tilde x_{l-1}\r) + \theta\l [ \hat \Sigma_l(x) + \check \Sigma_l(x)\r]\l(\hat W_l - \tilde W_l\r)^\top \hat x_{l-1}.
\end{align*}
Therefore, we have
\begin{align}
\nonumber
\pnorm{\hat x_l - \tilde x_l}2 &\leq \pnorm{I + \theta (\hat \Sigma_l(x) + \check \Sigma_l(x))\tilde  W_l^\top}2\pnorm{\hat x_{l-1} - \tilde x_{l-1}}2 + \theta\pnorm{ \hat \Sigma_l(x) + \check \Sigma_l(x)}2\pnorm{\hat W_l -\tilde  W_l}2\pnorm{ \hat x_{l-1}}2\\
&\leq \l(1 + C \theta \r) \pnorm{\hat x_{l-1} - \tilde x_{l-1}}2 + \theta \pnorm{\hat W_l -\tilde  W_l}2\pnorm{\hat x_{l-1}}2.\label{eq:hatxl.tildexl.induction}
\end{align}
We notice an easy induction will complete the proof. For the base case $l=2$, notice that $\pnorm{\hat x_1}2 \leq \pnorm{x_1}2 + \pnorm{\hat x_1 - x_1}2 \leq C + \tau \leq C'$, so that \eqref{eq:hatx1-htildex1.bound} and \eqref{eq:hatxl.tildexl.induction} give
\[ \pnorm{\hat x_2 - x_2}2 \leq \l(1 + C\theta\r) \pnorm{\hat W_1 - \tilde W_1}2 + C'\theta \pnorm{\hat W_2 - \tilde W_2}2 \leq C_4 \pnorm{\hat W_1 - \tilde W_1}2 + C_4 \theta \pnorm{\hat W_2 - \tilde W_2}2 .\]
Suppose by induction that there exists a constant $C$ such that $\pnorm{\hat x_{l-1} - x_{l-1}}2 \leq C_5 \pnorm{\hat W_1 - \tilde W_1}2 + C_5 \theta \summ r {l-1} \pnorm{\hat W_r - \tilde W_r}2$.  Then as in the base case, $\pnorm{\hat x_{l-1}}2 \leq C'$, so that \eqref{eq:hatxl.tildexl.induction} gives for all $l=2,\dots, L$,
\begin{align*}
\pnorm{\hat x_l - \tilde x_l}2 &\leq \l( 1 + C \theta\r) C \l[  C_5 \pnorm{\hat W_1 - \tilde W_1}2 + C_5 \theta \summ r {l-1} \pnorm{\hat W_r - \tilde W_r}2\r]  + C'\theta \pnorm{\hat W_l - \tilde W_l}2\\
&\leq C_6 \pnorm{\hat W_1 - \tilde W_1}2 + C_6 \theta \summ r l \pnorm{\hat W_r - \tilde W_r}2.
\end{align*}
Finally, the case $l = L+1$ follows similarly to the case $l\leq L$, as
\begin{align*}
\pnorm{\hat x_{L+1} - \tilde x_{L+1}} 2&= \pnorm{\l( \hat \Sigma_{L+1}(x) + \check \Sigma_{L+1}(x)\r) \l( \hat W_{L+1}^\top \hat x_L - \tilde W_{L+1}^\top\tilde  x_L\r)}2\\
&\leq C\pnorm{\hat W_{L+1} -\tilde W_{L+1}}2 + C'\pnorm{\hat x_L - \tilde x_L}2.
\end{align*}

The bound for the sparsity levels of $\tilde \Sigma_l(x) - \hat \Sigma_l(x)$ follows the same proof as Lemma B.5 in \citet{cao2019} with an application of our Lemma \ref{lemma:init.count.indices.sd-1}.  Sketching this proof, we note that it suffices to prove a bound for $\pnorm{\hat \Sigma_l(x) - \Sigma_l(x)}0$, use the same proof for $\pnorm{\tilde \Sigma_l(x) - \Sigma_l(x)}0$ and then use triangle inequality to get the final result.  We write
\begin{align*}
\pnorm{\hat \Sigma_l(x) - \Sigma_l(x)}0 
&= s_{l}^{(1)}(\beta) + s_{l}^{(2)}(\beta),
\end{align*}
where
\begin{align*}
s_l^{(1)}(\beta) &= | \{ j\in \calS_l(x,\beta): (\hat w_{l,j}^\top \hat x_{l-1} ) \cdot ( w_{l,j}^\top x_{l-1}) < 0 \} |,\\
s_l^{(2)}(\beta) &= | \{ j\in \calS_l^c(x,\beta): (\hat w_{l,j}^\top \hat x_{l-1} ) \cdot ( w_{l,j}^\top x_{l-1}) < 0 \} |,
\end{align*}
which leads to 
\begin{equation}
\pnorm{\hat \Sigma_l(x) - \Sigma_l(x)}0 \leq Cm^{\f 3 2} \beta + C_5\tau^2 \beta^{-2}.
\label{eq:Sigma_ell.sd-1.bound}
\nonumber
\end{equation}
The choice of $\beta = m_l^{-\f 12} \tau^{\f 23}$ completes the proof. 
\end{proof}

\subsection{Proof of Lemma \ref{lemma:tau.nhood.sparse.rhs}: behavior of network output in $\calW(W^{(0)}, \tau)$ when acting on sparse vectors}
\label{appendix:lemma.tau.nhood.sparse.rhs}
This technical lemma will require two auxiliary lemmas before we may begin the proof.  Their proofs are left for Appendix \ref{appendix:lemma:lastlayer.init.sparse.bothsides} and \ref{appendix:lemma:lastlayer.init.sparse.rhs}.
\begin{lemma}
\label{lemma:lastlayer.init.sparse.bothsides}
Consider the function $g_l: \R^{m_l} \times \R^{m_{L+1}}\to \R$ defined by
\[ g_l(a,b) := b^\top W_{L+1}^\top \xi_l a,.\]
where $\xi_{l} \in \R^{m_{L}\times m_l}$, and $l \geq 2$.  Suppose that with probability at least $1-\delta/2$, $\pnorm{\xi_l}2 \leq C$ holds for all $\xi_l$, $l=2,\dots, L$.  If $s \log m = \bigOmega{ C \log (L/\delta)}$, then there is a constant $C_0>0$ such that probability at least $1-\delta$, for all $l$, 
\[ \sup_{\pnorm{a}2=\pnorm{b}2=1,\ \pnorm{a}0,\pnorm{b}0\leq s} |g_l(a,b)| \leq C_0 \sqrt{\f 1 m s \log m}.\]
\end{lemma}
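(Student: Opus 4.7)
The plan is to exploit the independence of $W_{L+1}$ from $\xi_l$ (which in the intended application of the lemma depends only on $W_1,\dots,W_L$), combine a pointwise Gaussian tail bound with an $\eps$-net over the sparse unit sphere, and lift back to the full sup via a bilinear Lipschitz argument. The overall structure mirrors a ``reduce to i.i.d.\ Gaussian linear form, then net + union bound'' proof.

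First, I would condition on the event $\pnorm{\xi_l}{2}\leq C$ for all $l=2,\dots,L$, which by hypothesis holds with probability at least $1-\delta/2$. For any fixed $s$-sparse unit vectors $a\in\R^{m_l}$ and $b\in\R^{m_{L+1}}$, set $u:=\xi_l a\in\R^{m_L}$, so $\pnorm{u}{2}\leq C$ and $g_l(a,b)=b^\top W_{L+1}^\top u$. Since the entries of $W_{L+1}$ are i.i.d.\ $N(0,2/m)$ and independent of $\xi_l$, conditional on $\xi_l$ the quantity $g_l(a,b)$ is a centered Gaussian with variance $(2/m)\pnorm{b}{2}^2\pnorm{u}{2}^2\leq 2C^2/m$. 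A standard Gaussian tail bound then gives
\[
\P(|g_l(a,b)|\geq t)\ \leq\ 2\exp\!\l(-\tfrac{t^2 m}{4C^2}\r).
\]

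Next, I would construct $\eps$-nets of the $s$-sparse unit spheres in $\R^{m_l}$ and $\R^{m_{L+1}}$ by enumerating supports of size $s$ and covering each induced $s$-dimensional unit sphere by $(3/\eps)^s$ points, producing nets $\calN_a,\calN_b$ with combined log-cardinality $O(s\log(m/(s\eps)))$. Choosing $\eps=1/m$ and $t=C_1\sqrt{s\log m/m}$ for a sufficiently large $C_1$ depending on $C$, a union bound over $\calN_a\times\calN_b$ and subsequently over $l\in\{2,\dots,L\}$ yields
\[
\max_{l}\ \max_{(a,b)\in\calN_a\times\calN_b}|g_l(a,b)|\ \leq\ C_1\sqrt{s\log m/m}
\]
with total failure probability at most $2L\exp(-c\,s\log m)$, which by the assumption $s\log m=\Omega(C\log(L/\delta))$ is bounded by $\delta/2$.

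Finally, I would transfer the bound from the nets to all sparse unit vectors. The map $(a,b)\mapsto g_l(a,b)$ is bilinear, and Lemma~\ref{lemma:init.weight.norm} gives $\pnorm{W_{L+1}}{2}\leq C'$ on an event of probability at least $1-\delta$-ish (absorbed by adjusting constants); combined with $\pnorm{\xi_l}{2}\leq C$, this yields the local Lipschitz bound $|g_l(a,b)-g_l(\hat a,\hat b)|\leq 2\eps\,C C'=O(1/m)$ for any $\hat a,\hat b$ in the nets within $\eps=1/m$ of $a,b$. This error is negligible compared with $\sqrt{s\log m/m}$, so the net bound extends to the supremum over sparse unit vectors with a universal constant $C_0$. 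The main obstacle, and the reason the lemma's hypothesis takes the form $s\log m=\Omega(C\log(L/\delta))$, is to calibrate $\eps$ and $t$ so that the combined covering entropy and the union bound over layers are absorbed by the concentration exponent without inflating the rate from $\sqrt{s\log m/m}$ to $\sqrt{s\log(mL/\delta)/m}$; the role of independence between $\xi_l$ and $W_{L+1}$ is what makes the pointwise Gaussian computation sharp in the first place.
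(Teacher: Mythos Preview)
Your proof is correct and follows essentially the same route as the paper: condition on $\xi_l$, use that $g_l(a,b)$ is Gaussian with variance $O(1/m)$ for fixed $(a,b)$, union-bound over $\eps$-nets of the $s$-sparse unit spheres and over $l$, then lift to the full supremum. The only minor difference is in the lifting step: the paper uses $1/4$-nets together with the bilinear ``bootstrap'' $|g_l(a,b)-g_l(\hat a,\hat b)|\leq \tfrac12\sup|g_l|$ (so no extra event for $\pnorm{W_{L+1}}2$ is needed), whereas you take $\eps=1/m$-nets and control the approximation error via a direct Lipschitz bound that additionally invokes $\pnorm{W_{L+1}}2\leq C'$ from Lemma~\ref{lemma:init.weight.norm}; both are valid and give the same rate.
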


\begin{lemma}
\label{lemma:lastlayer.init.sparse.rhs}
Consider the function $g_l : \R^{m_l} \to \R$ defined by
\[ g_l(a) := v^\top \Sigma_{L+1}(x)^\top W_{L+1}^\top \xi_{l}a,\]
where $\xi_{l} \in \R^{m_{L}\times m_l}$ and $l \geq 2$.  Assume that with probability at least $1-\delta$, $\pnorm{\xi_l}2 \leq C_0$ for all $l$.  Then provided $s \log m = \bigOmega{ \log (L/\delta)}$, we have with probability at least $1-\delta$, for all $l$,
\[ \sup_{\pnorm{a}2 =1,\ \pnorm{a}0\leq s} |g_l(a)| \leq C_1 \sqrt{s \log m}.\]
\end{lemma}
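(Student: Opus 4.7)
The plan is to condition on $W_1,\ldots,W_L$, on the high-probability event that $\pnorm{\xi_l}2\leq C_0$ and $\pnorm{W_{L+1}}2\leq C$ (the latter via Lemma~\ref{lemma:init.weight.norm}). Since $W_{L+1}$ is independent of $W_1,\ldots,W_L$, its columns $w_{L+1,j}$ remain i.i.d.\ Gaussian under this conditioning, so I first carry out pointwise sub-Gaussian concentration for $g_l(a)$ at a fixed $a$, and then lift to a uniform bound over $s$-sparse unit vectors via an $\eps$-net, in parallel to the proof of the companion Lemma~\ref{lemma:lastlayer.init.sparse.bothsides}.

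For the pointwise step, fix an $s$-sparse unit vector $a\in\R^{m_l}$ and write $u:=\xi_l a$, so that $\pnorm{u}2\leq C_0$. Then
\[ g_l(a) = \sum_{j=1}^{m_{L+1}} v_j\,\ind\!\bigl(w_{L+1,j}^\top x_L\geq 0\bigr)\,w_{L+1,j}^\top u,\]
and the summands are independent across $j$. Each is bounded in magnitude by $|w_{L+1,j}^\top u|$, which is Gaussian with variance $2\pnorm{u}2^2/m_{L+1}$, so the $j$th summand is sub-Gaussian with parameter $O(\pnorm{u}2/\sqrt{m})$. A short joint-Gaussian computation shows its expectation equals $v_j\cdot\rho\sigma_y/\sqrt{2\pi}$, where $\rho$ and $\sigma_y$ do not depend on $j$ since the $w_{L+1,j}$ are identically distributed; the balanced construction of $v$ (equal numbers of $+1$ and $-1$) then gives $\sum_j v_j=0$ and hence $\E g_l(a)=0$. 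A standard sub-Gaussian tail bound applied after centering yields, for fixed $a$,
\[ \P\!\bigl(|g_l(a)|\geq t\bigr)\leq 2\exp\!\bigl(-c\,t^2/\pnorm{u}2^2\bigr). \]

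To extend to a uniform bound, take an $\eps$-net $\calN$ of the $s$-sparse unit sphere in $\R^{m_l}$ with cardinality $|\calN|\leq \binom{m_l}{s}(3/\eps)^s$, so $\log|\calN|=O(s\log(m/(s\eps)))$; choose $\eps:=m^{-1}$. For any $s$-sparse unit $a$ with nearest net point $\hat a$, linearity of $g_l$ and submultiplicativity of the spectral norm give
\[ |g_l(a)-g_l(\hat a)| \leq \pnorm{v}2\,\pnorm{W_{L+1}}2\,\pnorm{\xi_l}2\,\pnorm{a-\hat a}2 \leq C\sqrt{m}/m = o(\sqrt{s\log m}).\]
Setting $\delta':=\delta/(L|\calN|)$ in the pointwise tail bound and invoking the hypothesis $s\log m=\bigOmega{\log(L/\delta)}$ forces $\log(1/\delta')=O(s\log m)$, so the resulting threshold is $O(\pnorm{u}2\sqrt{s\log m})=O(\sqrt{s\log m})$. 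A union bound over $\calN$ and over $l\in\{2,\ldots,L+1\}$ then establishes the claim.

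The main obstacle is that $\Sigma_{L+1}(x)$ depends on $W_{L+1}$, so $g_l$ is nonlinear in the last-layer weights and one cannot directly invoke the clean linear-Gaussian argument of Lemma~\ref{lemma:lastlayer.init.sparse.bothsides}. This is resolved by two observations: the indicator can only shrink magnitudes, so the sub-Gaussian scale of each summand is inherited from the underlying Gaussian $w_{L+1,j}^\top u$; and the zero-mean property of the sum comes from the combinatorial symmetry $\sum_j v_j=0$ of the fixed read-out vector, not from any conditional independence between $\Sigma_{L+1}(x)$ and $W_{L+1}^\top u$. Together these let us avoid a more delicate joint-distribution argument.
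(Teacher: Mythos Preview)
Your proof is correct and follows essentially the same strategy as the paper: a pointwise sub-Gaussian tail bound for fixed $a$, followed by a union bound over an $\eps$-net of $s$-sparse unit vectors and over $l$. The only execution differences are cosmetic. First, the paper pairs indices $j$ and $j+m_{L+1}/2$ to obtain i.i.d.\ mean-zero summands, whereas you observe that the per-coordinate means are identical and invoke $\sum_j v_j=0$; these are the same symmetry. Second, the paper uses a coarse $1/2$-net and exploits linearity of $g_l$ in $a$ to absorb the approximation error into the supremum (so no spectral bound on $W_{L+1}$ is needed), while you use a finer $1/m$-net together with a crude Lipschitz bound $|g_l(a)-g_l(\hat a)|\leq \pnorm{v}2\pnorm{W_{L+1}}2\pnorm{\xi_l}2\pnorm{a-\hat a}2$ (which additionally requires Lemma~\ref{lemma:init.weight.norm}). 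Both routes give $\log|\calN|=O(s\log m)$ and the same final bound.
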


With these lemmas in place, we can prove Lemma \ref{lemma:tau.nhood.sparse.rhs}.
\begin{proof}[Proof of Lemma \ref{lemma:tau.nhood.sparse.rhs}]
By definition, $g_l(a,x) = v^\top \tilde H_l^{L+1} a$.  First: since $\pnorm{\tilde W_l - W_l}2\leq \tau$, there is an absolute constant $C_2 >0$ such that with high probability, $\pnorm{\tilde W_l}2 \leq C_2$ for all $l$.  Therefore, we have with high probability for all $x\in S^{d-1}$, all $l$, and all $a$ considered,
\begin{align}
\pnorm{\tilde H_l^L }2 &\leq \l[ \proddd r l L \pnorm{I + \theta \tilde \Sigma_r(x) \tilde W_r^\top }2\r] \pnorm{a}2\leq \l( 1 + \theta \cdot 1 \cdot C_2 \r)^L \cdot 1 \leq C_3,
\label{eq:tildehlL.bound}
\end{align}
by our choice of $\theta$.  We proceed by bounding $g_l$ by a sum of four terms:
\begin{align*}
&|g_l(a,x)| \leq a \leq \l|v^\top \l( \tilde \Sigma_{L+1}(x) - \Sigma_{L+1}(x)\r) \tilde W_{L+1}^\top \tilde H_l^L a\r| + \l| v^\top \Sigma_{L+1}(x) \tilde W_{L+1}^\top \tilde H_l^L a\r|\\
&\leq \l|v^\top \l( \tilde \Sigma_{L+1}(x) - \Sigma_{L+1}(x)\r)\l( \tilde W_{L+1}^\top -W_{L+1}^\top\r) \tilde H_l^L a\r| + \l|v^\top \l( \tilde \Sigma_{L+1}(x) - \Sigma_{L+1}(x)\r) W_{L+1}^\top \tilde H_l^L a\r| \\
&\qquad+\l| v^\top \Sigma_{L+1}(x) \l(\tilde W_{L+1}^\top - W_{L+1}^\top\r) \tilde H_l^L a\r| + \l| v^\top \Sigma_{L+1}(x) W_{L+1}^\top \tilde H_l^L a\r|. \\
\end{align*}
For the first term, we can write
\begin{align*}
 &\l|v^\top \l( \tilde \Sigma_{L+1}(x) - \Sigma_{L+1}(x)\r)\l( \tilde W_{L+1}^\top -W_{L+1}^\top\r) \tilde H_l^L\r| \\
 &\qquad\qquad\leq \pnorm{v}2 \pnorm{\l( \tilde \Sigma_{L+1}(x) - \Sigma_{L+1}(x)\r)\l( \tilde W_{L+1}^\top -W_{L+1}^\top\r) H_l^L a}2 \\
&\qquad\qquad\leq C \sqrt{m} \pnorm{ \tilde \Sigma_{L+1}(x) - \Sigma_{L+1}(x)}2 \pnorm{\tilde W_{L+1} - W_{L+1}}2 \pnorm{\tilde H_l^L a}2\\
&\qquad\qquad\leq C' \tau \sqrt{m},
\end{align*}
where we have used Cauchy--Schwarz in the first line, properties of the spectral norm in the second, and \eqref{eq:tildehlL.bound} in the third.  A similar calculation shows
\begin{align*}
\l |v^\top \Sigma_{L+1} \l( \tilde W_{l+1}^\top - W_{L+1}^\top\r) \tilde H_l^L\r| &\leq \pnorm{v}2 \pnorm{\Sigma_{L+1} \l( \tilde W_{L+1}^\top - W_{L+1}^\top \r) \tilde H_l^L}2\\
&\leq C \tau\sqrt m.
\end{align*}
For the second and fourth terms, we use Lemmas \ref{lemma:lastlayer.init.sparse.bothsides} and \ref{lemma:lastlayer.init.sparse.rhs}.  Let $\check b^\top = v^\top \l( \tilde \Sigma_{L+1}(x) - \Sigma_{L+1}(x)\r)$.  Then it is clear that $\pnorm{\check b}0\leq s$ and $\pnorm{\check b}2 \leq \sqrt{m}$ (in fact, $\pnorm{\check b}2\leq \sqrt{s}$, but this doesn't matter since the fourth term dominates the second term).   Thus applying Lemma \ref{lemma:lastlayer.init.sparse.bothsides} to $b = \check b / \pnorm{\check b}2$,
\begin{align*}
|v^\top \l( \tilde \Sigma_{L+1}(x) - \Sigma_{L+1}(x)\r) W_{L+1}^\top \tilde H_l^L a| &\leq C \sqrt{m} \cdot \sqrt{\f s m \log m}\\
&\leq C \sqrt{s \log m}.
\end{align*}
For the fourth term, we can directly apply Lemma \ref{lemma:lastlayer.init.sparse.rhs} to get another term $\propto \sqrt{s \log m}$.  
\end{proof}

\subsection{Proof of Lemma \ref{lemma:gen.lastlayer.lowerbound}}
\label{appendix:lemma.gen.lastlayer.lowerbound}
This lemma is the key to the sublinear dependence on $L$ for the required width for the generalization result.  Essential to its proof is the following proposition which states that there is a linear separability condition at each layer due to Assumption \ref{assumption:separability} with only a logarithmic dependence on the depth $L$.  In fact, we only need linear separability at the second-to-last layer for the proof of Lemma \ref{lemma:gen.lastlayer.lowerbound}.
\begin{proposition}
\label{proposition:gen.linsep}
Suppose $m\geq C \gamma^{-2} \l( d \log \f 1 \gamma + \log \f L \delta\r)$ for some large constant $C$. 
Then there exists $\alpha \in S^{m_L-1}$ such that with probability at least $1-\delta$, for all $l=1,\dots, L$, we have
\[ y \ip{\alpha}{x_l} \geq \gamma/2.\]
\end{proposition}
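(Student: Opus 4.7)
The plan is to construct a single direction $\alpha \in S^{m_L-1}$ from the first-layer random features and use the residual identity to carry the separation to every subsequent layer. Writing $u_j := \sqrt{m_1/2}\, w_{1,j}$, the Gaussian initialization gives $u_j$ i.i.d. $N(0, I_d)$, and I will set
\[
\tilde\alpha_j := \f{c(u_j)}{\sqrt{2m_1}}, \qquad \alpha := \tilde\alpha / \pnorm{\tilde\alpha}{2},
\]
with $c$ the coefficient function from Assumption \ref{assumption:separability}. By positive homogeneity of ReLU, $\sigma(w_{1,j}^\top x) = \sqrt{2/m_1}\,\sigma(u_j^\top x)$, so that $\langle \tilde\alpha, x_1\rangle = m_1^{-1} \sum_j c(u_j)\sigma(u_j^\top x)$, a random-features Monte Carlo estimator for $f(x) = \E_{u\sim N(0,I_d)}[c(u)\sigma(u^\top x)]$. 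The bound $\pnorm{\tilde\alpha}{2}^2 = \sum_j c(u_j)^2/(2m_1) \leq 1/2$ ensures that normalization by $\pnorm{\tilde\alpha}{2}$ only inflates $|y\langle \alpha, x_l\rangle|$ relative to $|y\langle \tilde\alpha, x_l\rangle|$.

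For the layer-one separation, I will concentrate $\langle \tilde\alpha, x_1\rangle$ around $f(x)$ via sub-Gaussian (Hoeffding-type) concentration, using that each $|c(u_j)\sigma(u_j^\top x)| \leq |u_j^\top x|$ is sub-Gaussian since $\pnorm{x}{2}=1$. Combined with a $\gamma$-net of $S^{d-1}$ of size $(3/\gamma)^d$ and the Lipschitz property of $x \mapsto m_1^{-1}\sum_j c(u_j)\sigma(u_j^\top x)$, the width requirement $m_1 \geq C\gamma^{-2}(d\log(1/\gamma) + \log(1/\delta))$ will yield $|\langle \tilde\alpha, x_1\rangle - f(x)| \leq \gamma/4$ uniformly on $S^{d-1}$, and hence $y\langle \tilde\alpha, x_1\rangle \geq 3\gamma/4$ for every $(x,y)\in\supp(\calD)$ via Assumption \ref{assumption:separability}.

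To propagate this separation to $l\geq 2$, I expand using the residual identity:
\[
\langle \tilde\alpha, x_l \rangle - \langle \tilde\alpha, x_1\rangle = \theta \sum_{r=2}^l \langle \tilde\alpha, \sigma(W_r^\top x_{r-1})\rangle.
\]
For each $r\geq 2$, $W_r$ is independent of $\{W_1,\dots,W_{r-1}\}$, hence of both $\tilde\alpha$ and $x_{r-1}$. Conditional on the latter, the entries $\sigma(w_{r,j}^\top x_{r-1})$ are independent positive parts of Gaussians with variance $O(1/m_r)$, so $\langle \tilde\alpha, \sigma(W_r^\top x_{r-1})\rangle$ is sub-Gaussian in $W_r$ with parameter $\pnorm{\tilde\alpha}{2}\cdot O(1/\sqrt{m_r}) = O(1/\sqrt{m_r})$. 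A union bound over $r\in\{2,\dots,L\}$ and a $\gamma$-scale net in $x$, together with $\theta L = O(1)$, will control the total centered fluctuation by $O(\sqrt{\log(L/\delta)/m}) \leq \gamma/8$ under the stated width requirement $m \geq C\log(L/\delta)/\gamma^2$.

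The main technical obstacle is the nonzero conditional mean of $\langle \tilde\alpha, \sigma(W_r^\top x_{r-1})\rangle$: it equals $(\sum_j \tilde\alpha_j)\pnorm{x_{r-1}}{2}/\sqrt{\pi m_r}$, and since $\sum_j \tilde\alpha_j = \sqrt{m_1/2}\,\bar c + O(1)$ with $\bar c = \E[c(u)]$, summing means over $r$ with factor $\theta$ yields a total contribution of order $\bar c$, which is not automatically $O(\gamma)$. I will handle this by replacing $c$ with its centered version $c_0 := c - \bar c$: since $\pnorm{x}{2}=1$ on the support, one has $\int c_0(u)\sigma(u^\top x)p(u)\,\mathrm{d}u = f(x) - \bar c/\sqrt{2\pi}$, so the adjustment induces only an additive absolute constant that can be absorbed into the margin (with the two regimes $|\bar c|/\sqrt{2\pi} \lesssim \gamma$ versus $|\bar c|/\sqrt{2\pi} \gtrsim \gamma$ treated separately; in the latter, $f$ is near-constant and a trivial separator $\alpha \propto \oneb/\sqrt{m_L}$ suffices). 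After this reduction, Hoeffding gives $\sum_j \tilde\alpha_j = O(\sqrt{\log(1/\delta)})$, the residual mean contribution becomes $O(1/\sqrt{m}) \leq \gamma/8$, and combining the pieces yields $y\langle \alpha, x_l\rangle \geq 3\gamma/4 - \gamma/4 = \gamma/2$ uniformly over $l\in [L]$ and $(x,y)\in\supp(\calD)$.
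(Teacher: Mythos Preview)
Your construction of $\alpha$ from the first-layer random features and the residual expansion $\langle\tilde\alpha, x_l\rangle = \langle\tilde\alpha, x_1\rangle + \theta\sum_{r=2}^l \langle\tilde\alpha, \sigma(W_r^\top x_{r-1})\rangle$ match the paper's proof exactly, as does the layer-one concentration step. The divergence is in how the residual increments $\langle\tilde\alpha, \sigma(W_r^\top x_{r-1})\rangle$ are bounded. The paper does not attempt any sign cancellation: it uses $|\tilde\alpha_j|\leq m_1^{-1/2}$ to pass to $\sum_j m_1^{-1/2}|w_{r,j}^\top x_{r-1}|$, shows this is $O(1)$ by Hoeffding plus a net in $x$, and then absorbs the accumulated $\theta L\cdot O(1)$ by taking the constant in $\theta=1/\Omega(L)$ small enough that $\theta L\le c\gamma$. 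This is cruder than your sub-Gaussian argument but sidesteps the mean problem entirely.

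Your attempt to improve this via centering has a genuine gap in the large-$|\bar c|$ case. The assertion that ``$f$ is near-constant'' when $|\bar c|/\sqrt{2\pi}\gtrsim\gamma$ is unjustified: $\bar c=\E_u[c(u)]$ being bounded away from zero says nothing about the variability of $f(x)=\int c(u)\sigma(u^\top x)p(u)\,du$ over $x$. More fatally, the fallback separator $\alpha\propto\oneb/\sqrt{m_L}$ cannot work: since $x_1=\sigma(W_1^\top x)\geq 0$ entrywise and every residual update $x_r=x_{r-1}+\theta\sigma(W_r^\top x_{r-1})$ adds a nonnegative vector, one has $x_l\geq 0$ entrywise for all $l\in[L]$, hence $\langle\oneb,x_l\rangle\geq 0$ always and $y\langle\oneb/\sqrt{m_L},x_l\rangle<0$ whenever $y=-1$. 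Nor can one repair this by shifting $\tilde\alpha$ by a multiple of $\oneb$, since $\langle\oneb,x_1\rangle=\sum_j\sigma(w_{1,j}^\top x)$ is $x$-dependent and of order $\sqrt m$, not a constant. The cleanest fix is to abandon the centering maneuver and revert to the paper's absolute-value bound on each residual increment; your layer-one and expansion steps then go through unchanged, at the cost of requiring $\theta L\le c\gamma$ rather than merely $\theta L=O(1)$.
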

\begin{proof}[Proof of Proposition \ref{proposition:gen.linsep}]
We recall that Assumption \ref{assumption:separability} implies that there exists $c(\overline{\mathbf{u}})$ with $\pnorm{c(u)}\infty \leq 1$ such that $f(x) = \int_{\R^d} c(u) \sigma(u^\top x) p(u) du$ satisfies $y \cdot f(x) \geq \gamma$ for all $(x,y)\in \supp(\calD)$.  Following Lemma C.1 in~\citet{cao2019}, if we define
\[ \alpha := \sqrt{\f {1}{m_1}} \cdot \l( c\l( \sqrt{\f {m_1}2} w_{1,1}\r), \dots, c\l( \sqrt{\f{m_1}2} w_{1,m_1}\r)\r),\]
then $\alpha = \alpha'/\pnorm{\alpha'}2 \in S^{m_1-1}$ satisfies $y\cdot \alpha^\top x_1 \geq \f \gamma 2$ for all $(x,y)\in \supp \calD$.  

We now show that the $l$-th layer activations $x_l$ are linearly separable using $\alpha$.  We can write, for $l=2,\dots, L$,
\begin{align}
\nonumber
\ip{\alpha}{x_l} &= \ip{\alpha}{(I + \theta \Sigma_l(x) W_l^\top) x_{l-1}}\\
\label{eq:linsep.alphalconst.def}
&=\ip{\alpha}{x_1} + \theta \summm {l'}2 l \ip{\alpha}{\Sigma_{l'}(x) W_{l'}^\top x_{l'-1}}.
\end{align}
Since $\ip{\alpha}{\Sigma_l(x) W_l^\top x_{l-1}} = \summ k {m_{l}} \sqrt{\f 1{m_1}} c\l( \sqrt{\f {m_1}2} w_{1,k} \r) \cdot \sigma(w_{l,k}^\top x_{l-1})$ and $\pnorm{c(\cdot)}\infty \leq 1$, we have for every $l \geq 2$,
\begin{align}
 - \summ k {m_l} \sqrt{\f 1 {m_1}} \l|  w_{l,k}^\top x_{l-1} \r|\leq \ip{\alpha}{\Sigma_l(x) W_l^\top x_{l-1}}  \leq  \summ k {m_l} \sqrt{\f 1 {m_1}} \l|  w_{l,k}^\top x_{l-1} \r|.
\label{eq:alpha.lth.layer}
\end{align}
Thus it suffices to find an upper bound for the term on the r.h.s. of \eqref{eq:alpha.lth.layer}.  
Since we have
\begin{align*}
\E\l|w_{l,k}^\top x_{l-1}\r| = \sqrt{\f 2 \pi} \sqrt{\f 2 {m_1}} \pnorm{x_{l-1}}2 \leq C_2 m^{-\f 12},
\end{align*}
we can apply Hoeffding inequality to get absolute constants $C_4,C_5 >0$ such that for fixed $x$ and $l$, we have with probability at least $1-\delta$,
\begin{align*}
 \summ k {m_l} \sqrt{\f 1 {m_1}} \l|w_{l,k}^\top x_{l-1} \r|  &\leq \summ k {m_l} \sqrt{\f{1}{m}} C_2 m^{-\f 12} + C_4 \sqrt{\f 1 m \log \f 1 \delta}\\
 &\leq C_5 + C_4 \sqrt{\f 1 m \log \f 1 \delta}.
\end{align*}
Take a $\f 12$-net $\calN$ of $S^{d-1}$ so that $|\calN| \leq 5^d$ and every $x\in S^{d-1}$ has $\hat x\in \calN$ with $\pnorm{x-\hat x}2\leq \f 12$.  Then, provided $m \geq C d \log \f{L}\delta$, there is a constant $C_6 > 0$ such that we have with probability at least $1-\delta$, for all $\hat x\in \calN$ and all $l \leq L$,
\begin{align*}
\summ k {m_l} \sqrt{\f 1 {m_1}} \l|w_{l,k}^\top \hat x_{l-1} \r|  &\leq C_6.
\end{align*} 
By \eqref{eq:alpha.lth.layer}, this means for all $\hat x \in \calN$ and $l$, $-C_6 \leq \ip{\alpha}{\Sigma_l(\hat x) W_l^\top \hat x_{l-1}} \leq  C_6$.  We can lift this to hold over $S^{d-1}$ by using Lemma \ref{lemma:init.lipschitz.lth.layer}:  for arbitrary $x\in S^{d-1}$ we have
\begin{align*}
\l|\ip{\alpha}{\Sigma_l(x) W_l^\top x_l}\r| &\leq \l|\ip{\alpha}{\Sigma_l(x) W_l^\top (x_l - \hat x_l)}\r| +\l|\ip{\alpha}{\Sigma_l(x) W_l^\top \hat x_l}\r|\\
&\leq \pnorm{\tilde \alpha_{l}}2 \pnorm{\Sigma_l(x)}2 \pnorm{W_l}2 \pnorm{x_l - \hat x_l}2 + C_6\\
&\leq C_7,
\end{align*}
so that with probability at least $1-\delta$, for all $l\leq L$ and all $x\in S^{d-1}$, we have
\[ -C_7 \leq \ip{\alpha}{\Sigma_l(x) W_l^\top \hat x_{l-1}} \leq C_7.\]
Substituting the above into \eqref{eq:linsep.alphalconst.def}, we get
\[ \begin{cases}
\ip{\alpha}{x_l} \geq \ip{\alpha}{x_1} - \theta L C_7,\\
-\ip{\alpha}{x_l} \geq -\ip{\alpha}{x_1} - \theta L C_7.
\end{cases} \]
Considering the cases $y=\pm 1$ we thus get with probability at least $1-\delta$ for all $l$ and $(x,y)\in \supp \calD$,
\[\begin{cases}
y\ip{\alpha}{x_l} \geq y\ip{\alpha}{x_1} - \theta L C_7 \geq \f \gamma 2 - \theta L C_7,&y=1,\\
y\ip{\alpha}{x_l} \geq y\ip{\alpha}{ x_1} - \theta L C_7 \geq \f \gamma 2 - \theta L C_7,&y=-1.
\end{cases}\]
Thus taking $\theta$ small enough so that $\theta L \leq \gamma C_7^{-1}/4$ completes the proof.
\end{proof}

With Proposition \ref{proposition:gen.linsep} in hand, we can prove Lemma \ref{lemma:gen.lastlayer.lowerbound}.
\begin{proof}[Proof of Lemma \ref{lemma:gen.lastlayer.lowerbound}]
By Proposition \ref{proposition:gen.linsep}, there exists $\alpha_L\in S^{m_L-1}$ such that with probability at least $1-\delta$, $y\ip{\alpha_L}{x_L} \geq \gamma/4$ for all $(x,y)\in \supp(\calD)$.  In particular, since $a$ is non-negative, this implies for all $i$,
\begin{equation}
\ip{a(x_i,y_i) \cdot y_i \cdot x_{L,i} }{\alpha_L} = a(x_i,y_i)\cdot y_i \ip{ x_{L,i}}{\alpha_{L}} \geq a(x_i,y_i) y_i \gamma/4.
\label{eq:gen.a(x_i,y_i)}
\end{equation}

Since $\E[\sigma'(w_{L+1,j}^\top x_{L,i}) | x_{L,i}] = \f 1 2$, by Hoeffding inequality, with probability at least $1-\delta/2$, for all $i=1,\dots, n$, we have
\begin{equation}
    \f 1 {m_{L+1}} \summ j {m_{L+1}} \sigma'(w_{L+1,j}^\top x_{L,i}) \geq \f 1 2 - C_1 \sqrt{\f 1 {m_{L+1}} \log(n/\delta)} \geq \f {49}{100}.
    \label{eq:gen.relu.concentrates.at.half}
\end{equation}
Therefore, we can bound
\begin{align*}
&\summ j {m_{L+1}} \pnorm{\f 1 n \summ i n \l[ a(x_i,y_i) \cdot y_i \cdot \sigma'(w_{L+1,j}^\top x_{L,i}) \cdot x_{L,i} \r]}2^2\\
&\geq m_{L+1} \pnorm{\f 1 {m_{L+1}} \summ j {m_{L+1}} \f 1 n \summ i n \l[ a(x_i,y_i) \cdot y_i \cdot \sigma'(w_{L+1,j}^\top x_{L,i}) \cdot x_{L,i} \r]}2^2\\
&= m_{L+1} \pnorm{\f 1 n \summ i n \l[ a(x_i,y_i)\cdot y_i \cdot x_{L,i} \f 1 {m_{L+1}} \summ j {m_{L+1}}   \sigma'(w_{L+1,j}^\top x_{L,i}) \r]}2^2\\
&\geq m_{L+1} \ip{ \f 1 n \summ i n a(x_i, y_i)\cdot y_i \cdot x_{L,i} \cdot \f 1 {m_{L+1}} \summ j {m_{L+1}} \sigma'(w_{L+1,j}^\top x_{L,i})}{\alpha_L}^2\\
&= m_{L+1} \l( \f 1 n \summ i n a(x_i,y_i)\cdot y_i \cdot \f 1 {m_{L+1}} \summ j {m_{L+1}} \sigma'(w_{L+1,j}^\top x_{L,i}) \cdot \ip{x_{L,i}}{\alpha_L}\r)^2\\
&\geq \l(\f {49}{100}\r)^2 m_{L+1} \l( \f 1 n \summ i n a(x_i,y_i)\r)^2 \cdot \f{\gamma^2}{4^2}\\
&\geq \f 1 {67} m_{L+1} \cdot \gamma^2 \l( \f 1 n \summ i n a(x_i,y_i)\r)^2.
\end{align*}
The first inequality above follows by Jensen inequality.  The second inequality follows by Cauchy--Schwarz and since $\pnorm{\alpha_L}2=1$.  The third inequality follows with an application of \eqref{eq:gen.a(x_i,y_i)} and \eqref{eq:gen.relu.concentrates.at.half}, and the final inequality by arithmetic. 
\end{proof}

\section{Proofs of Auxiliary Lemmas}
\label{appendix:auxiliary.lemma.proofs}

\subsection{Proof of Lemma \ref{lemma:init.count.indices.sd-1}}
\label{appendix:lemma:init.count.indices.sd-1}
\begin{proof}
By following a proof similar to that of Lemma A.8 in \citet{cao2019}, one can easily prove the following claim:
\begin{claim}
For $v\in \R^{m_{l-1}}$, $\beta >0$, and $l \in [L+1]$ define
\begin{equation}
\calS_l(v, \beta) := \{ j\in [m_l]: |w_{l,j}^\top v| \leq \beta \}.
\end{equation} 
Suppose that there is an absolute constant $\xi \in (0,1)$ such that for any $\delta >0$ we have with probability at least $1-\delta/2$, $\pnorm{v}2 \geq \xi$ for all $v\in \calV$ for some finite set $\calV\subset \R^{m_{l-1}}$.  Then there exist absolute constants $C,C'>0$ such that if $m\geq C\beta^{-1}\sqrt{ \log( 4 |\calV|/\delta)}$, then with probability at least $1-\delta$, we have $|\calS_l(v, \beta)| \leq C' m_l^{3/2} \beta$ for all $v\in \calV$.
\label{claim:init.count.indices}
\end{claim}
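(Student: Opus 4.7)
The plan is to fix each $v \in \calV$ individually, bound $\E|\calS_l(v,\beta)|$ via a Gaussian anti-concentration estimate, apply Hoeffding's inequality to control the deviation of $|\calS_l(v,\beta)|$ from its mean, and then take a union bound over the finite set $\calV$. First I would condition on the high-probability event $\{\pnorm{v}{2}\geq \xi \text{ for all } v\in \calV\}$ provided by the hypothesis, which costs at most $\delta/2$ in probability.

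Fix $v$ with $\pnorm{v}{2}\geq \xi$. Since $w_{l,j}$ has i.i.d.\ $N(0,2/m_l)$ entries, the scalar $w_{l,j}^\top v$ is distributed as $N(0, 2\pnorm{v}{2}^2/m_l)$, so bounding the Gaussian density by its value at $0$ yields
\[ \P\l(|w_{l,j}^\top v|\leq \beta\r) \;\leq\; 2\beta\cdot \sqrt{\f{m_l}{4\pi \pnorm{v}{2}^2}} \;\leq\; \f{C_1 \beta\sqrt{m_l}}{\xi}.\]
Because the columns $w_{l,1},\dots,w_{l,m_l}$ are independent, summing the Bernoulli indicators $X_j := \ind(|w_{l,j}^\top v|\leq \beta)$ gives $\E|\calS_l(v,\beta)| = \sum_{j=1}^{m_l} \E X_j \leq C_1 m_l^{3/2}\beta/\xi$.

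Since the $X_j$ are independent and take values in $[0,1]$, Hoeffding's inequality gives $\P(|\calS_l(v,\beta)| - \E|\calS_l(v,\beta)| \geq t) \leq \exp(-2t^2/m_l)$ for every $t>0$. Choosing $t = C_1 m_l^{3/2}\beta/\xi$ and union bounding over $v\in \calV$, the overall failure probability is at most $|\calV|\exp(-2C_1^2 m_l^2 \beta^2/\xi^2)$, which is bounded by $\delta/2$ as soon as $m_l\geq C_3 \beta^{-1}\sqrt{\log(4|\calV|/\delta)}$ for an appropriate absolute constant $C_3$ (recall $\xi$ is absolute). Combined with the first step, this yields $|\calS_l(v,\beta)|\leq 2C_1 m_l^{3/2}\beta/\xi =: C' m_l^{3/2}\beta$ uniformly over $v\in \calV$ with probability at least $1-\delta$.

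The only step that requires care is the concentration: the additive form of Hoeffding delivers an exponent proportional to $m_l^2 \beta^2$, which is exactly what is needed to match the stated width requirement $m\geq C\beta^{-1}\sqrt{\log(4|\calV|/\delta)}$. A multiplicative Chernoff bound applied to the binomial-like sum would only yield an exponent on the order of the mean $m_l^{3/2}\beta$ and hence a strictly weaker condition; aside from that choice, the rest of the argument is routine.
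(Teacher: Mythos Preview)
Your proof is correct and follows the natural approach: Gaussian anti-concentration to bound the mean, Hoeffding for the deviation, then a union bound over $\calV$. The paper does not spell out a proof of this claim either; it simply notes that it follows by adapting Lemma~A.8 of \citet{cao2019}, which is precisely the argument you have written. Your remark that the additive Hoeffding bound (rather than a multiplicative Chernoff bound) is needed to recover the exponent $m_l^2\beta^2$ and hence the stated width condition $m\geq C\beta^{-1}\sqrt{\log(4|\calV|/\delta)}$ is a nice observation.
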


By Lemmas \ref{lemma:hidden.and.interlayer.activations.bounded} and \ref{lemma:init.weight.norm}, with probability at least $1-\delta/3$, we have $\pnorm{x_{l-1}}2\geq C$ and $\pnorm{w_{l,j}}2 \leq C_1$ for all $x\in S^{d-1}$, $l \in [L+1]$, and $j\in [m_l]$.  By Lemma \ref{lemma:init.lipschitz.lth.layer}, with probability at least $1-\delta/3$, we have $\pnorm{x_l - x_l'}2\leq C_2 \pnorm{x-x'}2$ for all $x,x'\in S^{d-1}$.   By taking $\calV$ to be the $\beta/(C_1C_2)$-net $\calN(S^{d-1}, \beta/(C_1C_2))$, since $|\calN| \leq (4C_1C_2/\beta)^d$,
the assumption that $m\geq C \beta^{-1} \sqrt{d \log (1/(\beta \delta))}$ allows us to apply Lemma \ref{claim:init.count.indices} to get that with probability at least $1-\delta/3$, we have $|\calS_l(\hat x, 2\beta)|\leq 2 C' m_l^{\f 32} \beta$ for all $l$ and $\hat x\in \calN$.  For arbitrary $x\in S^{d-1}$, there exists $\hat x\in \calN$ with $\pnorm{x-\hat x}2 \leq \beta/(C_1C_2)$.  Thus, we have
\begin{align*}
|w_{l,j}^\top x_{l-1}| &\leq |w_{l,j}^\top \hat x_{l-1}| + |w_{l,j}^\top(x_{l-1} - \hat x_{l-1})|\\
&\leq \beta + \pnorm{w_{l,j}}2 \pnorm{x_{l-1} - \hat x_{l-1}}2\\
&\leq \beta + C_1 \cdot C_2 \pnorm{x - \hat x}2 \\
&\leq 2\beta,
\end{align*}
i.e. $\calS_l(x,\beta) \subset \calS_l(\hat x, 2 \beta)$.  Therefore $|\calS_l(x, \beta)| \leq |\calS_l(\hat x, 2\beta)| \leq 2 C' m_l^{\f 32} \beta$, as desired.
\end{proof}

\subsection{Proof of Lemma \ref{lemma:lastlayer.init.sparse.bothsides}}
\label{appendix:lemma:lastlayer.init.sparse.bothsides}
\begin{proof}
The $j$-th row of $W_{L+1}^\top \xi_l a$ has distribution $w_{L+1,j}^\top \xi_l a \sim N\l(0, \f 2 {m_{L+1}} \pnorm{\xi_l a}2^2\r)$, and hence $g_l(a,b) \sim N\l(0, \f 2 {m_l}\pnorm{\xi_l a}2^2\r)$.
Since $\pnorm{\xi_l}2\leq C_0$ for all $l$ with high probability, it is clear that $\pnorm{\xi_l a}2^2 \leq C_0^2$.  Thus 
applying Hoeffding inequality gives a constant $C_3>0$ such that we have for fixed $a$ and $b$, with probability at least $1-\delta$,
\begin{equation}
|b^\top W_{L+1}^\top \xi_l a| \leq C_3 \sqrt{ \f 1 {m_{L+1}} \log \f 1 \delta}.
\label{eq:btop.wlplus1top}
\end{equation}
Let $\calM_a$ be a fixed subspace of $\R^{m_l}$ with sparsity $s$, and let $\calN_a(\calM, 1/4)$ be a $1/4$-net covering $\calM_a$.  There are $\binom {m_{l}}s$ choices of such $\calM_a$.  Let $\calN_a = \cup_{\calM_a} \calN_a(\calM_a, 1/4)$ be the union of such spaces.  By Lemma 5.2 in  \citet{vershynin}, for $s$ larger than e.g. 15, we have
\[ |\calN_a| \leq \binom {m_l}s 9^s \leq m_{l}^s.\]
Similarly consider subspace $\calM_b \subset \R^{m_{L+1}}$ with sparsity level $s$ and let $\calN_b(\calM_b, 1/4)$ be a $1/4$-net of $\R^{m_{L+1}}$ with sparsity level $s$ and define $\calN_b = \cup_{\calM_b} \calN_b(\calM_b, 1/4)$, so that $|\calN_b| \leq m_{L+1}^s$.
We apply \eqref{eq:btop.wlplus1top} to every $\hat a\in \calN_a$ and $\hat b\in \calN_b$ and use a union bound to get a constant $C_4>0$ such that with probability at least $1-\delta$, for all $\hat a\in \calN_a, \hat b\in \calN_b$, and all $l$,
\begin{align*}
|\hat b^\top W_{L+1}^\top \xi_l \hat a| &\leq C_3 \sqrt{\f 1 {m_{L+1}} \log \f {|\calN_a| \cdot |\calN_b|\cdot L}\delta}\\
&\leq C_3\sqrt{\f 1 {m_{L+1}} \log \f {m_{L+1}^s \cdot m_l^s\cdot L}\delta}\\
&= C_3 \sqrt{ \f 1 {m_{L+1}} \l( s \log (m_{L+1} m_l) + \log \f L \delta\r)}\\
&\leq C_4 \sqrt{ \f s {m_{L+1}} \log m }. &\l(s \log m = \Omega\l( \log \f L \delta\r)\r)\\
\end{align*}
For arbitrary $a\in S^{m_l-1}$ and $b\in S^{m_{L+1}-1}$ with $\pnorm{a}0,\pnorm{b}0\leq s$, there are $\hat a\in \calN_a$ and $\hat b\in \calN_b$ with $\pnorm{a - \hat a}2,\ \pnorm{b - \hat b}2\leq 1/4$.  Note that $g$ is linear in $a$ and $b$.  Triangle inequality gives
\begin{align}
\nonumber
|g_l(a,b)| &\leq |g_l(\hat a, \hat b)| + |g_l(a, b) - g_l(\hat a, \hat b)| \\
&\leq C_3 \sqrt{\f s {m_{L+1}} \log {m_{L+1}}} + |g_l(a, b) - g_l(\hat a,  b)| + |g_l(\hat a, \hat b) - g_l(\hat a, b)| 
\label{eq:gab}
\end{align}
We have for any $\hat a$,
\begin{align}
\nonumber
|g_l(\hat a, \hat b) - g_l(\hat a, b)|&= \pnorm{b-\hat b}2 \l|g_l\l(\hat a, \f{b-\hat b}{\pnorm{b-\hat b}2}\r)\r|\\
\label{eq:ghatab}
&\leq \f 1 4 \sup_{\pnorm{b'}2=\pnorm{a}2=1,\ \pnorm{a}0,\pnorm{b'}0\leq s} \l|g_l\l(a, b'\r)\r|.
\end{align}
Similarly, 
\begin{equation}
|g_l(a,b) - g_l(\hat a, b)| \leq \f 1 4 \sup_{\pnorm{b}2=\pnorm{a}2=1,\ \pnorm{a}0,\pnorm{b}0\leq s} \l|g_l\l(a, b\r)\r|.
\label{eq:gahatb}
\end{equation}
Taking supremum over the left hand side of \eqref{eq:gab} and using the bounds in \eqref{eq:ghatab} and \eqref{eq:gahatb} completes the proof.
\end{proof}

\subsection{Proof of Lemma \ref{lemma:lastlayer.init.sparse.rhs}}
\label{appendix:lemma:lastlayer.init.sparse.rhs}
\begin{proof}
We notice that since $v = (1, \dots, 1, -1, \dots, -1)^\top$,  we can write $g_l(a)$ as a sum of independent random variables in the following form:
\begin{align*}
    g_l(a) 
    &= \sqrt{m_{L+1}} \summ j {m_{L+1}/2} \f 1 {\sqrt{m_{L+1}}}\l[ \sigma(w_{L+1,j}^\top \xi_{l+1}a) - \sigma(w_{L+1,j+m_{L+1}/2}^\top \xi_{l+1} a)\r].
\end{align*}
Since $\pnorm{\xi_{l+1}a}2$ is uniformly bounded by a constant, Hoeffding inequality yields a constant $C_3>0$ such that for fixed $a$, with probability at least $1-\delta$, we have
\[ g_l(a) \leq C_3\sqrt{m} \sqrt{\f 1 {m} \log \f 1 \delta}.\]
Let $\calM$ be a fixed subspace of $\R^{m_l}$ with sparsity $s$, and let $\calN = \cup_\calM \calN(\calM, 1/2)$ be the union of all $1/2$-nets covering each $\calM$ so that $|\calN| \leq m_l^s$.  Using a union bound over all $\hat a\in \calN$ and $l$, we get that with probability at least $1-\delta$, for all $\hat a\in \calN$ and all $l \leq L$,
\begin{align*}
g_l(\hat a) \leq C_3 \sqrt{m} \cdot \sqrt{\f 1 m \log \f{|\calN| \cdot L}{\delta}} \leq C_5 \sqrt{s \log m}.
\end{align*}
For arbitrary $a\in S^{m_l-1}$ satisfying $\pnorm{a}0\leq s$, there is $\hat a\in \calN$ with $\pnorm{a-\hat a}2\leq 1/2$.  Since $g$ is linear,
\begin{align}
|g_l(a)| &\leq |g_l(\hat a)| + |g_l(a - \hat a)| \leq C_5 \sqrt{s \log m} + |g_l(a - \hat a)|.
\label{eq:ga}
\end{align}
For the second term, we have
\begin{align*}
|g_l(a - \hat a)| = \pnorm{a - \hat a}2 \l| g_l\l( \f{a-\hat a}{\pnorm{a - \hat a}2} \r) \r|\leq \f 1 2 \sup_{\pnorm{a}2=1,\ \pnorm{a}0\leq s} |g_l(a)|.
\end{align*}
Substituting this into \eqref{eq:ga} and taking supremums completes the proof.
\end{proof}

\bibliographystyle{ims}
\bibliography{references}

\end{document}